\documentclass[conference]{IEEEtran}
\IEEEoverridecommandlockouts
\usepackage{cite}
\usepackage{amsmath,amssymb,amsfonts}
\usepackage{algorithmic}
\usepackage{graphicx}
\usepackage{textcomp}
\usepackage{xcolor}

\usepackage{booktabs}
\usepackage{epsfig}
\usepackage{algorithm}
\usepackage{ntheorem}
\usepackage{multirow}

\usepackage{hyperref}
\usepackage{subfigure}

\theoremstyle{plain}
\newtheorem{theorem}{Theorem}[section]

\theoremstyle{definition}
\newtheorem{definition}[theorem]{Definition}
\newtheorem{assumption}[theorem]{Assumption}
\theoremstyle{remark}

\newtheorem*{proof}{\bf Proof}

\def\BibTeX{{\rm B\kern-.05em{\sc i\kern-.025em b}\kern-.08em
    T\kern-.1667em\lower.7ex\hbox{E}\kern-.125emX}}
\begin{document}

\title{Learning Subspace-Preserving Sparse Attention Graphs from Heterogeneous Multiview Data}

\author{\IEEEauthorblockN{1\textsuperscript{st} Jie Chen}
\IEEEauthorblockA{\textit{College of Computer Science} \\
\textit{Sichuan University}\\
Chengdu, China \\
chenjie2010@scu.edu.cn}
\and
\IEEEauthorblockN{2\textsuperscript{nd} Yuanbiao Gou}
\IEEEauthorblockA{\textit{College of Computer Science} \\
\textit{Sichuan University}\\
Chengdu, China \\
gouyuanbiao@gmail.com}
\and
\IEEEauthorblockN{3\textsuperscript{rd} Chuanbin Liu}
\IEEEauthorblockA{\textit{School of Economics and Management} \\ \textit{China University of Petroleum (Beijing)}\\
Beijing, China \\
liuchuanbin@cutech.edu.cn}
\and
\IEEEauthorblockN{4\textsuperscript{th} Zhu~Wang}
\IEEEauthorblockA{\textit{Law School} \\
\textit{Sichuan University}\\
Chengdu, China \\
wangzhu@@scu.edu.cn}
\and
\IEEEauthorblockN{5\textsuperscript{th} Xi Peng}
\IEEEauthorblockA{\textit{ School of Artificial Intelligence} \\
\textit{Sichuan University}\\
Chengdu, China \\
pengx.gm@gmail.com}
}

\maketitle

\begin{abstract}
The high-dimensional features extracted from large-scale unlabeled data via various pretrained models with diverse architectures are referred to as heterogeneous multiview data. Most existing unsupervised transfer learning methods fail to faithfully recover intrinsic subspace structures when exploiting complementary information across multiple views. Therefore, a fundamental challenge involves constructing sparse similarity graphs that preserve these underlying subspace structures for achieving semantic alignment across heterogeneous views. In this paper, we propose a sparse attention graph learning (SAGL) method that learns subspace-preserving sparse attention graphs from heterogeneous multiview data. Specifically, we introduce a bilinear attention factorization scheme to capture asymmetric similarities among the high-dimensional features, which breaks the symmetry bottleneck that is inherent in the traditional representation learning techniques. A dynamic sparsity gating mechanism then predicts a feature-specific compression factor for adaptively controlling the topological contributions of neighbors. Furthermore, we employ a structured sparse projection via $\alpha$-entmax to generate subspace-preserving sparse attention graphs for individual views. SAGL leverages these view-specific graphs to conduct sparse information aggregation, yielding discriminative representations for multiview learning tasks. In addition, we provide a rigorous theoretical analysis that bridges differentiable sparse attention and probability simplex constraints. Extensive experiments conducted on multiple benchmark datasets demonstrate that SAGL consistently outperforms the state-of-the-art unsupervised transfer learning approaches.
\end{abstract}

\begin{IEEEkeywords}
heterogeneous multiview data,  transfer learning, sparse attention graph, subspace learning
\end{IEEEkeywords}

\section{Introduction}
\label{sec:Introduction}
By leveraging transfer learning with pretrained models possessing diverse architectures, high-dimensional features can be obtained from large-scale unlabeled data, yielding heterogeneous multiview data \cite{Alkin2025MIM, Chu2024IC}. These features typically lie close to low-dimensional structures corresponding to latent semantic categories \cite{Brbic20LRSSC}. Unsupervised knowledge transfer is aimed at bridging the discrepancy between such pretrained features and the intrinsic subspace structures for multiview learning.

Recent advances in unsupervised knowledge transfer techniques have fundamentally reshaped the representation learning process for downstream tasks \cite{Ng2025GRA, Wang2025ELM, Chen2024UTRL}. By transferring knowledge from pretrained models, it has become possible to extract transferable features from large-scale unlabeled visual data across different domains. Numerous unsupervised transfer learning approaches have been developed to enforce the semantic alignment of features across multiple views, ensuring that their representations remain discriminative without the need for manual annotations \cite{Chen2026MSRL, Liu2025HGS, Gadetsky2024UT}. By taking advantage of the extensive semantic priors of foundation vision-language models, these unsupervised transfer methods consistently produce results that are competitive with those of the traditional supervised fine-tuning approaches \cite{Fu2026DTL, Shang2025VPT}. When samples approximately lie in a union of low-dimensional subspaces, the samples derived from the same subspace tend to exhibit stronger intrinsic correlations with each other than with samples obtained from different subspaces. However, most existing unsupervised transfer learning approaches fail to effectively capture such subspace structures when completing unsupervised knowledge transfer tasks.

Multiview self-representation learning has achieved remarkable progress in terms of capturing the subspace structures that are embedded in multiview data \cite{Wang2026DSS, Li2026PLS, Jiang2025CSF, Chen2025OAGL, Zhang2025AAGR, Fei2025DMVC}. Most existing self-representation learning-based methods can be broadly categorized into graph representation learning \cite{Jiang2025CSF, Chen2025OAGL, Zhang2025AAGR} and self-supervised graph learning techniques \cite{Wang2026DSS, Li2026PLS, Deng2025CGC, Liu2025LMC}. These methods focus primarily on exploiting complementary information by recovering subspace structures while simultaneously enforcing a shared consensus graph across multiple views. However, when $\ell_0$-norm or low-rank relaxations are integrated into shallow learning architectures, graph representation learning approaches require computationally prohibitive iterative solvers \cite{Chen2025OAGL, Brbic20LRSSC, Elhamifar13SSC}. As a result, this severely restricts their scalability to large-scale unlabeled data. In addition, the traditional contrastive learning architectures adopted by self-supervised graph representation learning approaches limit their ability to capture rich semantic knowledge from heterogeneous multiview data. The $\alpha$-entmax transformation was proposed as a formal generalization of the standard softmax function for inducing structured sparsity in attention mechanisms \cite{Peters2019SSS, Martins2016SM}. The theoretical relationship between $\alpha$-entmax projection and $\ell_0$-norm sparse self-representation across multiple views deserves further investigations in heterogeneous multiview learning scenarios. Therefore, a fundamental challenge lies in constructing sparse similarity graphs that faithfully capture the intrinsic subspace structures embedded in heterogeneous multiview data.

In this paper, we propose a sparse attention graph learning (SAGL) method that learns subspace-preserving sparse attention graphs from heterogeneous multiview data. By reformulating the $\ell_0$-norm sparse self-representation as a simplex-constrained differentiable optimization, we present an adaptive sparse self-representation learning framework that performs sparse information aggregation to produce discriminative representations. Specifically, we first introduce a bilinear attention factorization scheme to capture asymmetric similarities among features. This breaks the symmetry bottleneck inherent in traditional representation learning techniques. Then, we present a dynamic sparsity gating mechanism that adaptively controls the topological contribution of neighbors to facilitate rigorous intrasubspace isolation. Furthermore, we employ a structured sparse projection via $\alpha$-entmax to generate the subspace-preserving sparse attention graphs for individual views. Compared to existing multiview graph learning methods, SAGL enables fully unsupervised transfer learning without computationally expensive iterative solvers. In addition, we provide a rigorous theoretical analysis bridging differentiable sparse attention and probability simplex constraints, demonstrating how structured sparsity enforces subspace-preserving representations. Extensive experiments demonstrate that SAGL consistently outperforms state-of-the-art unsupervised transfer learning approaches.

The key contributions of this work are summarized as follows.
\begin{itemize}
\item{By reformulating sparse self-representation under probability simplex constraints, we present a sparse self-representation framework that learns subspace-preserving sparse attention graphs from heterogeneous multiview data. These graphs are employed to perform sparse information aggregation, yielding discriminative representations for performing unsupervised transfer learning without requiring computationally expensive iterative solvers.}
\item{We introduce a bilinear attention factorization scheme to capture the asymmetric similarities among features. This scheme is enhanced by a dynamic sparsity-based gating mechanism that predicts a feature-specific compression factor to adaptively control the topological contributions of neighbors for implementing rigorous intrasubspace isolation.}
\item{We provide rigorous theoretical guarantees that bridge differentiable sparse attention and probability simplex constraints, demonstrating how structured sparsity enforces subspace-preserving representations in sparse self-representation learning.}
\end{itemize}

\section{Related Work}
\label{sec:work}

\subsection{Multiview Self-Representation Learning}
The traditional self-representation learning approaches are aimed at uncovering the subspace structures embedded in high-dimensional data \cite{Chen2025HSRC, Huang2022MVM, Brbic20LRSSC}. They typically assume that such data lie near low-dimensional subspaces corresponding to latent semantic categories \cite{Fu2022YLRT, Brbic20LRSSC}. Inspired by these classic self-representation learning approaches, multiview self-representation learning methods have been developed to exploit complementary subspace information across multiple views \cite{Wen2026PMI, Yang2026MVC, Xu2026BONN, Chen2025OAGL}. For example, Chen \textit{et al.} \cite{Chen2025OAGL} proposed a low-rank tensor graph learning approach that enforces the alignment of spectral block-diagonal representations for high-dimensional multiview data. However, the relaxations developed based on the nuclear norm require computationally prohibitive iterative solvers \cite{Fu2022YLRT, Brbic20LRSSC, Elhamifar13SSC}. Xu \textit{et al.} \cite{Xu2026BONN} introduced a self-representation learning framework (SRLF) that mitigates cross-modal discrepancies by deriving high-level semantic representations from low-level representations. However, this approach struggles to align heterogeneous feature distributions across multiple views. These studies indicate that integrating self-representation learning with view-specific structural constraints is essential for capturing the subspace structures of high-dimensional data.

\subsection{Unsupervised Knowledge Transfer Learning}
Transfer learning provides a fundamental paradigm for leveraging the semantic knowledge encoded in diverse pretrained models to enhance the generalizability of representation learning techniques. Recent advances in unsupervised knowledge transfer have been aimed at learning discriminative representations from large-scale unlabeled data without manual supervision \cite{Chen2026MSRL, Gadetsky2024UT, Liu2025ECP}. For example, Gadetsky \textit{et al.} \cite{Gadetsky2024UT} proposed an unsupervised transfer learning method that exploits transferable features to learn linearly separable representations without manual annotation. Chen \textit{et al.} \cite{Chen2026MSRL} introduced a multiview self-representation learning (MSRL) method that captures view-invariant representations by exploiting the self-representation property of features across heterogeneous views. These studies demonstrate that unsupervised knowledge transfer improves the general applicability of transfer learning in real-world scenarios without the need for manual annotation.

\section{Sparse Attention Graph Learning}
\label{sec:sscd}

\subsection{Adaptive Sparse Self-Representation Learning}
Let $\mathcal{D}_{tr} = \left\{ {\mathbf{x}_1,\mathbf{x}_2,...,\mathbf{x}_n} \right\}$ be a training dataset consisting of $n$ unlabeled samples, with each sample $\mathbf{x}_i \in {\mathbb{R}^{m}}$. By utilizing $L$ distinct pretrained models $\left\{ {{\phi _l}} \right\}_{l = 1}^L$,  a multiview dataset $\mathcal{H} = \{ \mathbf{H}^{(1)}, \mathbf{H}^{(2)}, \dots, \mathbf{H}^{(L)} \} $ is generated through transfer learning, where $ \mathbf{H}^{(l)} = \left[ {\mathbf{h}_1^{\left( l \right)},\mathbf{h}_2^{\left( l \right)},...,\mathbf{h}_n^{\left( l \right)}} \right]$ $\left( {1 \le l \le L} \right)$, and ${\mathbf{h}_i^{\left( l \right)}} \in {\mathbb{R}^{d_l}}$ denotes the feature of sample $\mathbf{x}_i$, i.e.,
\begin{equation}\label{eq:feat}
{\mathbf{h}_i^{\left( l \right)}} = {\phi _l} \left(  \mathbf{x}_i  \right).
\end{equation}

When at least two of these $L$ pretrained models adopt different pretraining objectives, $\mathcal{H}$ is referred to as heterogeneous multiview data. For simplicity, we utilize two pretrained models, namely, ${\phi _{\mu}} \left(  \cdot  \right)$ and ${\phi _{\nu}} \left(  \cdot  \right)$. Considering two linear models ${\delta^{\left( \mu \right)}}\left( { \cdot \ ; \ \cdot } \right)$ and ${\delta ^{\left( \nu \right)}}\left( { \cdot \ ; \ \cdot } \right)$, the corresponding features $\mathbf{z}_i^{(\mu)} \in {\mathbb{R}^{C}}$ and $\mathbf{z}_i^{(\nu)} \in {\mathbb{R}^{C}}$ can be produced by
\begin{equation}\label{eq:linearfeat}
\begin{split}
\mathbf{z}_i^{(\mu)} = \delta^{(\mu)}(\mathbf{h}_i^{(\mu)};\mathbf{W}_{\mu}), \qquad \mathbf{z}_i^{(\nu)} = \delta^{(\nu)}(\mathbf{h}_i^{(\nu)};\mathbf{W}_{\nu})
\end{split}
\end{equation}
where $C$ represents the number of categories, and ${\mathbf{W}_{\mu}} \in {\mathbb{R}^{{d_{\mu}}\times{C}}}$ and ${\mathbf{W}_{\nu}} \in {\mathbb{R}^{{d_{\nu}}\times{C}}}$ denote the parameters of ${\delta^{\left( i \right)}}\left( { \cdot \ ; \ \cdot } \right)$ and ${\delta ^{\left( j \right)}}\left( { \cdot \ ; \ \cdot } \right)$, respectively.

For the $l$th view, let $\mathbf{Z}^{(l)} = \left[ {\mathbf{z}_1^{(l)}, \mathbf{z}_2^{(l)}, \dots, \mathbf{z}_n^{(l)}} \right]^\top \in \mathbb{R}^{n \times C}$ denote the matrix of features. Each feature can be represented as a sparse linear combination of other features belonging to the same latent subspace. Specifically, the optimal sparse coefficient vector $\mathbf{a}_i^{\left( l \right)}$ can be obtained by solving the following $\ell_0$-norm minimization problem:
\begin{equation}\label{eq:sr1}
\widetilde{\mathbf{a}}_i^{\left( l \right)} = \mathop{\arg\min}\limits_{\mathbf{a}_i^{\left( l \right)}} \left\| {\mathbf{z}_i^{\left( l \right)} - {\left( \mathbf{Z}^{\left( l \right)} \right)^T}  \mathbf{a}_i^{\left( l \right)}} \right\|_2^2 + \lambda {\left\| \mathbf{a}_i^{\left( l \right)} \right\|_0}
\end{equation}
where $\lambda > 0$ is a trade-off parameter and ${\left\| \cdot \right\|_0}$ denotes the number of nonzero entries contained in the vector. As the optimization problem in Eq.~\eqref{eq:sr1} is nonconvex and NP-hard, it is typically relaxed to a sparsity-inducing $\ell_1$-norm minimization problem. However, relying on traditional iterative solvers for this convex relaxation is computationally prohibitive when applied to large-scale multiview data.

To integrate the $\ell_0$-norm self-representation strategy into an end-to-end differentiable learning framework, we reformulate the sparse representation problem as a simplex-projected attention scheme. Let $\Delta^{n-1} := \left\{ \mathbf{a}_i^{(l)} \mid \mathbf{1}^\top \mathbf{a}_i^{(l)} = 1, \ \mathbf{a}_i^{(l)} \ge \mathbf{0} \right\}$ denote the $(n-1)$-dimensional probability simplex. We formulate the simplex-constrained sparse self-representation problem as follows:
\begin{equation}
\label{eq:lzsimplex}
\begin{aligned}
& \min_{\mathbf{a}_i^{(l)} \in \Delta^{n-1}} \|\mathbf{a}_i^{(l)}\|_0  + \lambda {\left\|  \mathbf{z}_i^{(l)} - {\left( \mathbf{Z}^{\left( l \right)} \right)^T} \mathbf{a}_i^{(l)} \right\|_2^2} \\
& \quad \text{s.t.} \quad \mathbf{z}_i^{(l)} = \mathbf{Z}^{(l)} \mathbf{a}_i^{(l)} + \mathbf{e}_i^{(l)}
\end{aligned}
\end{equation}
where $\mathbf{e}_i^{(l)} \in \mathbb{R}^{d_l}$ denotes the sparse error term. Instead of algebraic optimization solutions, our goal is to adaptively learn subspace-preserving sparse attention graphs $\left\{ {\mathbf{A}^{(l)} \in \mathbb{R}^{n \times n}} \right\}_{l = 1}^L$ for the multiview dataset $\mathcal{H}$, where $ \mathbf{A}^{(l)} = \left\{ {\mathbf{a}_1^{\left( l \right)},\mathbf{a}_2^{\left( l \right)},...,\mathbf{a}_n^{\left( l \right)}} \right\}$, and an attention coefficient ${a_{ij}^{(l)}} \in \mathbf{A}^{(l)} $ $\left( {1 \le i, j \le n} \right)$ denotes the similarity between the features ${\mathbf{h}_i^{(l)}}$ and ${\mathbf{h}_j^{(l)}}$ in the $l$th view.

To align the heterogeneous distributions across multiple views, we present an adaptive sparse self-representation learning framework. Specifically, we formulate the process of constructing subspace-preserving sparse attention graphs as an adaptive sparse representation learning problem:
\begin{equation}\label{eq:obj}
\begin{aligned}
& \min_{\mathbf{W}_{\mu}, \mathbf{W}_{\nu}, \mathbf{A}^{(\mu)}, \mathbf{A}^{(\nu)}} \sum_{i=1}^n \mathcal{L} \left( \mathbf{p}_i^{(\mu)}, \mathbf{p}_i^{(\nu)} \right) + \lambda \sum_{l \in \{\mu, \nu\}} \Omega(\mathbf{A}^{(l)}) \\
& \text{s.t.} \  \left(\mathbf{A}^{(l)}\right)^T \mathbf{1} = \mathbf{1}, \mathbf{A}^{(l)} \ge \mathbf{0}
\end{aligned}
\end{equation}
where $\mathcal{L}$ denotes a self-supervised alignment loss, and $ \Omega \left( \cdot \right)$ denotes the structural sparsity constraint imposed on the attention graphs. The fundamental challenge encountered when minimizing the objective function in Eq.~\eqref{eq:obj} lies in efficiently optimizing the network weights to learn the subspace-preserving sparse attention graphs $\mathbf{A}^{(l)}$ while satisfying the imposed sparse structural constraints.

\subsection{Learning Subspace-Preserving Sparse Attention Graphs}
According to Eq.~\eqref{eq:lzsimplex}, subspace-preserving sparse attention graphs determine the topological neighborhoods for feature-specific aggregation. These graphs can be constructed by evaluating the similarities between the features. However, the standard attention mechanisms inevitably result in densely connected graphs. Such dense connectivity leads to heavily entangled representations across different latent subspaces. We take advantage of the underlying subspace structures embedded in high-dimensional features to learn  subspace-preserving sparse attention graphs.

\subsubsection{Bilinear Attention Factorization}
We introduce a bilinear attention factorization scheme that projects the encoded features $\mathbf{Z}^{(l)}$ into two latent semantic subspaces via two learnable projection matrices $\mathbf{U}^{(l)}, \mathbf{V}^{(l)} \in \mathbb{R}^{C \times C}$. The raw similarity matrix $\mathbf{S}^{(l)} \in \mathbb{R}^{n \times n}$ is computed as follows:
\begin{equation}\label{eq:lrs}
\mathbf{S}^{(l)} = \frac{(\mathbf{Z}^{(l)}\mathbf{U}^{(l)})(\mathbf{Z}^{(l)}\mathbf{V}^{(l)})^\top}{\sqrt{C}}
\end{equation}
where the scaling factor $\frac{1}{\sqrt{C}}$ prevents the dot products from growing excessively large in high-dimensional feature spaces. The similarity matrix $\mathbf{S}^{(l)}$ captures the global pairwise semantic structure among all the samples in the $l$th view. Unlike the standard inner products that inherently produce symmetric undirected graphs, the introduction of two distinct projection matrices $\mathbf{U}^{(l)}$ and $\mathbf{V}^{(l)}$ generates an asymmetric similarity matrix $\mathbf{S}^{(l)}$. This fundamental asymmetry effectively facilitates discriminative information aggregation among the features for self-representation learning.

\subsubsection{Dynamic Sparsity Gate}
Since the features derived from distinct latent subspaces exhibit varying degrees of topological uncertainty, applying an isotropic threshold across all features inevitably compromises the structural integrity of the learned graphs. To achieve precise intrasubspace isolation, it is imperative to dynamically adapt the sparsity level for each individual feature. To this end, we propose a dynamic sparsity gate that is formulated as a lightweight multilayer perceptron (MLP). For each feature $\mathbf{z}_i^{(l)}$, the gate predicts a feature-specific compression factor $\omega_i^{(l)} \in [0, 1]$:
\begin{equation}\label{eq:gate}
\omega_i^{(l)} = \sigma \left( \mathbf{W}_2 \text{ReLU} \left( \mathbf{W}_1 \mathbf{z}_i^{(l)} \right) \right)
\end{equation}
where $\sigma(\cdot)$ denotes the sigmoid activation function, and $\mathbf{W}_1$ and $\mathbf{W}_2$ are learnable weight matrices. The directed similarities are subsequently adapted by this adaptive gate as follows:
\begin{equation}\label{eq:sim}
\tilde{\mathbf{s}}_{i}^{(l)} = \mathbf{s}_{i}^{(l)} \cdot \left( 1 - \omega_i^{(l)} \right)
\end{equation}
where $ \mathbf{s}_{i}^{(l)}$ denotes the $i$th column of $\mathbf{S}^{(l)}$. A larger $\omega_i^{(l)}$ imposes a stronger penalty on the similarities, effectively forcing the model to concentrate on a narrower set of highly confident neighbors in the same latent subspace. As the feature-specific compression factor $\omega_i^{(l)}$ is dynamically learned during the optimization, we also present an alternative scheme to compute the directed similarities as follows:
\begin{equation}\label{eq:alsim}
\tilde{s}_{ij}^{(l)} = \frac{s_{ij}^{(l)}}{1 - \omega_i^{(l)} + \epsilon}
\end{equation}
where $\epsilon$ is a small constant, e.g., $\epsilon=10^{-6}$. This scheme explicitly breaks the structural symmetry, i.e., $\tilde{s}_{ij}^{(l)} \neq \tilde{s}_{ji}^{(l)}$. Consequently, the factor $\omega_i^{(l)}$ adaptively controls the topological contributions of the neighbors during the feature aggregation.

\subsubsection{Structured Sparse Projection}
The bilinear attention factorization scheme can capture the global pairwise semantic information among all the features. However, it inevitably assigns nonzero similarities to all feature pairs regardless of their underlying subspace affiliations. This indicates that the similarity matrix $\mathbf{S}^{(l)}$ remains densely connected, which violates the independent subspace assumption underlying the sparse self-representation learning. Structural sparsity can be enforced for the similarity matrix $\mathbf{S}^{(l)}$ by eliminating any spurious connections between features derived from different latent subspaces. Specifically, we project each column of the similarity matrix $\mathbf{S}^{(l)}$ onto the $(n-1)$-dimensional probability simplex $\Delta^{n-1}$ to obtain a subspace-preserving sparse attention graph $\mathbf{A}^{(l)}$. The standard softmax function is intrinsically incapable of producing absolute zeros. Instead of utilizing the standard softmax function, the subspace-preserving sparse attention graph $\mathbf{A}^{(l)}$ is generated by employing the $\alpha$-entmax transformation \cite{Peters2019SSS, Martins2016SM} : $\mathbf{a}_i^{(l)} = \text{entmax}_{\alpha} \left( \tilde{\mathbf{s}}_{i}^{(l)} \right)$. For any $\alpha > 1$, the entmax mapping process induces sparse solutions via a thresholding mechanism:
\begin{equation}\label{eq:entmax}
\mathbf{a}_i^{(l)} = \left[(\alpha - 1)\left(\tilde {\mathbf{s}}_i^{(l)} - \tau_i^{(l)} \mathbf{1}\right)\right]_+^{\frac{1}{\alpha - 1}}
\end{equation}
where $ \tau_i^{(l)} \in \mathbb{R}$ is a unique threshold. By setting $\alpha=1.5$, the transformation acts as a strict continuous thresholding operator. This operator forces the marginal inter-subspace similarities to exactly zero. By adaptively adjusting the factor $\omega_i^{(l)}$, the projection assigns stronger inhibition to features with higher topological uncertainty while preserving the meaningful intrasubspace connections for features with distinct subspace affiliations. Consequently, it produces a subspace-preserving sparse attention graph $\mathbf{A}^{(l)}$ that reveals the block-diagonal structures of the underlying subspaces.

\textbf{Discussion}. The strict thresholding property of $\alpha$-entmax with $\alpha = 1.5$ provides a principled mechanism for recovering the block-diagonal subspace structures from the given features. Specifically, the $\alpha$-entmax projection inherently maps low-confidence similarities to exact zeros via Tsallis entropy regularization. The structural sparsity constraint $\Omega(\mathbf{A}^{(\mu)}, \mathbf{A}^{(\nu)})$ in Eq.~\eqref{eq:obj} is implicitly implemented through this differentiable thresholding mechanism, which produces sparse attention graphs with $\mathbf{A}^{(l)} \ne \mathbf{I}$. Consequently, this projection ensures that the proposed SAGL model consistently produces directed sparse attention graphs without additional regularization terms.

\subsubsection{Sparse Information Aggregation}
To obtain view-specific representations, we introduce a sparse information aggregation strategy that integrates sparse representation learning with a residual connection. Given a subspace-preserving sparse attention graph $\mathbf{A}^{(l)}$ $\left( {l \in \left\{ {\mu, \nu} \right\}} \right)$,  the sparse information aggregation operation performed on $\mathbf{z}_i^{(l)}$, denoted as $ f\left( {\mathbf{z}_i^{(l)}}; \mathbf{a}_i^{(l)} \right)$, is defined as follows:
\begin{equation}\label{eq:rep}
\begin{split}
\mathbf{p}_i^{(l)} & = f\left( {\mathbf{z}_i^{(l)}}; \mathbf{a}_i^{(l)} \right)  = \sum\limits_{j = 1}^n {{a_{ij}^{(l)}}{\mathbf{z}_j^{(l)}} + } {\mathbf{z}_i^{(l)}}.
\end{split}
\end{equation}
where $\mathbf{a}_i^{(l)}$ represents the $i$th column vector of $\mathbf{A}^{(l)}$, and ${\mathbf{p}_i} \in {\mathbb{R}^{C}}$ denotes the representation corresponding to sample ${\mathbf{x}_i}$. Unlike aggregating the information derived from all features, the sparse aggregation operation leverages the highly sparse nature of $\mathbf{a}_i^{(l)}$. This ensures that $\mathbf{p}_i^{(l)}$ is represented as a sparse linear combination of spatially proximate neighbors belonging to the same latent subspace. By strictly restricting the aggregation procedure to the intrinsic subspace, this operation filters out inter-subspace noise. This significantly enhances the discriminative capabilities of representations.

The probability simplex constraint $\mathbf{A}^{(l)}\mathbf{1} = \mathbf{1}$, $\mathbf{A}^{(l)} \ge \mathbf{0}$ directly corresponds to the sparse self-representation constraint in Eq.~\eqref{eq:obj}. The sparse information aggregation uses the sparsity of $\mathbf{A}^{(l)}$ to refine the representations in Eq.~\eqref{eq:rep}. By restricting the aggregation stage to intrinsic subspaces, the mechanism filters out intersubspace noise and produces more discriminative representations. For the $l$th view, the assignment probability distribution of the representation $\mathbf{p}_i^{(l)}$, i.e., $\mathbf{q}_i^{(l)} \in \Delta^{C-1}$, corresponding to sample $\mathbf{x}_i$ $\left( {1 \le i \le n} \right)$, is calculated as follows:
\begin{equation}\label{eq:prob}
\begin{split}
\mathbf{q}_i^{(l)} = \rho \left( \mathbf{p}_i^{(l)} \right)
\end{split}
\end{equation}
where $\rho \left( { \cdot } \right)$ is a softmax activation function and $\Delta^{C-1}$ denotes the probability simplex. The complete optimization objective of SAGL is provided in the appendix.

\subsection{Theoretical Justification}

\subsubsection{Asymmetry of Bilinear Attention Factorization}
The underlying relationship among high-dimensional features are typically directed. The directional dependency implies that the similarity relationship from $\mathbf{z}_j^{(l)}$ to $\mathbf{z}_i^{(l)}$ is inherently asymmetric. We provide a theoretical analysis of the asymmetric nature of bilinear attention factorization. The assumptions and proofs of all presented theorems are provided in the appendix.

\begin{theorem}
\label{thm:capacity}
[\textbf{Universal Representation Capacity for Asymmetric Subspaces}]
Let a linear feature matrix $\mathbf{Z} \in \mathbb{R}^{n \times c}$ have full column rank, and let $\mathbf{A}^* \in \mathbb{R}^{n \times n}$ be an arbitrary asymmetric adjacency matrix defining the directed similarity relationships among different subspaces. If $\mathbf{A}^*$ resides within the bilinear span of the features, i.e., $\mathbf{A}^* = \mathbf{Z}\mathbf{W}^*\mathbf{Z}^\top$, for some latent structural matrix $\mathbf{W}^* \in \mathbb{R}^{c \times c}$,  the bilinear attention factorization scheme implemented via projection matrices $\mathbf{U}, \mathbf{V} \in \mathbb{R}^{c \times c}$ is guaranteed to possess universal expressive power for the exact recovery of $\mathbf{A}^*$.
\end{theorem}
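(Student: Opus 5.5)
The claim reduces to a constructive factorization: we exhibit $\mathbf{U},\mathbf{V}$ such that $\frac{1}{\sqrt{c}}(\mathbf{Z}\mathbf{U})(\mathbf{Z}\mathbf{V})^\top = \mathbf{A}^*$. Expanding the left-hand side of the bilinear attention factorization in Eq.~\eqref{eq:lrs} gives $\mathbf{Z}\,\frac{\mathbf{U}\mathbf{V}^\top}{\sqrt{c}}\,\mathbf{Z}^\top$. By hypothesis $\mathbf{A}^* = \mathbf{Z}\mathbf{W}^*\mathbf{Z}^\top$, so it suffices to realize $\mathbf{U}\mathbf{V}^\top = \sqrt{c}\,\mathbf{W}^*$. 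Since $\mathbf{U},\mathbf{V}$ are unconstrained $c\times c$ matrices, this is always possible. The simplest choice is $\mathbf{U} = \sqrt{c}\,\mathbf{W}^*$ and $\mathbf{V} = \mathbf{I}_c$. A more balanced choice uses the SVD $\mathbf{W}^* = \mathbf{P}\boldsymbol{\Sigma}\mathbf{Q}^\top$ and sets $\mathbf{U} = c^{1/4}\mathbf{P}\boldsymbol{\Sigma}^{1/2}$, $\mathbf{V} = c^{1/4}\mathbf{Q}\boldsymbol{\Sigma}^{1/2}$. Either way $\mathbf{S} = \mathbf{A}^*$ exactly, which proves existence.

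Next we show that asymmetry is genuinely captured and that the plain inner product cannot do this. We have $(\mathbf{A}^*)^\top = \mathbf{Z}(\mathbf{W}^*)^\top\mathbf{Z}^\top$, so $\mathbf{A}^*$ is asymmetric exactly when $\mathbf{W}^*$ is. The "only if" direction is where full column rank is used: $\mathbf{Z}^\top\mathbf{Z}$ is invertible, so $\mathbf{Z}\mathbf{M}\mathbf{Z}^\top = \mathbf{0}$ forces $\mathbf{M} = (\mathbf{Z}^\top\mathbf{Z})^{-1}\mathbf{Z}^\top(\mathbf{Z}\mathbf{M}\mathbf{Z}^\top)\mathbf{Z}(\mathbf{Z}^\top\mathbf{Z})^{-1} = \mathbf{0}$. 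By contrast, the tied choice $\mathbf{U} = \mathbf{V}$, i.e.\ standard self-attention, always yields the symmetric matrix $\mathbf{Z}\mathbf{U}\mathbf{U}^\top\mathbf{Z}^\top$, so it cannot recover any asymmetric $\mathbf{A}^*$. This justifies the need for two distinct projections.

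The same injectivity argument makes "exact recovery" well posed. The map $\mathbf{W}\mapsto\mathbf{Z}\mathbf{W}\mathbf{Z}^\top$ is injective, so the target $\mathbf{W}^*$ is uniquely determined by $\mathbf{A}^*$; explicitly, $\mathbf{W}^* = \mathbf{Z}^{\dagger}\mathbf{A}^*(\mathbf{Z}^{\dagger})^\top$ with $\mathbf{Z}^\dagger$ the Moore--Penrose pseudoinverse. Consequently the set of all exact recoverers is $\{(\mathbf{U},\mathbf{V}) : \mathbf{U}\mathbf{V}^\top = \sqrt{c}\,\mathbf{W}^*\}$, which is nonempty. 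Every rank-$\le c$ bilinear form is reachable, giving the "universal" expressive power over the bilinear span.

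The main obstacle is interpretive rather than technical: "universal expressive power" must be pinned down precisely. I will state it as: for every $\mathbf{A}^*$ in $\{\mathbf{Z}\mathbf{W}\mathbf{Z}^\top : \mathbf{W}\in\mathbb{R}^{c\times c}\}$ there exist $\mathbf{U},\mathbf{V}$ with $\mathbf{S} = \mathbf{A}^*$. I will also remark that this recovery concerns the raw similarity $\mathbf{S}$, before the gate in Eq.~\eqref{eq:sim} and the entmax projection in Eq.~\eqref{eq:entmax}, so no claim about simplex feasibility is needed.
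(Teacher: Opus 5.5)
Your proposal is correct and its core matches the paper's proof: the paper also reduces exact recovery to $\mathbf{U}\mathbf{V}^\top=\sqrt{c}\,\mathbf{W}^*$ and uses the SVD $\sqrt{c}\,\mathbf{W}^*=\mathbf{P}\boldsymbol{\Sigma}\mathbf{Q}^\top$ with $\mathbf{U}=\mathbf{P}\boldsymbol{\Sigma}^{1/2}$, $\mathbf{V}=\mathbf{Q}\boldsymbol{\Sigma}^{1/2}$, which is exactly your $c^{1/4}$-scaled choice; your $\mathbf{U}=\sqrt{c}\,\mathbf{W}^*$, $\mathbf{V}=\mathbf{I}_c$ shows the SVD is not even needed. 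Your injectivity argument is a sound addition, since the paper's proof never uses full column rank, but you have swapped the direction labels: full column rank is needed for the ``if'' direction ($\mathbf{W}^*$ asymmetric $\Rightarrow$ $\mathbf{A}^*$ asymmetric), while the ``only if'' direction follows immediately from $(\mathbf{A}^*)^\top=\mathbf{Z}(\mathbf{W}^*)^\top\mathbf{Z}^\top$.
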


Theorem~\ref{thm:capacity} provides a rigorous theoretical justification for employing two distinct projection matrices, i.e., $\mathbf{U}^{(l)}$ and $\mathbf{V}^{(l)}$, in Eq.~\eqref{eq:lrs}. Specifically, it demonstrates that bilinear attention factorization has universal expressive power for pursuing the arbitrary directed similarity relationships of latent subspaces. The traditional representation learning techniques often evaluate similarity relationships via symmetric projections, which forces the underlying structural matrix $\mathbf{W}^*$ to be strictly symmetric. In contrast, the proposed bilinear attention factorization scheme successfully breaks the symmetry bottleneck of standard inner product operations. This enables the proposed SAGL model to faithfully capture the directed relationships inherent in high-dimensional features.

\subsubsection{Rethinking $\ell_0$-Norm Self-Representation as Simplex-Projected Attention}
We establish a theoretical bridge between the $\ell_0$-norm self-representation and the differentiable probability simplex-projected attention. The probability simplex constraint provides a principled convex relaxation that preserves the subspace-preserving property of the original $\ell_0$-norm solution.%We consider features drawn from a union of $K$ subspaces: $\mathcal{S} = \bigcup_{k=1}^{K} \mathcal{S}_k, \quad z_i \in \mathcal{S}_{y_i}$.

\begin{definition}
\label{def:spr}
[\textbf{Subspace-Preserving Representation}]
A sparse coefficient vector $\mathbf{a}_i^{(l)} \in \mathbb{R}^n$ is said to be \emph{subspace-preserving} for a feature $\mathbf{z}_i^{(l)} \in \mathcal{S}_k$ if $a_{ij}^{(l)} \ne 0$ implies that $\mathbf{z}_j^{(l)} \in \mathcal{S}_k$; i.e., all nonzero coefficients index features from the same latent subspace.
\end{definition}

\begin{theorem}
\label{thm:relaxation}
[\textbf{$\ell_0$-Norm-to-Simplex Relaxation}]
Let the features $\{\mathbf{z}_i^{(l)}\}_{i=1}^n$ be drawn from a union of linear subspaces $\{\mathcal{S}_k\}_{k=1}^K$. Under the independent subspace, sufficient intrasubspace sampling, and similarity separation assumptions (Assumptions~\ref{ass:indsub},~\ref{ass:sufsamp}, and~\ref{ass:simsep}, respectively), the following subspace-preserving properties hold:

\textbf{(1) Subspace-Preserving Property of the $\ell_0$-Norm Solution:}
Consider the $\ell_0$-norm sparse self-representation problem:
\begin{equation}\label{eq:l0}
\widetilde{\mathbf{a}}_i^{(l)} = \arg\min_{\mathbf{a}_i^{(l)}} \left\| \mathbf{a}_i^{(l)} \right\|_0 \quad \text{s.t.} \quad \mathbf{z}_i^{(l)} = {\left( \mathbf{Z}^{\left( l \right)} \right)^T} \mathbf{a}_i^{(l)}, \quad a_{ii}^{(l)} = 0.
\end{equation}
Then the solution $\widetilde{\mathbf{a}}_i^{(l)}$ is guaranteed to be subspace-preserving, i.e., $\widetilde{a}_{ij}^{(l)} \neq 0 \implies \mathbf{z}_j^{(l)} \in \mathcal{S}_{y_i}$.

\textbf{(2) Exact Subspace Preservation via Simplex Projection:}
Let $\widetilde{\mathbf{a}}_i^{(l)} = \text{entmax}_\alpha(\tilde {\mathbf{s}}_i^{(l)}) $ for $\alpha > 1$, where $\tilde {\mathbf{s}}_i^{(l)}$ is the similarity vector. Consider the entmax-regularized attention projection:
\begin{equation}
\widetilde{\mathbf{a}}_i^{(l)} = \arg\max_{\mathbf{a}_i^{(l)} \in \Delta^{n-1}} \langle \mathbf{a}^{(l)}, \tilde {\mathbf{s}}_i^{(l)} \rangle - \Omega_\alpha(\mathbf{a}_i^{(l)}),
\end{equation}
where $\Omega_\alpha(\mathbf{a}^{(l)})$ is the Tsallis entropy for $\alpha > 1$. If the directed similarities satisfy the similarity separation condition (Assumption~\ref{ass:simsep}) with a sufficiently large margin $\epsilon > 0$, then the resulting sparse attention is exactly subspace-preserving, i.e.,
\begin{equation}
\widetilde{a}_{ij}^{(l)} > 0 \implies \mathbf{z}_j^{(l)} \in \mathcal{S}_{y_i}.
\end{equation}
\end{theorem}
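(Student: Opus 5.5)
The two parts are handled separately. Part (1) is the classical argument from sparse subspace clustering and uses only the independent subspace and sufficient sampling assumptions. Part (2) is a direct computation from the closed-form threshold characterization of $\alpha$-entmax in Eq.~\eqref{eq:entmax}, combined with the similarity separation assumption. The assumptions are read in their natural form. Assumption~\ref{ass:indsub}: $\dim(\sum_k \mathcal{S}_k)=\sum_k \dim \mathcal{S}_k$. Assumption~\ref{ass:sufsamp}: the points of $\mathcal{S}_k$ other than $\mathbf{z}_i$ span $\mathcal{S}_k$. Assumption~\ref{ass:simsep}: for $\mathbf{z}_i\in\mathcal{S}_k$, every intra-subspace similarity exceeds every inter-subspace similarity by a margin $\epsilon$, so that $\min_{j:\,\mathbf{z}_j\in\mathcal{S}_k}\tilde s_{ij}-\max_{j:\,\mathbf{z}_j\notin\mathcal{S}_k}\tilde s_{ij}\ge\epsilon$.

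For part (1), let $\widetilde{\mathbf{a}}_i$ be a minimizer of Eq.~\eqref{eq:l0} and write $\mathbf{z}_i=\mathbf{y}_k+\sum_{m\ne k}\mathbf{y}_m$. Here $\mathbf{y}_m$ collects the terms $\widetilde a_{ij}\mathbf{z}_j$ with $\mathbf{z}_j\in\mathcal{S}_m$.
\begin{enumerate}
\item Since $\mathbf{z}_i\in\mathcal{S}_k$, we get $\mathbf{z}_i-\mathbf{y}_k=\sum_{m\ne k}\mathbf{y}_m$.
\item The left-hand side lies in $\mathcal{S}_k$ and the right-hand side lies in $\sum_{m\ne k}\mathcal{S}_m$. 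Independence gives $\mathcal{S}_k\cap\sum_{m\ne k}\mathcal{S}_m=\{0\}$, so $\mathbf{z}_i=\mathbf{y}_k$.
\item Sufficient sampling guarantees that the restriction $\widetilde{\mathbf{a}}_i|_{\mathcal{S}_k}$ is itself feasible.
\item If any off-subspace coefficient were nonzero, this restriction would have strictly smaller $\ell_0$ norm, contradicting optimality. Hence every minimizer is subspace-preserving.
\end{enumerate}
One subtlety: a minimizer can have $\mathbf{y}_m=0$ while the individual coefficients in $\mathcal{S}_m$ are nonzero. The strict $\ell_0$ decrease still applies in that case, so this is exactly the point where $\ell_0$-optimality, rather than mere feasibility, is used.

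For part (2), the KKT conditions of the Tsallis-regularized problem over $\Delta^{n-1}$ give Eq.~\eqref{eq:entmax}: $\widetilde a_{ij}>0$ if and only if $\tilde s_{ij}>\tau_i$. It therefore suffices to show that $\tau_i\ge\max_{j\notin\mathcal{S}_k}\tilde s_{ij}$ whenever $\epsilon$ is large enough.

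Suppose instead that $\tau_i<M:=\max_{j\notin \mathcal{S}_k}\tilde s_{ij}$. By separation, every intra-subspace index then satisfies $\tilde s_{ij}-\tau_i> \epsilon$. The normalization constraint reads
\[
1=\sum_j[(\alpha-1)(\tilde s_{ij}-\tau_i)]_+^{1/(\alpha-1)}\ge N_k\,[(\alpha-1)\epsilon]^{1/(\alpha-1)},
\]
where $N_k\ge1$ is the number of intra-subspace points. Choosing $\epsilon> (\alpha-1)^{-1}N_k^{-(\alpha-1)}$ makes the right-hand side exceed $1$, which is a contradiction. For $\alpha=1.5$, for instance, it suffices that $\epsilon>2/\sqrt{N_k}$, and the requirement $\epsilon\ge 2$ covers every $N_k$.

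So $\tau_i\ge M$, and every inter-subspace entry is $0$. Ties at equality also give $0$, since the positive-part operator $[\cdot]_+$ returns zero there. This yields exact subspace preservation.

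The main obstacle is to make "sufficiently large margin" quantitative and independent of the scale of $\tilde{\mathbf{s}}_i$. Entmax is shift-invariant but not scale-invariant, so the explicit bound $\epsilon>(\alpha-1)^{-1}N_k^{1-\alpha}$ derived above is the form I would state. I would also check that this bound is consistent with whatever constant Assumption~\ref{ass:simsep} actually specifies, and, for part (1), that the sufficient-sampling assumption indeed rules out the degenerate case $\mathbf{z}_i\notin\operatorname{span}\{\mathbf{z}_j\in\mathcal{S}_k: j\ne i\}$.
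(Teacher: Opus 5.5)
Your proposal is correct and follows the same route as the paper. In part (1), independence forces the off-subspace contributions to cancel, so discarding them keeps feasibility while strictly lowering $\|\widetilde{\mathbf{a}}_i\|_0$; note that feasibility of the restriction already follows from $\mathbf{z}_i=\mathbf{y}_k$, and sufficient sampling is only needed to make the feasible set nonempty. In part (2), the entmax threshold form reduces the claim to $\tau_i\ge\max_{j:\,\mathbf{z}_j\notin\mathcal{S}_k}\tilde s_{ij}$. The paper's Step 1 merely asserts this bound for a ``sufficiently large'' margin, so your normalization-based contradiction, with the explicit condition $\epsilon>(\alpha-1)^{-1}N_k^{1-\alpha}$, supplies exactly the justification the paper omits; the paper's Step 2 on intra-subspace sparsity is not needed for the stated claim.
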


Theorem~\ref{thm:relaxation} establishes the fact that the probability simplex constraint $\mathbf{A}^{(l)}\mathbf{1} = \mathbf{1}$, $\mathbf{A}^{(l)} \ge \mathbf{0}$ in Eq.~\eqref{eq:obj} provides a principled convex relaxation of the $\ell_0$-norm self-representation constraint. Under strict separation between intra- and intersubspace similarities, this thresholding mechanism guarantees that only intrasubspace connections remain active, thereby preserving the subspace structure.  By employing $\alpha$-entmax with $\alpha = 1.5$, we obtain a continuous analog of the $\ell_0$ sparsity operator. This formulation facilitates the end-to-end differentiable optimization process without relying on iterative algebraic solvers. Consequently, the subspace-preserving property established for simplex projection inherently extends to entmax-based attention under analogous separation conditions.

\section{Experiments}
\label{sec:exp}

We conduct extensive experiments to evaluate the effectiveness of the proposed SAGL method on eight publicly available datasets: Pets, KITTI, Flowers, Caltech101, EuroSAT, SUN397, Food101 and ImageNet-1K \cite{Paszke2019Pytorch}. Specifically, we consider three transfer learning tasks: self-supervised, zero-shot, and supervised transfer learning. For self-supervised transfer learning, three standard metrics are employed to evaluate the clustering performance of all competing methods, i.e., clustering accuracy (ACC), normalized mutual information (NMI) and adjusted rand index (ARI) \cite{Chen2026MSRL}. We compare SAGL against representative state-of-the-art methods, including SRLF \cite{Xu2026BONN}, principle of rate reduction (PRR) \cite{Chu2024IC}, masked image modeling (MIM) \cite{Alkin2025MIM}, MSRL \cite{Chen2026MSRL} and TURTLE \cite{Gadetsky2024UT}. For zero-shot transfer learning tasks, CLIP zero-shot transfer \cite{Gadetsky2024UT} and LaFTer \cite{Mirza2023FT} are included as baselines. For supervised transfer learning, we employ a supervised linear probe as the baseline. We utilize two representative pretrained backbones, i.e., DINOv3 \cite{Simeoni2025Dinov3} and SigLIP 2 \cite{Tschannen2025LIP}, as backbones. The source codes of all the baselines are provided by their respective authors. The source code of SAGL is publicly available at \url{https://github.com/chenjie20/SAGL}.

\begin{table*}[!htbp]
\tiny
\setlength{\tabcolsep}{0.8pt}
\centering
\caption{Clustering performance of competing methods on eight datasets.}
\label{tb:ssl:results}
\begin{tabular}{c|ccc|ccc|ccc|ccc|ccc|ccc|ccc|ccc}
\hline
\multirow{2}*{Methods} & \multicolumn{3}{c|}{Pets} & \multicolumn{3}{c|}{KITTI}  & \multicolumn{3}{c|}{Flowers} & \multicolumn{3}{c|}{Caltech101} & \multicolumn{3}{c|}{EuroSAT} & \multicolumn{3}{c|}{SUN397} & \multicolumn{3}{c|}{Food101}  & \multicolumn{3}{c}{ImageNet-1K} \\
\cline{2-25}
~ & ACC & NMI & ARI & ACC & NMI & ARI & ACC & NMI & ARI & ACC & NMI & ARI & ACC & NMI & ARI & ACC & NMI & ARI & ACC & NMI & ARI & ACC & NMI & ARI \\
\hline
SRLF & 95.45  & 95.76 & 91.97 & 42.18 & 13.64 & 8.60 & 94.06 & 98.48 & 95.74 & 65.27 & 87.69 & 46.27 & 95.68 & 90.25 & 90.76 & 63.68 & 82.89 & 53.59 & 91.53 & 92.20 & 85.72 & 69.41 & 85.07 & 56.09 \\
PRR & 85.28 & 80.99 & 78.87 & 45.06  & 16.14 & 11.42 & 94.59 & 93.17 & 90.92 & 78.28 & 72.95 & 64.59 & 80.86 & 56.41 & 59.50 & 64.37 & 80.88 & 51.18 &  92.81 & 92.48 & 88.28 & 72.97 & 68.53 & 58.92 \\
MIM & 83.89 & 87.89 & 76.49 & 45.30 & 15.46 & 10.61 & 97.50 & 96.77 & 95.33& 86.90 & 83.62 & 78.66 & 87.66 & 56.34 & 61.34 & 66.40 & 75.32 & 57.13 & 92.20 & 79.62 & 80.61 & 63.24 & 84.72 & 40.18 \\
TURTLE & 96.05 & 96.14 & 92.79 & 45.59 & 16.51 & 11.88 & 99.59 & 99.66 & 99.40 & 90.14 & 94.59 &  \underline{88.16} & 96.38  & 91.61 & 92.17 & \underline{71.18} & \underline{85.13} & \underline{59.01} &  95.02 & 94.97 & 90.95 & 71.98 & 88.12 & 61.72 \\
MSRL & \underline{97.38} & \underline{97.18} & \underline{95.00} & \underline{53.74} & \underline{19.39} & \underline{16.94} & \underline{99.72} & \underline{99.76} & \underline{99.59} & \underline{90.38} & \underline{94.69} & 86.92 &  \underline{99.88} &  \underline{99.67} &  \underline{99.73} &  70.26 & 85.33 & 59.61 & \underline{95.07} & \underline{95.13} & \underline{91.09} & \underline{74.65} & \underline{89.17} & \underline{65.14} \\
SAGL & \textbf{97.57} & \textbf{97.33} & \textbf{95.35} &  \textbf{56.95} & \textbf{24.73} & \textbf{26.12} & \textbf{99.76} & \textbf{99.80} & \textbf{99.65} & \textbf{ 91.54} & \textbf{95.35} & \textbf{89.01} & \textbf{99.92} & \textbf{99.80} & \textbf{99.82} &\textbf{73.34} & \textbf{86.29} & \textbf{62.27} & \textbf{97.68} & \textbf{97.42} & \textbf{95.53} & \textbf{79.34} &  \textbf{90.98} &  \textbf{70.50} \\
\hline
\end{tabular}
\end{table*}

\begin{table*}[!htbp]
\tiny
\setlength{\tabcolsep}{4pt}
\centering
\caption{A comparison of performance between SAGL and the supervised LR baselines.}
\label{tb:stl:results}
\begin{tabular}{c|c|c|c|c|c|c|c|c}
\hline
Methods & Pets & KITTI & Flowers & Caltech101 & EuroSAT & SUN397 & Food101 & ImageNet-1K \\
\hline
LR+SigLIP 2& 96.76 & \underline{73.33} & 99.66 & \textbf{97.22} & 96.14 & \textbf{83.52} & \underline{96.91} &  \textbf{87.04} \\
LR+DINOv3 & \underline{97.08} & \textbf{76.60} & \underline{99.72} & \underline{94.51} & \underline{97.40} & \underline{79.80} & 95.81 & \underline{86.87} \\
SAGL & \textbf{97.57} & 56.95 & \textbf{99.76} & 91.54 & \textbf{99.92} & 73.34 & \textbf{97.68} &79.34 \\
\hline
\end{tabular}
\end{table*}

\begin{table*}[!htbp]
\tiny
\setlength{\tabcolsep}{0.8pt}
\centering
\caption{Clustering performance of SAGL with different pretrained model combinations.}
\label{tb:cbn:results}
\begin{tabular}{c|ccc|ccc|ccc|ccc|ccc|ccc|ccc|ccc}
\hline
\multirow{2}*{Methods} & \multicolumn{3}{c|}{Pets} & \multicolumn{3}{c|}{KITTI}  & \multicolumn{3}{c|}{Flowers} & \multicolumn{3}{c|}{Caltech101} & \multicolumn{3}{c|}{EuroSAT} & \multicolumn{3}{c|}{SUN397} & \multicolumn{3}{c|}{Food101}  & \multicolumn{3}{c}{ImageNet-1K} \\
\cline{2-25}
~ & ACC & NMI & ARI & ACC & NMI & ARI & ACC & NMI & ARI & ACC & NMI & ARI & ACC & NMI & ARI & ACC & NMI & ARI & ACC & NMI & ARI & ACC & NMI & ARI \\
\hline
(1) & \underline{94.63} & \underline{95.82} & \underline{91.73} & 52.14& 21.01 & \underline{20.01} &  \underline{99.71} & \underline{99.75} & \underline{99.56} & 85.39 & 92.46 & 83.72 & 99.62 & 99.00 & 99.16 &  63.49  & 82.09 & \underline{52.39} &  92.11 & 93.52 & 87.36 & 72.45 & 87.97 & 62.07 \\
(2) & 93.19 & 95.07 & 89.79 & \underline{52.41} & 18.60 & 15.34 &  69.78 &87.47 & 65.28 & 77.38 & 87.44 & 82.38 & 92.60 & 88.35 & 85.48 & 59.88 & 81.48 & 49.93 & \textbf{98.32} & \textbf{98.08} & \textbf{96.72} & 71.36 & 87.59 & 61.60 \\
(3) & 90.79 & 94.15 & 85.72 & 48.93 & \underline{24.53} & 14.84 & 84.55 & 93.41 & 45.75 & 88.53 & 94.21 & 82.46 & 88.76  & 81.66 & 77.22 & \underline{69.81} & \underline{84.17} & 32.59 & 94.02 & 94.67 & 88.45 & \underline{75.85} & 89.36 & 47.75 \\
(4) & 92.31 & 95.58 & 89.95 & 52.21 & 22.16 & 17.85 & 88.24 & 95.10 & 59.99 & \underline{89.74} & \underline{94.22} & \underline{85.44} & \underline{99.72} & \underline{99.32} & \underline{99.38} & 65.71 & 81.58 & 11.83 & 97.60 & 97.59 & 95.51 & 75.75 & \underline{89.72} & \underline{65.50} \\
SAGL & \textbf{97.57} & \textbf{97.33} & \textbf{95.35} &  \textbf{56.95} & \textbf{24.73} & \textbf{26.12} & \textbf{99.76} & \textbf{99.80} & \textbf{99.65} & \textbf{ 91.54} & \textbf{95.35} & \textbf{89.01} & \textbf{99.92} & \textbf{99.80} & \textbf{99.82} &\textbf{73.34} & \textbf{86.29} & \textbf{62.27} & \underline{97.68} & \underline{97.42} & \underline{95.53} & \textbf{79.34} &  \textbf{90.98} &  \textbf{70.50} \\
\hline
\end{tabular}
\end{table*}

\begin{table*}[!htbp]
\tiny
\setlength{\tabcolsep}{0.8pt}
\centering
\caption{Ablation study on the key components of the SAGL model.}
\label{tb:abl:results}
\begin{tabular}{c|ccc|ccc|ccc|ccc|ccc|ccc|ccc|ccc}
\hline
\multirow{2}*{Methods} & \multicolumn{3}{c|}{Pets} & \multicolumn{3}{c|}{KITTI}  & \multicolumn{3}{c|}{Flowers} & \multicolumn{3}{c|}{Caltech101} & \multicolumn{3}{c|}{EuroSAT} & \multicolumn{3}{c|}{SUN397} & \multicolumn{3}{c|}{Food101}  & \multicolumn{3}{c}{ImageNet-1K} \\
\cline{2-25}
~ & ACC & NMI & ARI & ACC & NMI & ARI & ACC & NMI & ARI & ACC & NMI & ARI & ACC & NMI & ARI & ACC & NMI & ARI & ACC & NMI & ARI & ACC & NMI & ARI \\
\hline
SAGL$_{i}$ & 96.18 & 96.07 & 92.95 & 50.53 & 17.57 & 15.56 & 99.07 & 99.54 & 97.98 & 90.19 & 94.14 & 87.28 & 96.40 & 91.72 & 92.22 & 70.21 & 84.53 & 58.02 & 95.13  & 94.99 & 91.11 & 73.97 & 88.79 & 64.28 \\
SAGL$_{b}$ & 63.61 & 72.98 & 50.04 & 52.47 & 21.85 & 19.36 & 99.32 & 99.66 & 99.41 & 80.08 & 88.20 & 70.45 & 86.34 & 76.86 & 73.43 & 31.99 & 57.39 & 2.70 & 67.72 & 77.91 & 54.66 & 18.86 & 47.04 & 0.24 \\
SAGL$_{w}$ & \underline{97.22} & \underline{97.34} & \underline{94.96} & \underline{55.08} & \underline{21.71} & \underline{22.98} & \underline{99.37} & \underline{99.71} & \underline{99.48} & \underline{90.52} & \underline{94.54} & \underline{87.79} & \underline{97.12} & \underline{93.65} & \underline{93.76} & \underline{71.24} & \underline{86.11} & \underline{61.17} & \underline{95.68} & \underline{95.74} & \underline{92.06} & \underline{77.54} & \underline{90.32} & \underline{68.32} \\
SAGL & \textbf{97.57} & \textbf{97.33} & \textbf{95.35} &  \textbf{56.95} & \textbf{24.73} & \textbf{26.12} & \textbf{99.76} & \textbf{99.80} & \textbf{99.65} & \textbf{ 91.54} & \textbf{95.35} & \textbf{89.01} & \textbf{99.92} & \textbf{99.80} & \textbf{99.82} &\textbf{73.34} & \textbf{86.29} & \textbf{62.27} & \textbf{97.68} & \textbf{97.42} & \textbf{95.53} & \textbf{79.34} &  \textbf{90.98} &  \textbf{70.50} \\
\hline
\end{tabular}
\end{table*}

\subsection{Performance Evaluation}
\subsubsection{Self-Supervised Transfer Learning}
For self-supervised transfer learning, we follow the standard unsupervised evaluation protocol \cite{Gadetsky2024UT, Chen2026MSRL}, where label information is unavailable during training. The proposed SAGL model is trained on the training set of each dataset and evaluated on the test set without any fine-tuning. The number of clusters is set to the ground-truth number of semantic categories for each dataset. The clustering results of all competing methods on the self-supervised learning task are reported in Table~\ref{tb:ssl:results}, where the best and second-best results are highlighted in bold and underlined, respectively. We observe that SAGL consistently outperforms the competing methods in terms of ACC, NMI, and ARI. In particular, the performance gap becomes increasingly evident on larger datasets, such as SUN397, Food101 and ImageNet-1K. For example, SAGL improves ACC by 2.16\%, 2.61\%, and 4.69\% on SUN397, Food101, and ImageNet-1K, respectively. SRLF does not exploit the generalization capability of pretrained backbones for clustering tasks. Although PRR and MIM incorporate pretrained backbones, they rely on individual views that often fail to provide sufficient information for clustering tasks. In contrast, SAGL, MSRL and TURTLE employ various pretrained backbones on large-scale unlabeled data to produce multiple views,  thereby effectively capturing complementary information across different views. Additionally, SAGL consistently outperforms both MSRL and TURTLE across all the datasets. This is because SAGL further considers the intrinsic subspace structures embedded in the high-dimensional multiview data. Consequently, these results validate the effectiveness of the proposed SAGL approach for self-supervised learning tasks.

\begin{figure}[!htbp]
\centering
\includegraphics[width=0.6\linewidth]{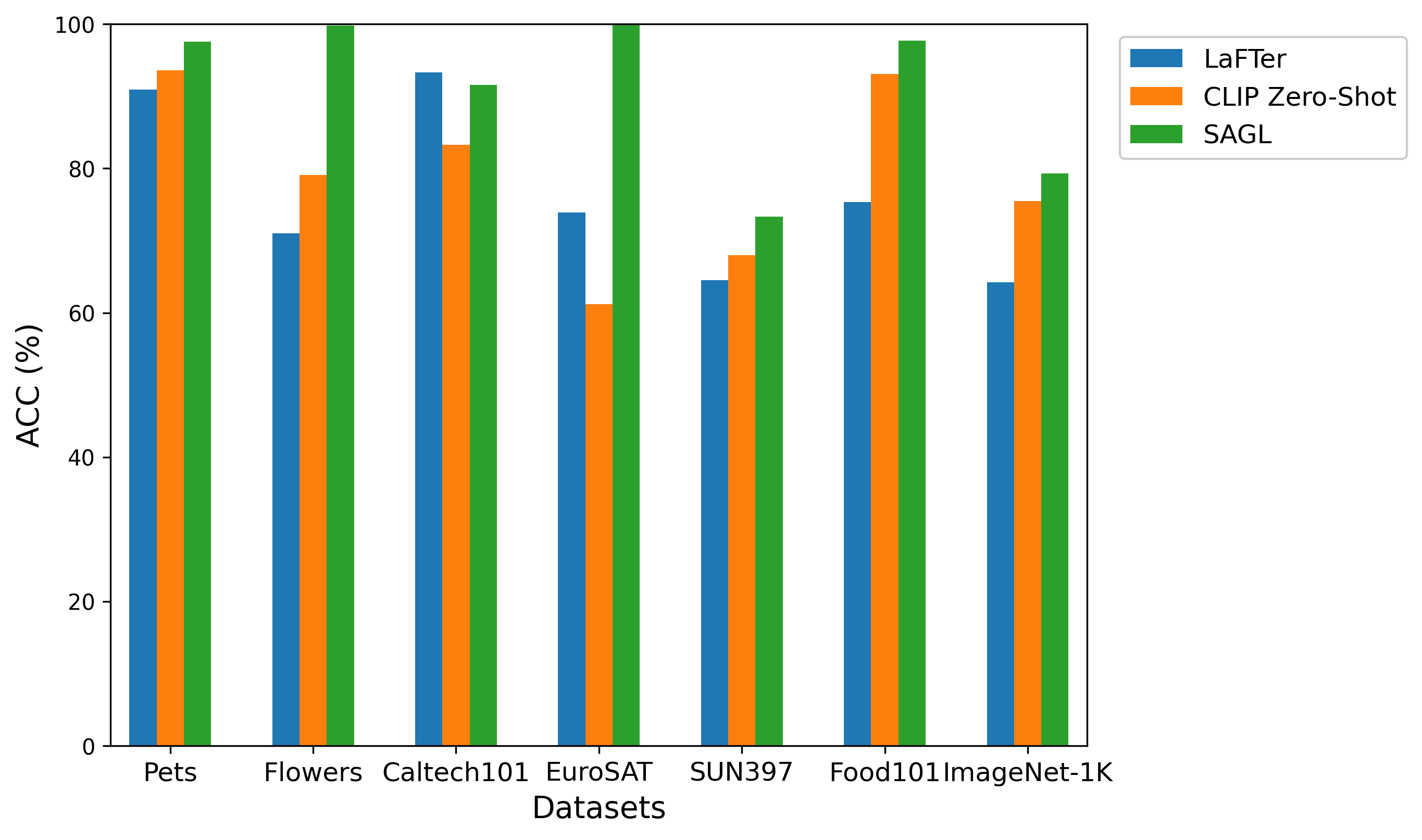}
\caption{A comparison of performance between SAGL and the zero-shot transfer learning baselines.}
\label{fig:zeroshot} %% label for entire figure
\end{figure}

\subsubsection{Zero-Shot Transfer Learning}
The zero-shot transfer learning baselines, such as CLIP zero-shot transfer (ZST) and LaFTer , utilize descriptions of ground-truth classes as a form of supervision. CLIP ZST and LaFTer employ CLIP ViT-L/14 and ViT-B/32 backbones, respectively. Fig.~\ref{fig:zeroshot} shows the clustering accuracies of SAGL and these zero-shot transfer learning baselines across all datasets except KITTI. The KITTI dataset is not included due to its relatively small number of clusters. SAGL outperforms these weakly supervised baselines across most datasets without requiring any class descriptions or label information.

\subsubsection{Supervised Transfer Learning}
We employ a supervised linear probe as an important evaluation protocol to measure the performance gap between SAGL and supervised baselines. The probe is implemented via a standard logistic regression (LR) model. Specifically, for the supervised LR baselines, we  freeze the pretrained backbones and train only a linear classifier on top of the features extracted by the individual pretrained backbones (i.e., DINOv3 and SigLIP 2) using all available labels. Table~\ref{tb:stl:results} shows a comparison between the clustering accuracy of SAGL and the classification accuracy of the supervised LR baselines across eight datasets. Despite the absence of label information during training, SAGL outperforms the supervised LR baselines on the Pets, Flowers, EuroSAT, and Food101 datasets. While the supervised baselines are often limited by the features produced by individual pretrained backbones, SAGL effectively leverages the complementary information of multiple views to produce more discriminative representations, even without label supervision.

\subsection{Evaluating Various Pre-trained Model Combinations}
We construct feature pairs from different pretrained backbones, including DINOv3, SigLIP 2, CLIP ViT-L/14, and ConvNeXt V2. Specifically, we consider four combinations: (1) ConvNeXt V2 and DINOv3, (2) ConvNeXt V2 and SigLIP 2, (3) SigLIP 2, DINOv3, and ConvNeXt V2, and (4) SigLIP 2, DINOv3, and CLIP ViT-L/14. Table~\ref{tb:cbn:results} shows clustering performance of SAGL with four pretrained model combinations across eight datasets. Incorporating additional views is generally expected to provide richer complementary information. However, the results indicate that increasing the number of views does not consistently lead to improved clustering performance. For example, SAGL with two-view combinations achieves better clustering performance than with three-view combinations, such as (3) and (4). This is because highly uncorrelated views provide limited complementary information while increasing computational cost. These findings highlight the importance of carefully selecting view combinations that maximize complementary information rather than simply increasing the number of views. In addition, model pairs sharing the same architecture achieve substantially higher clustering results than heterogeneous pairs combining different architectures. For example, the results show that a combination of ViT-based models consistently outperforms a mixed-architecture combination across most datasets. The large distributional gap between heterogeneous architecture pairs makes it difficult for the SAGL model to recover the intrinsic subspace structures across views.

\subsection{Ablation Study}
We conduct an ablation study to verify the contribution of each key component in SAGL. Specifically, we consider three variants of SAGL: (1) replacing the sparse attention graphs with identity matrices, denoted as SAGL$_{i}$, (2) retaining only the dense attention graphs generated by bilinear attention factorization, denoted as SAGL$_{b}$, and (3) removing the dynamic sparsity gating mechanism, denoted as SAGL$_{w}$. Table~\ref{tb:abl:results} reports the clustering performance of these three variants against the full SAGL model. The full SAGL model significantly outperforms its variants, SAGL$_{i}$, SAGL$_{b}$ and SAGL$_{w}$. For example, the performance gap between SAGL and SAGL$_{i}$ becomes particularly significant on the large-scale datasets. When the structured sparse projection is replaced with dense attention graphs, the clustering performance of SAGL$_{b}$ drops dramatically across most datasets compared to the full SAGL model. The results demonstrate the effectiveness of the sparse information aggregation operation in Eq.~\eqref{eq:rep}. Additionally, the clustering results of SAGL$_{w}$ also highlight the crucial role of the dynamic sparsity gate in adaptive neighbor selection.

\section{Conclusion}
\label{sec:conclusion}
In this paper, we present the SAGL method for learning subspace-preserving sparse attention graphs from heterogeneous multiview data. SAGL utilizes bilinear attention factorization to capture asymmetric similarities among high-dimensional features, successfully breaking the symmetry constraints in  traditional representation learning techniques. Moreover, we introduce a dynamic sparsity gating mechanism that adaptively controls the topological contributions of neighbors by predicting a feature-specific compression factor. To enforce structured sparsity, we adopt a structured sparse projection via $\alpha$-entmax to generate subspace-preserving sparse attention graphs. By leveraging these view-specific graphs, SAGL conducts sparse information aggregation to produce discriminative representations for unsupervised transfer learning without the need for computationally expensive iterative solvers. Finally, we provide a rigorous theoretical analysis bridging the gap between differentiable sparse attention and probability simplex constraints. This analysis demonstrates how structured sparsity enforces subspace-preserving representations. Extensive experiments demonstrate that SAGL consistently outperforms state-of-the-art unsupervised transfer learning approaches.

{
\small
\bibliographystyle{IEEEtran}
\bibliography{sagl}
}

\newpage

\appendix

\section{Technical Appendices}

\subsection{Enforcing Multiview Alignment under Structural Sparsity Constraints}
\label{sec:sparsity}
To align the heterogeneous representations across multiple views in a fully unsupervised manner, SAGL is optimized by jointly minimizing three complementary components: pseudolabel prediction, cluster diversity regularization, and cross-view consistency alignment \cite{Chen2026MSRL}. We optimize the entire network using an adaptive momentum-based mini-batch gradient descent strategy \cite{Kingma2015ADAM}, with the pretrained backbones remaining frozen during training. The complete optimization procedure of the proposed SAGL method is summarized in Algorithm~\ref{alg:sagl}.

\begin{algorithm}[tb]
\caption{Optimization Procedure for SAGL}
\label{alg:sagl}
\textbf{Input}: An unlabeled training dataset $\mathcal{D}_{tr} = \left\{ \mathbf{x}_1, \mathbf{x}_2, \ldots, \mathbf{x}_n \right\}$ and a testing dataset $\mathcal{D}_{ts} = \left\{ \hat{\mathbf{x}}_1, \hat{\mathbf{x}}_2, \ldots, \hat{\mathbf{x}}_{\hat{n}} \right\}$ belonging to $C$ categories, and $L$ pre-trained models. \\
\textbf{Parameters}: The number of training epochs $\mathit{epochs}$, batch size $B$, and hyperparameters $\gamma > 0$ and $\beta > 0$. \\
\textbf{Output}: The predicted labels $\mathbf{Y} = \left[ \hat{y}_1, \hat{y}_2, \ldots, \hat{y}_{\hat{n}} \right]$ for $\mathcal{D}_{ts}$.
\begin{algorithmic}[1]
\STATE Construct the multiview data $\mathcal{H}_{tr}$ from $\mathcal{D}_{tr}$ via Eq.~\eqref{eq:feat};
\FOR{$t = 1$ \textbf{to} $\mathit{epochs}$}
    \FOR{each mini-batch $\mathcal{B}$ of size $B$ sampled from $\mathcal{H}_{tr}$}
        \FOR{$l = 1$ \textbf{to} $L$}
            \STATE Compute the linear features $\mathbf{Z}^{(l)}$ for samples in $\mathcal{B}$ via Eq.~\eqref{eq:linearfeat};
            \STATE Compute the raw similarity matrix $\mathbf{S}^{(l)}$ via Eq.~\eqref{eq:lrs};
            \STATE Compute the directed similarities $\tilde{\mathbf{S}}^{(l)}$ via Eq.~\eqref{eq:sim};
            \STATE Construct the subspace-preserving sparse attention graph $\mathbf{A}^{(l)}$ via Eq.~\eqref{eq:entmax};
            \STATE Compute the assignment probability distribution $\mathbf{q}_i^{(l)}$ for each sample $\mathbf{x}_i \in \mathcal{B}$ via Eq.~\eqref{eq:prob};
        \ENDFOR
        \STATE Determine the pseudolabel $y_i$ for each sample $\mathbf{x}_i \in \mathcal{B}$ via Eq.~\eqref{eq:pseudo};
        \STATE Update the network parameters by minimizing $\mathcal{L}$ in Eq.~\eqref{eq:totalloss};
    \ENDFOR
\ENDFOR
\STATE Construct the multiview data $\mathcal{H}_{ts}$ from $\mathcal{D}_{ts}$ via Eq.~\eqref{eq:feat};
\STATE Perform forward computation on $\mathcal{H}_{ts}$ to predict the labels $\hat{y}_i$ for all samples $\hat{\mathbf{x}}_i \in \mathcal{D}_{ts}$ via Eq.~\eqref{eq:pseudo}.
\end{algorithmic}
\end{algorithm}

\textbf{Pseudolabel Prediction Loss.} Consider the view-specific representations $\mathbf{q}_i^{(l)}$ produced by the sparse information aggregation process described in Eq.~\eqref{eq:prob}. To enable self-supervised training to be implemented in the absence of ground-truth annotations,  we construct pseudolabels by aggregating the predicted soft assignments across all $L$ views:
\begin{equation}\label{eq:pseudo}
\hat{y}_i = \arg\max_{k} \ \frac{1}{L} \sum_{l=1}^{L}q_{ik}^{(l)}
\end{equation}
where $q_{ik}^{(l)}$ denotes the predicted probability of sample $\mathbf{x}_i$ belonging to class $k$  $\left( 1 \le k \le C \right)$ in the $l$th view. A negative log-likelihood loss is applied to refine each view-specific prediction:
\begin{equation}\label{eq:pred_loss}
\mathcal{L}_{pseudo} = -\frac{1}{n} \sum_{l=1}^{L} \sum_{i=1}^{n} \log q_{i\hat{y}_i}^{(l)}.
\end{equation}
This loss encourages each independent view to produce predictions that align with the global consensus distribution.

\textbf{Cluster Diversity Regularization.} Unsupervised transfer learning-based clustering models are highly prone to producing trivial solutions, where all instances collapse into a single underlying subspace (i.e., a single cluster). We explicitly maximize the marginal entropy of the global cluster assignments. Let $\bar{\mathbf{q}}_c^{(l)}$ denote the average assignment probability of all instances that are assigned to category $j$. Cluster diversity can be achieved by minimizing the negative entropy:
\begin{equation}
\mathcal{L}_{div} = \sum_{l=1}^{L} \sum_{j=1}^{C} \bar{q}_j^{(l)} \log \bar{q}_j^{(l)}
\end{equation}
where
\begin{equation}
\bar{q}_j^{(l)} = \frac{1}{n} \sum_{i=1}^{n} q_{i,j}^{(l)}.
\end{equation}

\textbf{Cross-View Consistency Alignment.}
The semantic assignment probability distributions $\mathbf{q}_i^{(\mu)}$ and $\mathbf{q}_i^{(\nu)}$ produced by view $\mu$ and view $\nu$, respectively, should exhibit strong semantic alignment. To enforce semantic consistency across heterogeneous views $\left( \mu, \nu \right)$, we minimize the cross-entropy among the soft assignment distributions of different pretrained models:
\begin{equation}
\mathcal{L}_{align} = - \frac{1}{n} \sum_{i=1}^{n} \sum_{\mu=1}^{L}  \sum_{\nu=1, \mu \ne \nu}^{L} \sum_{j=1}^{C} q_{i,j}^{(\mu)} \log q_{i,j}^{(\nu)}.
\end{equation}

\textbf{Overall Objective.} The overall optimization objective of SAGL is defined as follows:
\begin{equation}
\label{eq:totalloss}
\mathcal{L}_{total} = \mathcal{L}_{pseudo} + \gamma \mathcal{L}_{div} + \beta \mathcal{L}_{align}
\end{equation}
where $\gamma$ and $\beta$ are the trade-off parameters. The overall objective $\mathcal{L}_{total}$ represents a specific implementation of the general self-supervised alignment loss $\mathcal{L} \left( \mathbf{p}_i^{(\mu)}, \mathbf{p}_i^{(\nu)} \right)$ presented in Eq.~\eqref{eq:obj}.

\textbf{Limitations}. SAGL relies on the availability of sufficient neighbors within the same latent subspace to perform sparse information aggregation. A limitation is introduced during mini-batch training when sample shuffling is applied, as it may lead to an insufficient number of samples from certain semantic categories within a single batch. This violates the sufficient intrasubspace sampling assumption, which requires each feature's neighborhood to contain at least $d_k$ linearly independent samples from its corresponding subspace to ensure faithful recovery of the underlying subspace structures. In such cases, the $\alpha$-entmax projection may fail to establish robust and meaningful connections. As a result, this may degrade the separability of the learned representations and reduce overall clustering performance.

\subsection{Proof of Theorem~\ref{thm:capacity}}
\begin{proof}
Let $\mathbf{A}^* = \mathbf{Z}\mathbf{W}^*\mathbf{Z}^\top$ denote the target adjacency matrix. The objective is to show that there exist matrices $\mathbf{U}, \mathbf{V} \in \mathbb{R}^{C \times C}$ such that
\begin{equation}
\frac{1}{\sqrt{c}} \mathbf{Z}\mathbf{U}\mathbf{V}^\top \mathbf{Z}^\top = \mathbf{A}^*.
\end{equation}

Since $\mathbf{W}^*$ is an arbitrary (potentially asymmetric) matrix, we can perform singular value decomposition (SVD) on the scaled target $\sqrt{C}\mathbf{W}^*$:
\begin{equation}
\sqrt{c}\mathbf{W}^* = \mathbf{P} \mathbf{\Sigma} \mathbf{Q}^\top
\end{equation}
where $\mathbf{P}, \mathbf{Q} \in \mathbb{R}^{C \times C}$ are orthogonal matrices containing the left and right singular vectors, respectively, and $\mathbf{\Sigma} \in \mathbb{R}^{C \times C}$ is the diagonal matrix of singular values. We can explicitly construct the optimal projection matrices as follows:
\begin{equation}
\mathbf{U} = \mathbf{P} \mathbf{\Sigma}^{1/2}, \quad \mathbf{V} = \mathbf{Q} \mathbf{\Sigma}^{1/2}.
\end{equation}
Since the diagonal entries of $\mathbf{\Sigma}$ are nonnegative singular values, $\mathbf{\Sigma}^{1/2}$ is a strictly real-valued matrix, which ensures that $\mathbf{U}, \mathbf{V} \in \mathbb{R}^{C \times C}$. Then, we have
\begin{equation}\label{eq:uv}
\begin{aligned}
\mathbf{U}\mathbf{V}^\top &= (\mathbf{P} \mathbf{\Sigma}^{1/2}) (\mathbf{Q} \mathbf{\Sigma}^{1/2})^\top \\
&= \mathbf{P} \mathbf{\Sigma}^{1/2} \mathbf{\Sigma}^{1/2} \mathbf{Q}^\top \\
&= \mathbf{P} \mathbf{\Sigma} \mathbf{Q}^\top \\
&= \sqrt{c}\mathbf{W}^*
\end{aligned}
\end{equation}
Upon substituting Eq.~\eqref{eq:uv} back into the bilinear attention factorization formulation given in Eq.~\eqref{eq:lrs}, we obtain
\begin{equation}
\begin{aligned}
\mathbf{A} &= \frac{1}{\sqrt{c}}\mathbf{Z}(\mathbf{U}\mathbf{V}^\top)\mathbf{Z}^\top \\
& = \frac{1}{\sqrt{c}}\mathbf{Z}(\sqrt{C}\mathbf{W}^*)\mathbf{Z}^\top \\
& = \mathbf{Z}\mathbf{W}^*\mathbf{Z}^\top = \mathbf{A}^*
\end{aligned}
\end{equation}
Therefore, $\mathbf{U}$ and $\mathbf{V}$ always exist such that the bilinear attention factorization exactly recovers any arbitrary asymmetric adjacency matrix $\mathbf{A}^*$, where $\mathbf{U}$ and $\mathbf{V}$ are constructed independently from the left and right singular vectors, respectively.
\end{proof}

\subsection{Proof of Theorem~\ref{thm:relaxation}}
\begin{proof}
\textbf{(1)} Let $\mathbf{z}_i^{(l)} \in \mathcal{S}_k$. Under the independent subspace assumption, any linear feature $\mathbf{z}_i^{(l)} \in \mathcal{S}_k$ cannot be represented as a linear combination of features derived from $\mathcal{S}_{k'} \left(  k' \ne k \right)$. Consequently, assigning coefficients for the features located outside $\mathcal{S}_k$ does not reduce the reconstruction error $\left\| \mathbf{z}_i^{(l)} - \mathbf{Z}^{(l)} \mathbf{a}_i^{(l)} \right\|_2^2$. Conversely, this strictly increases the $\ell_0$-norm penalty. Thus, the optimal solution $\widetilde{\mathbf{a}}_i^{(l)}$ that minimizes Eq.~\eqref{eq:l0} must assign zero to all cross-subspace features, resulting in a subspace-preserving representation.

\textbf{(2)}
The entmax operator for $\alpha > 1$ is defined as follows:
\begin{equation}
\text{entmax}_\alpha(\tilde {\mathbf{s}}_i^{(l)})_j = [(\alpha-1)\tilde s_{ij}^{(l)} - \tau_i]_+^{1/(\alpha-1)},
\end{equation}
where $[ \cdot ]_+ = \max(0, \cdot)$ and $\tau_i \in \mathbb{R}$ is the unique threshold determined by the simplex constraint $\sum_j a_{ij}^{(l)} = 1$.

Let $\mathbf{z}_i^{(l)} \in \mathcal{S}_{y_i}$. Under Assumption~\ref{ass:simsep}, a margin $\epsilon > 0$ separates the intrasubspace similarities from the intersubspace similarities:
\begin{equation}
\min_{j: \mathbf{z}_j^{(l)} \in \mathcal{S}_{y_i}}\tilde s_{ij}^{(l)} > \max_{j: \mathbf{z}_j^{(l)} \notin \mathcal{S}_{y_i}} \tilde s_{ij}^{(l)} + \epsilon.
\end{equation}

\textbf{Step 1: Intersubspace sparsity.} If the separation margin $\epsilon$ is sufficiently large (combined with the sufficient intrasubspace sampling of $\mathcal{S}_{y_i}$ from Assumption~\ref{ass:sufsamp}), then the threshold $\tau_i$ strictly exceeds the values of all inter-subspace similarity terms $(\alpha-1) \tilde s_{ij}^{(l)}$ for $j: \mathbf{z}_j^{(l)} \notin \mathcal{S}_{y_i}$; i.e.,
\begin{equation}
\tau_i \ge (\alpha-1) \max_{j: \mathbf{z}_j^{(l)} \notin \mathcal{S}_{y_i}} \tilde s_{ij}^{(l)}.
\end{equation}
For any cross-subspace feature $\mathbf{z}_j^{(l)} \notin \mathcal{S}_{y_i}$, we have
\begin{equation}
(\alpha-1)\tilde s_{ij}^{(l)} - \tau_i \le 0,  \ \forall j \notin \mathcal{S}_{y_i}.
\end{equation}
This implies that $\widetilde{a}_{ij}^{(l)} = 0, \  \forall j \notin \mathcal{S}_{y_i}$.

\textbf{Step 2: Intrasubspace sparsity.}
To satisfy the unit sum constraint $\sum_j a_{ij}^{(l)} = 1$, the threshold $\tau_i$ must be bounded from above by the similarity of the active intrasubspace elements. Thus, there exists a threshold $\tau_i$ such that
\begin{equation}
(\alpha-1)\max_{j: \mathbf{z}_j^{(l)} \notin \mathcal{S}_{y_i}}\tilde s_{ij}^{(l)} \le \tau_i < (\alpha-1)\max_{j: \mathbf{z}_j^{(l)} \in \mathcal{S}_{y_i}} \tilde s_{ij}^{(l)}.
\end{equation}
In $\mathcal{S}_{y_i}$, the threshold $\tau_i$ is not required to be below all intrasubspace similarities. On the contrary, $\tau_i$ is uniquely determined by the simplex constraint and the distribution of intrasubspace similarities. Specifically,
for $j: \mathbf{z}_j^{(l)} \in \mathcal{S}_{y_i}$:
\begin{equation}
\widetilde{a}_{ij}^{(l)} > 0 \iff (\alpha-1)\tilde{s}_{ij}^{(l)} > \tau_i.
\end{equation}
By refining $\tilde{s}_{ij}^{(l)}$ to amplify the similarity gap within $\mathcal{S}_{y_i}$, the dynamic sparsity gating mechanism ensures that only the most spatially proximate intrasubspace neighbors satisfy the connection condition, resulting in a sparse attention graph.

Therefore, the support set $\mathcal{A}_i = \{j : \widetilde{a}_{ij}^{(l)} > 0\}$ contains only a specific subset of intrasubspace features, i.e., $|\mathcal{A}_i| \ll |\mathcal{S}_{y_i}|$, which successfully preserves the subspace structures while enforcing the sparse attention property of the $\ell_0$-norm.
\end{proof}

\subsection{Theoretical Bridge: Investigating Block-Diagonal Structures in Directed Similarity Graphs}
\begin{assumption}
\label{ass:indsub}
[\textbf{Independent Subspaces}]
A collection of $K$ latent subspaces $\{\mathcal{S}_k\}_{k=1}^K$ is linearly independent if for any $k \in \{1, \dots, K\}$, the following holds:
\begin{equation}
\mathcal{S}_k \cap \sum_{j \neq k} \mathcal{S}_j = \{\mathbf{0}\}.
\end{equation}
\end{assumption}

\begin{assumption}
\label{ass:sufsamp}
[\textbf{Sufficient Intrasubspace Sampling}]
For any feature $\mathbf{z}_i^{(l)} \in \mathcal{S}_k$, its neighborhood set $\mathcal{N}_i$ contains at least $d_k$ linearly independent samples from $\mathcal{S}_k$, such that
\begin{equation}
\text{span}\left(\{\mathbf{z}_j^{(l)}\}_{j \in \mathcal{N}_i \cap \mathcal{S}_k}\right) = \mathcal{S}_k,
\end{equation}
where $d_k = \dim(\mathcal{S}_k)$.
\end{assumption}

\begin{assumption}
\label{ass:simsep}
[\textbf{Similarity Separation}]
For any feature $\mathbf{z}_i^{(l)} \in \mathcal{S}_k$, the directed similarities satisfy a margin condition: there exists a sufficiently large margin $\epsilon > 0$ such that
\begin{equation}
\max_{j: \mathbf{z}_j^{(l)} \notin \mathcal{S}_k} \tilde s_{ij}^{(l)}
<
\min_{j: \mathbf{z}_j^{(l)} \in \mathcal{S}_k} \tilde s_{ij}^{(l)} - \epsilon,
\end{equation}
where $\tilde s_{ij}^{(l)}$ denotes the directed similarity between $\mathbf{z}_i^{(l)}$ and $\mathbf{z}_j^{(l)}$.
\end{assumption}

\begin{theorem}
\label{cor:blockdiag}
[\textbf{Sparse Attention Recovers Block-Diagonal Structures}]
Under Assumption~\ref{ass:indsub}, let $\mathbf{A}^{(l)} \in \mathbb{R}^{n \times n}$ be the attention graph produced via the $\alpha$-entmax projection method with $\alpha = 1.5$. If the bilinear similarity $\mathbf{s}_i^{(l)}$ satisfies the separation condition (Assumption~\ref{ass:simsep}),  there exists a permutation matrix $\mathbf{P}$ such that the permuted matrix
\begin{equation}
\widetilde{\mathbf{A}}^{(l)} = \mathbf{P}\mathbf{A}^{(l)}\mathbf{P}^\top
\end{equation}
has a block-diagonal structure, i.e.,
\begin{equation}
\widetilde{\mathbf{A}}^{(l)} =
\begin{pmatrix}
\mathbf{B}_1^{(l)} & \mathbf{0} & \cdots & \mathbf{0} \\
\mathbf{0} & \mathbf{B}_2^{(l)} & \cdots & \mathbf{0} \\
\vdots & & \ddots & \vdots \\
\mathbf{0} & \mathbf{0} & \cdots & \mathbf{B}_K^{(l)}
\end{pmatrix}
\end{equation}
where $\mathbf{B}_k^{(l)} \in \mathbb{R}^{n_k \times n_k}$ denotes the intrasubspace attention submatrix corresponding to $\mathcal{S}_k$.
\end{theorem}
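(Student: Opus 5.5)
Since Theorem~\ref{thm:relaxation}(2) only says that each column has support inside the right subspace, the plan is to obtain the block-diagonal form by a purely combinatorial reordering. The first step is to put the samples in a grouping that respects the subspaces. Assumption~\ref{ass:indsub} gives $\mathcal{S}_k\cap\mathcal{S}_{k'}=\{\mathbf{0}\}$ for $k\neq k'$, so every nonzero feature lies in exactly one subspace. I will therefore define the label $y_i$ unambiguously and partition the indices as $\{1,\dots,n\}=I_1\cup\cdots\cup I_K$ with $I_k=\{i:\mathbf{z}_i^{(l)}\in\mathcal{S}_k\}$ and $n_k=|I_k|$. Zero features, if any, get a fixed arbitrary label; they could also be excluded, since a zero feature would make the separation hypothesis degenerate anyway.

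Next I take $\mathbf{P}$ to be the permutation matrix that lists the indices of $I_1$ first, then those of $I_2$, and so on. Conjugating by $\mathbf{P}$ relabels rows and columns by the same permutation. Consequently the $(k,k')$ block of $\mathbf{P}\mathbf{A}^{(l)}\mathbf{P}^\top$ is exactly the submatrix $[a^{(l)}_{ij}]$ with $i\in I_k$ and $j\in I_{k'}$, up to the row/column convention for $\mathbf{a}_i^{(l)}$. The whole statement then comes down to showing $a_{ij}^{(l)}=0$ whenever $y_i\neq y_j$.

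The main step is to apply Theorem~\ref{thm:relaxation}(2) to each index $i$ separately. The column $\mathbf{a}_i^{(l)}=\mathrm{entmax}_{1.5}(\tilde{\mathbf{s}}_i^{(l)})$ comes from the thresholding formula in Eq.~\eqref{eq:entmax}, and the hypothesis supplies Assumption~\ref{ass:simsep} for $\tilde{\mathbf{s}}_i^{(l)}$. Hence $a_{ij}^{(l)}>0$ implies $\mathbf{z}_j^{(l)}\in\mathcal{S}_{y_i}$. Because entmax outputs are nonnegative, this gives $a^{(l)}_{ij}=0$ for all $j\notin I_{y_i}$, so every off-diagonal block vanishes. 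The diagonal blocks are $\mathbf{B}_k^{(l)}=[a_{ij}^{(l)}]_{i,j\in I_k}$, which are square of size $n_k$.

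I expect the main obstacle to be making the separation step honest rather than simply citing Theorem~\ref{thm:relaxation}(2), because Step~1 of its proof asserts without computation that the threshold exceeds the inter-subspace scores. I would verify this directly for $\alpha=1.5$, where $a_{ij}=[\tfrac12\tilde s_{ij}-\tau_i]_+^2$. Let $m$ be the minimum intra-subspace score. If every intra-subspace index had score at least $m$ and $\tfrac12 m-\tau_i\ge 1/\sqrt{n_{y_i}}$, then the intra-subspace mass alone would be at least $1$. Since the total mass is exactly $1$, this forces $\tau_i\ge\tfrac12 m-1/\sqrt{n_{y_i}}$. Assumption~\ref{ass:simsep} gives $\tilde s_{ij}<m-\epsilon$ for inter-subspace $j$, so $\tfrac12\tilde s_{ij}-\tau_i<1/\sqrt{n_{y_i}}-\epsilon/2\le 0$ once $\epsilon\ge 2/\sqrt{n_{y_i}}$. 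This makes ``sufficiently large margin'' precise as $\epsilon\ge 2/\sqrt{\min_k n_k}$, and it also covers the case where the gate rescales the similarities, because the assumption is stated on $\tilde s$. One caveat remains. Because $\mathbf{A}^{(l)}$ is column-wise asymmetric, the zero pattern is only guaranteed on the inter-subspace entries of each column, but that is exactly what the block form requires.
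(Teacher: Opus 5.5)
Your proof is correct and follows the same route as the paper. You partition the indices by subspace (well defined by Assumption~\ref{ass:indsub}), conjugate by the grouping permutation, and use the column-wise subspace-preserving property of Theorem~\ref{thm:relaxation}(2) to kill the off-diagonal blocks. Your explicit $\alpha=1.5$ computation, which gives $\tau_i\ge \tfrac12 m-1/\sqrt{n_{y_i}}$ and hence that $\epsilon\ge 2/\sqrt{\min_k n_k}$ suffices, is a real improvement: it makes precise the ``sufficiently large margin'' that the paper's Step~1 only asserts, and it removes the reliance on Assumption~\ref{ass:sufsamp}.
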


\begin{proof}
Under Assumption~\ref{ass:indsub}, each feature $\mathbf{z}_i^{(l)}$ belongs to a unique subspace $\mathcal{S}_k$. According to Theorem~\ref{thm:relaxation}, the entmax operator with $\alpha > 1$ produces a subspace-preserving sparse representation. Specifically, for each $\mathbf{z}_i^{(l)} \in \mathcal{S}_k$, we have
\begin{equation}
\widetilde{a}_{ij}^{(l)} = 0, \quad \forall j \text{ such that } \mathbf{z}_j^{(l)} \notin \mathcal{S}_k.
\end{equation}
Therefore, the nonzero entries of $\mathbf{A}^{(l)}$ are confined to pairs $(i,j)$ with $\mathbf{z}_i^{(l)}, \mathbf{z}_j^{(l)} \in \mathcal{S}_k$ for some $k$.

We consider a permutation matrix $\mathbf{P}$ that reorders the indices such that all samples acquired from the same subspace are grouped together. Let
\begin{equation}
\widetilde{\mathbf{A}}^{(l)} = \mathbf{P} \mathbf{A}^{(l)} \mathbf{P}^\top.
\end{equation}
Since the cross-subspace entries are zero, $\widetilde{\mathbf{A}}^{(l)}$ has a block-diagonal structure, i.e.,
\begin{equation}
\widetilde{\mathbf{A}}^{(l)} =
\begin{pmatrix}
\mathbf{B}_1^{(l)} & \mathbf{0} & \cdots & \mathbf{0} \\
\mathbf{0} & \mathbf{B}_2^{(l)} & \cdots & \mathbf{0} \\
\vdots & & \ddots & \vdots \\
\mathbf{0} & \mathbf{0} & \cdots & \mathbf{B}_K^{(l)}
\end{pmatrix},
\end{equation}
where $\mathbf{B}_k^{(l)} \in \mathbb{R}^{n_k \times n_k}$ corresponds to the intrasubspace attention structure contained in $\mathcal{S}_k$. Thus, $\mathbf{A}^{(l)}$ is block-diagonal up to a permutation of the indices.
\end{proof}

Theorem~\ref{cor:blockdiag} suggests that the sparse attention graph $\mathbf{A}^{(l)}$ can recover a complete block-diagonal structure if two conditions are satisfied: sufficient intrasubspace sampling (Assumption~\ref{ass:sufsamp}) and the similarity separation condition (Assumption~\ref{ass:simsep}). Under these conditions, the $\alpha$-entmax projection maps all cross-subspace attention weights to exactly zero regardless of feature positions. Because the memberships of features in different subspaces are determined by relative similarities rather than absolute locations, the block-diagonal structure is preserved for unseen test samples. This guarantees consistent clustering performance across both training and test sets.

\subsection{Theoretical Bridge: From the $\ell_0$-Norm to Simplex Projection}
The sparse self-representation lies in the discrete $\ell_0$-norm constraint in Eq.~\eqref{eq:lzsimplex}, which strictly limits the number of nonzero reconstruction coefficients, i.e., $\|\mathbf{a}_i^{(l)}\|_0 \ll n$. However, directly optimizing the $\ell_0$-norm over high-dimensional data is an NP-hard and strictly nondifferentiable problem. The traditional continuous relaxation schemes, such as the $\ell_1$-norm regularization, often require computationally prohibitive iterative solvers, e.g., an alternating direction method of multipliers framework \cite{Boyd2011ADMM}. To overcome this limitation, SAGL formulates the construction of the sparse attention graph as a Tsallis entropy-regularized projection onto the probability simplex $\Delta^{n-1}$. We provide the following theorem to establish the equivalence between the continuous simplex projection and the discrete $\ell_0$-norm constraint.

\begin{theorem}
\label{thm:lzbridge}
[\textbf{Differentiable Relaxation and Support Sparsity of the $\ell_0$-Norm Constraint}]
Let $\tilde{\mathbf{s}}_i^{(l)} \in \mathbb{R}^n$ be the directed similarity vector for the $i$th feature. The structured sparse projection imposed on the probability simplex $\Delta^{n-1}$ via the $\alpha$-entmax transformation acts as a continuous and differentiable relaxation of the discrete $\ell_0$-norm constraint. Specifically, it guarantees a strictly sparse support set $\text{supp}(\mathbf{a}_i^{(l)}) = \{j \mid a_{ij}^{(l)} > 0\}$, where the discrete cardinality $\|\mathbf{a}_i^{(l)}\|_0$ is exactly determined by the continuous topological uncertainty factor $\tau_i^{(l)}$.
\end{theorem}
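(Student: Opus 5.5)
We will prove the statement by writing the $\alpha$-entmax projection as a variational problem and reading off its KKT conditions. For $\alpha>1$, the map $\mathbf{a}_i^{(l)}=\text{entmax}_\alpha(\tilde{\mathbf{s}}_i^{(l)})$ is the unique maximizer over $\Delta^{n-1}$ of $\langle \mathbf{a},\tilde{\mathbf{s}}_i^{(l)}\rangle-\Omega_\alpha(\mathbf{a})$, where $\Omega_\alpha(\mathbf{a})=\frac{1}{\alpha(\alpha-1)}\sum_j (a_j^\alpha-a_j)$ is the negative Tsallis entropy, as already used in Theorem~\ref{thm:relaxation}(2). This function is strictly convex, and the simplex is compact and convex, so the maximizer exists and is unique.

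\textbf{Step 1 (KKT and threshold form).} We introduce a multiplier $\tau$ for $\mathbf{1}^\top\mathbf{a}=1$ and multipliers $\nu_j\ge 0$ for $a_j\ge 0$. Stationarity gives $\tilde s_{ij}-\frac{1}{\alpha-1}(a_j^{\alpha-1}-\frac{1}{\alpha})-\tau'+\nu_j=0$. After absorbing constants into $\tau_i^{(l)}$, complementary slackness yields Eq.~\eqref{eq:entmax}:
\[
a_{ij}=\bigl[(\alpha-1)(\tilde s_{ij}-\tau_i^{(l)})\bigr]_+^{1/(\alpha-1)} .
\]
Consequently,
\[
\text{supp}(\mathbf{a}_i^{(l)})=\{j:\tilde s_{ij}>\tau_i^{(l)}\}\quad\text{and}\quad \|\mathbf{a}_i^{(l)}\|_0=\#\{j:\tilde s_{ij}>\tau_i^{(l)}\}.
\]
This shows that the cardinality is determined exactly by $\tau_i^{(l)}$.

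\textbf{Step 2 (uniqueness of $\tau$).} Define
\[
g(\tau)=\sum_j\bigl[(\alpha-1)(\tilde s_{ij}-\tau)\bigr]_+^{1/(\alpha-1)} .
\]
This function is continuous and nonincreasing. It is strictly decreasing wherever it is positive. It equals $0$ for $\tau\ge\max_j\tilde s_{ij}$ and tends to $\infty$ as $\tau\to-\infty$. Hence $g(\tau)=1$ has a unique root, and this root lies in $[\max_j\tilde s_{ij}-(\alpha-1)^{-1},\,\max_j\tilde s_{ij})$. The support is therefore always nonempty. Moreover, it is strictly sparse: it excludes every $j$ with $\tilde s_{ij}\le\max_k\tilde s_{ik}-1/(\alpha-1)$. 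For $\alpha=1.5$ this gap is $2$, and the root can be found by sorting, in the manner of sparsemax.

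\textbf{Step 3 (continuity, differentiability, and relaxation).} Continuity follows because $\tau_i(\tilde{\mathbf{s}})$ is continuous by the implicit function theorem on $g$, which is $C^1$ in both arguments wherever its derivative is nonzero. The entmax output is then continuous, and it is differentiable almost everywhere, namely away from ties $\tilde s_{ij}=\tau_i$. Implicit differentiation gives the Jacobian restricted to the support. To justify the word ``relaxation,'' we compare with the limits. As $\alpha\to\infty$, or after rescaling $\tilde{\mathbf{s}}$ by a large factor, which is what the gate $1/(1-\omega_i)$ in Eq.~\eqref{eq:alsim} effectively does, the support shrinks to the argmax. This corresponds to $\|\mathbf{a}\|_0=1$, the hard $\ell_0$ extreme. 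As $\alpha\to1$ we recover softmax, whose support is everything. So the single continuous quantity $\tau_i^{(l)}$, modulated by $\omega_i^{(l)}$, interpolates the discrete cardinality. Combined with Theorem~\ref{thm:relaxation}, the support then lies inside the intrasubspace indices.

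\textbf{Main obstacle.} Differentiability fails at ties, where the support changes, and the claim that the $\ell_0$-norm is ``exactly determined'' must be read through the identity in Step 1 rather than as a smooth function. I would state the result as almost-everywhere differentiability together with Lipschitz continuity, the latter holding because entmax is the gradient of a convex conjugate with a strongly convex regularizer. This is the step I expect to require the most care to phrase rigorously.
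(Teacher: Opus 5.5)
Your argument is correct in substance and follows the paper's route. You use the Tsallis-regularized variational form over $\Delta^{n-1}$, KKT stationarity with complementary slackness, the thresholded closed form, and hence $\|\mathbf{a}_i^{(l)}\|_0=\#\{j:\tilde s_{ij}>\tau_i^{(l)}\}$.

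Your Step~2 bracket $\tau_i^{(l)}\in[\max_j\tilde s_{ij}-1/(\alpha-1),\,\max_j\tilde s_{ij})$ supplies the justification that the paper omits when it simply asserts $\|\mathbf{a}_i^{(l)}\|_0\ll n$. You should, however, phrase ``strictly sparse'' conditionally: for $\tilde{\mathbf{s}}_i^{(l)}=\mathbf{0}$ the output is uniform with full support. You are also right to attribute the sharpening to the division in Eq.~\eqref{eq:alsim}. The multiplicative gate of Eq.~\eqref{eq:sim}, which the paper substitutes in its proof, shrinks the spread of $\tilde{\mathbf{s}}_i^{(l)}$ as $\omega_i^{(l)}\to 1$ and therefore makes the output denser, contrary to the paper's closing remark.

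The one real misstep is your ``main obstacle.'' Take $1<\alpha<2$, which includes the paper's $\alpha=1.5$. Then the exponent $1/(\alpha-1)$ exceeds $1$, so each coordinate $[(\alpha-1)(\tilde s_{ij}-\tau)]_+^{1/(\alpha-1)}$, and hence $g$, is continuously differentiable across ties. Moreover $\partial_\tau g=-\sum_j[(\alpha-1)(\tilde s_{ij}-\tau)]_+^{(2-\alpha)/(\alpha-1)}<0$ at the root, because the support is nonempty. Your own implicit-function argument then shows that $\tau_i^{(l)}$ and $\text{entmax}_\alpha$ are $C^1$ everywhere. So you need not retreat to almost-everywhere differentiability; loss of differentiability at support changes happens only for $\alpha\ge 2$ (e.g.\ sparsemax). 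Similarly, your strong-convexity route to Lipschitz continuity is valid only for $\alpha\le 2$. Only in that range does the Tsallis Hessian $\text{diag}(a_j^{\alpha-2})$ dominate the identity on the simplex.
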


\begin{proof}
The $\alpha$-entmax transformation computes the attention distribution by solving a constrained variational optimization problem over the probability simplex $\Delta^{n-1}$:
\begin{equation}
\mathbf{a}_i^{(l)} = \mathop{\arg\max}_{\mathbf{a}_i^{(l)} \in \Delta^{n-1}} \left( \left( {\mathbf{a}_i^{(l)}} \right)^\top \tilde{\mathbf{s}}_i^{(l)} + \mathcal{H}_\alpha^T \left(\mathbf{a}_i^{(l)} \right) \right)
\end{equation}
where the Tsallis $\alpha$-entropy for $\alpha > 1$ is defined as
\begin{equation}
\mathcal{H}_\alpha^T(\mathbf{a}_i^{(l)}) = \frac{1}{\alpha(\alpha-1)} \sum_{j} \left(\mathbf{a}_{ij}^{(l)} - \left( \mathbf{a}_{ij}^{(l)} \right)^\alpha \right).
\end{equation}

To reveal its connection with the $\ell_0$-norm, we formulate the Lagrangian of this optimization problem by incorporating the probability simplex constraints ($\mathbf{1}^\top \mathbf{a}_i^{(l)} = 1$ and $\mathbf{a}_i^{(l)} \ge \mathbf{0}$):
\begin{equation}
\begin{aligned}
\mathcal{L}(\mathbf{a}_i^{(l)}, \rho, \boldsymbol{\mu}) = & \sum_{j=1}^n a_{ij}^{(l)} \tilde{s}_{ij}^{(l)} + \frac{1}{\alpha(\alpha-1)} \sum_{j=1}^n  \left( a_{ij}^{(l)} - \left( a_{ij}^{(l)} \right)^\alpha  \right) \\
& - \rho \left(\sum_{j=1}^n a_{ij}^{(l)} - 1\right) + \sum_{j=1}^n \mu_j a_{ij}^{(l)}
\end{aligned}
\end{equation}
where $\rho \in \mathbb{R}$ and $\boldsymbol{\mu} \in \mathbb{R}_+^n$ are the Lagrange multipliers for the unit-sum and non-negativity constraints, respectively.

Setting the partial derivative with respect to $a_{ij}^{(l)}$ to zero gives the Karush-Kuhn-Tucker (KKT) stationarity condition:
\begin{equation}
\frac{\partial \mathcal{L}}{\partial a_{ij}^{(l)}} = \tilde{s}_{ij}^{(l)} + \frac{1 - \alpha \left(a_{ij}^{(l)}\right)^{\alpha-1}}{\alpha(\alpha-1)} - \rho + \mu_j = 0.
\end{equation}

By the KKT complementary slackness condition, $\mu_j = 0$ whenever $a_{ij}^{(l)} > 0$. Upon solving for $a_{ij}^{(l)}$ under this active support condition, we obtain
\begin{equation}
a_{ij}^{(l)} = \left[ \left(\alpha-1\right) \left( \tilde{s}_{ij}^{(l)} - \rho \right) + \frac{1}{\alpha} \right]^{\frac{1}{\alpha-1}}.
\end{equation}

Letting $\rho^* = (\alpha-1)\rho - \frac{1}{\alpha}$ and applying the truncation operator $[x]_+ = \max(x, 0)$ to enforce nonnegativity, we have the exact closed-form solution:
\begin{equation}
a_{ij}^{(l)} = \left[ \left(\alpha-1\right) \left( \tilde{s}_{ij}^{(l)} \right) - \rho^* \right]_+^{\frac{1}{\alpha-1}}.
\end{equation}

By substituting the similarity $\tilde{{s}}_{ij}^{(l)} = {s}_{ij}^{(l)} \cdot \left( 1 - \omega_i^{(l)} \right)$ in Eq.~\eqref{eq:sim} into the above equation, we obtain
\begin{equation}
a_{ij}^{(l)} = \left[ \left(\alpha-1\right) \left( {s}_{ij}^{(l)} \cdot \left( 1 - \omega_i^{(l)} \right) \right) - \rho^* \right]_+^{\frac{1}{\alpha-1}}.
\end{equation}

The truncation operator $[\cdot]_+$ sets any similarity that is below the threshold $\rho^*$ to zero, which directly bounds the $\ell_0$-norm such that $\|\mathbf{a}_i^{(l)}\|_0 \ll n$. Moreover, as the feature-specific compression factor $\omega_i^{(l)}$ increases toward $1$, the variance of the similarities increases. This results in the threshold operator to prune more connections. Therefore, the number of nonzero entries of the $\ell_0$-norm is continuously controlled through the differentiable simplex projection.
\end{proof}

Theorem~\ref{thm:lzbridge} shows that the $\alpha$-entmax projection onto the probability simplex $\Delta^{n-1}$ provides an exact differentiable relaxation of the discrete $\ell_0$-norm constraint. This eliminates the need for computationally expensive iterative solvers, enabling efficient end-to-end optimization of sparse attention graphs.

%% Table I %%
\begin{table}[t]
\caption{Statistics of the datasets.}
\label{tb:datasets}
\setlength{\tabcolsep}{4pt}
\begin{center}
\begin{scriptsize}
\begin{sc}
\begin{tabular}{lccr}
\toprule
Name  & Train & Test & Classes  \\
\midrule
Pets & 3,680 & 3,669 & 37  \\
KITTI & 5,985 & 1,496 & 4  \\
Flowers & 2,040 & 6,149 & 102   \\
Caltech101  & 3,060 & 6,084 & 102 \\
EuroSAT & 10,000 & 5,000 & 10  \\
SUN397 & 19,850 & 19,850 & 397  \\
Food101 & 75,750 & 25,250 & 101  \\
ImageNet-1K & 1,281,167 & 50, 000 & 1,000  \\
\bottomrule
\end{tabular}
\end{sc}
\end{scriptsize}
\end{center}
\end{table}

\begin{table*}[!htbp]
\tiny
\setlength{\tabcolsep}{2pt}
\centering
\caption{CKA scores for various pretrained model pairs}
\label{tb:cka}
\begin{tabular}{cccc|cccccccc}
\hline
DINOv3 & SigLIP 2 & CLIP ViT-L/14 & ConvNeXt V2 & Pets & KITTI & Flowers & Caltech101 & EuroSAT & SUN397 & Food101 & ImageNet-1K \\
\hline
\checkmark & \checkmark & ~ & ~ & 0.63 & 0.84 & 0.65 & 0.62 & 0.70 & 0.49 & 0.67 & 0.46 \\
\checkmark & ~ & \checkmark & ~ &  0.59 & 0.81 & 0.61 & 0.57 & 0.70 & 0.48 & 0.64 & 0.41 \\
\checkmark & ~ & ~ & \checkmark & 0.06 & 0.19 & 0.15 & 0.07 & 0.16 & 0.07 & 0.04 & 0.06 \\
~ & \checkmark & \checkmark & ~ & 0.85 & 0.86 & 0.76 & 0.79 & 0.70 & 0.75 & 0.78 & 0.79 \\
~ & \checkmark & ~ &  \checkmark& 0.05 & 0.18 & 0.25 & 0.09 & 0.18 & 0.08 & 0.07 & 0.09 \\
\hline
\end{tabular}
\end{table*}

\begin{figure}[!htbp]
\centering
\includegraphics[width=0.6\linewidth]{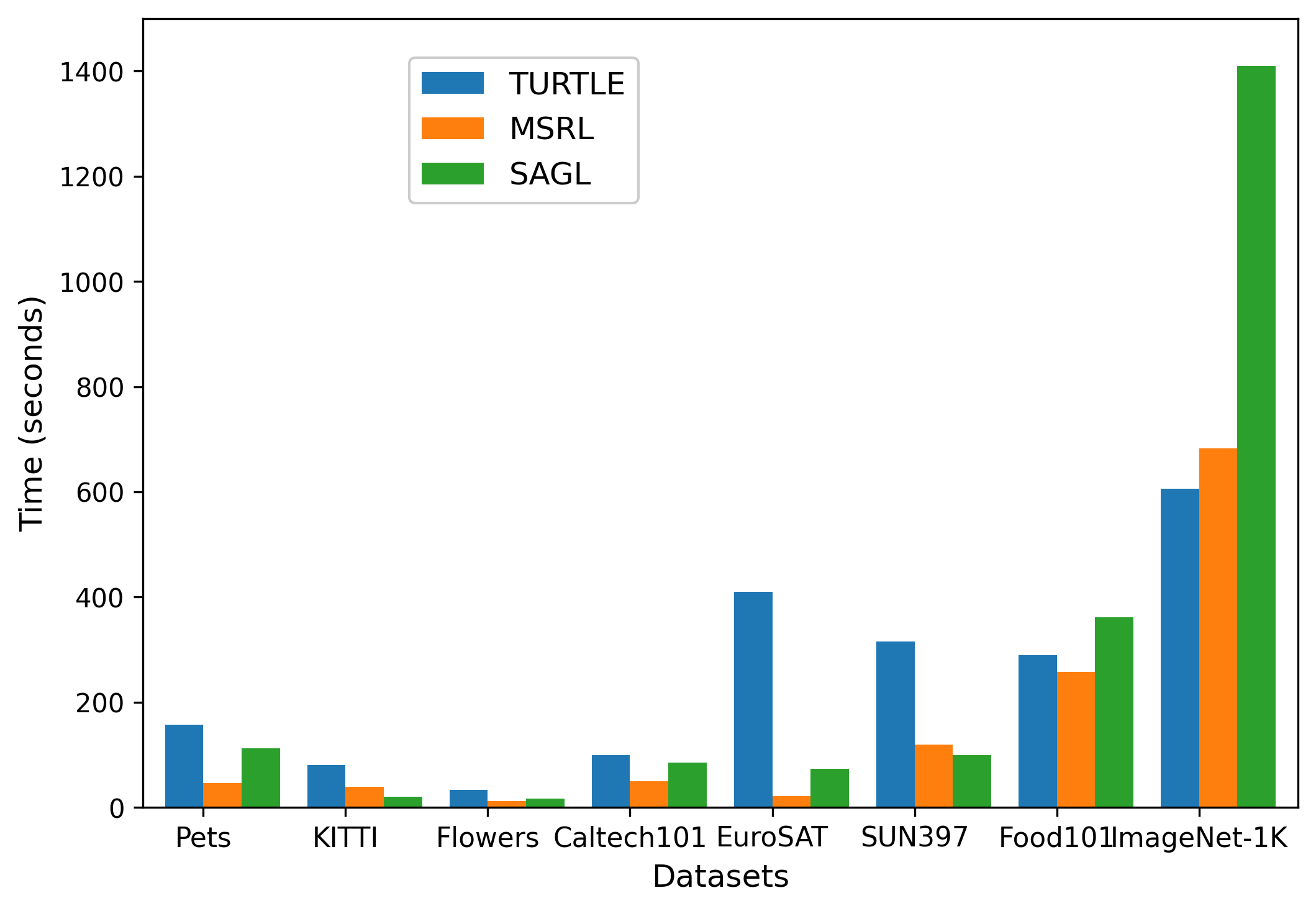}
\caption{Training times (in seconds) of TURTLE, MSRL, and SAGL on eight vision datasets.}
\label{fig:timecost} %% label for entire figure
\end{figure}

\begin{figure*}[htbp]
    \centering
    \subfigure[The original features]{
        \includegraphics[width=0.3\textwidth]{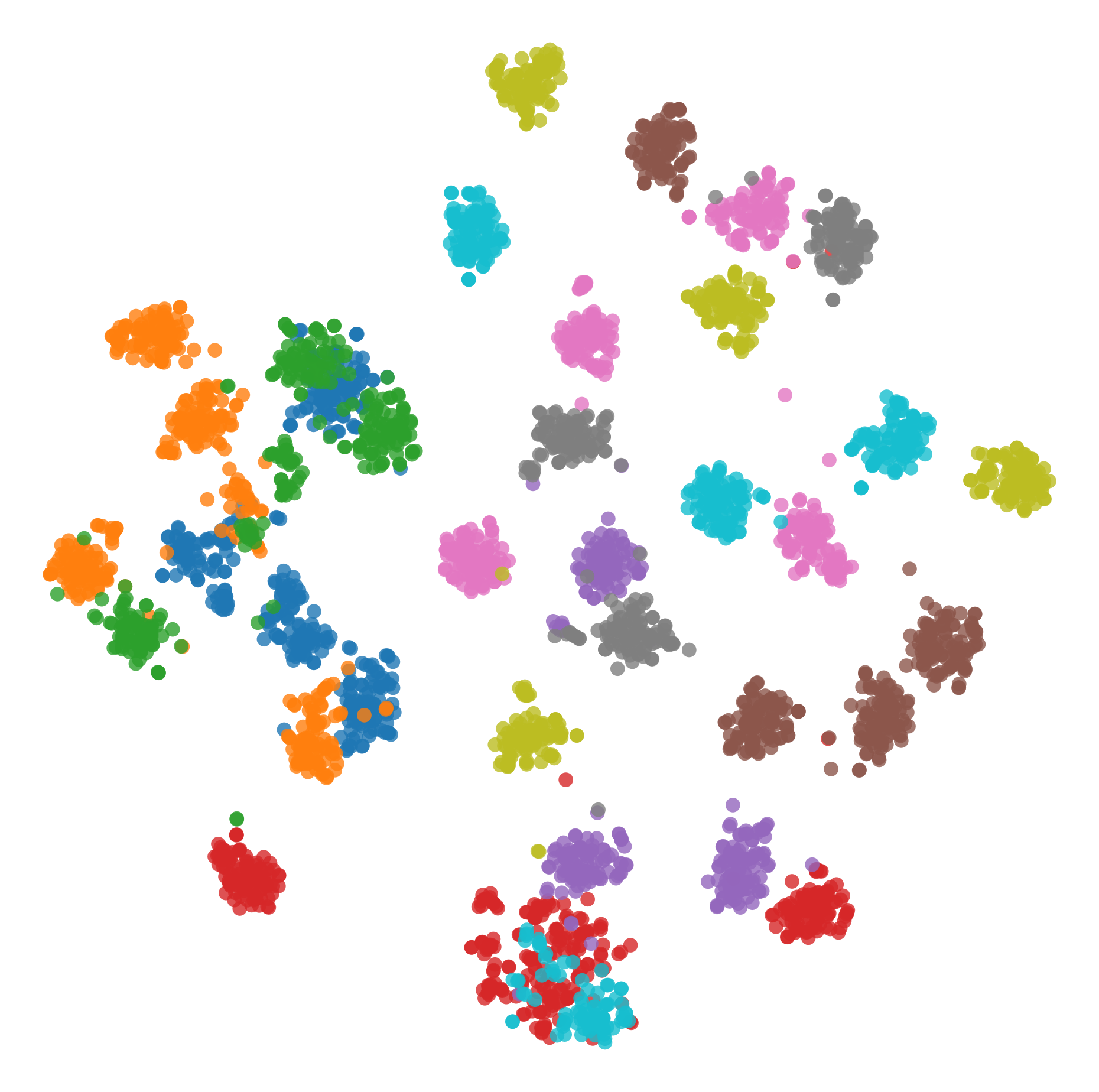}
    }
    \subfigure[The linear features]{
        \includegraphics[width=0.3\textwidth]{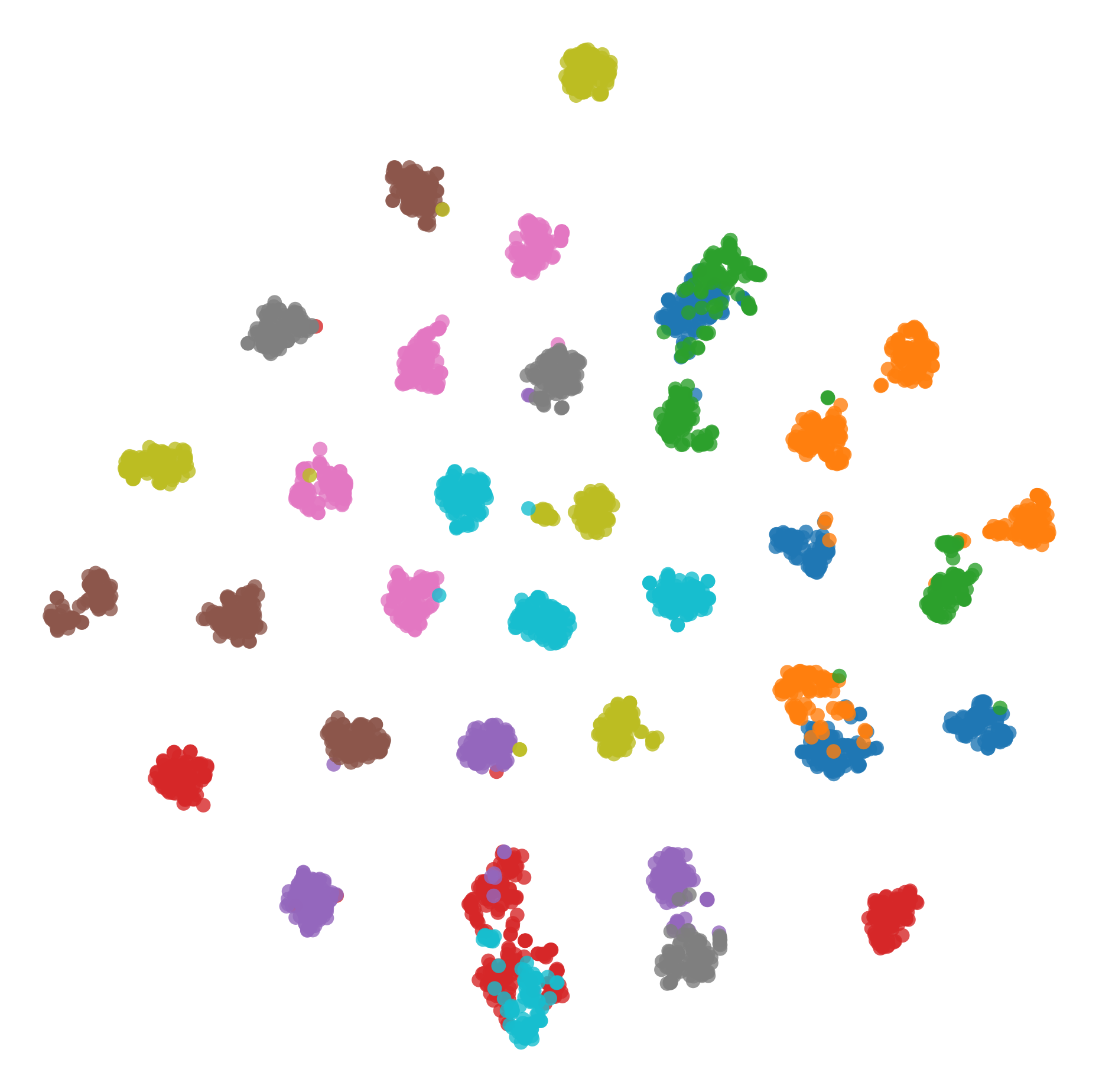}
    }
     \subfigure[The corresponding representations]{
        \includegraphics[width=0.3\textwidth]{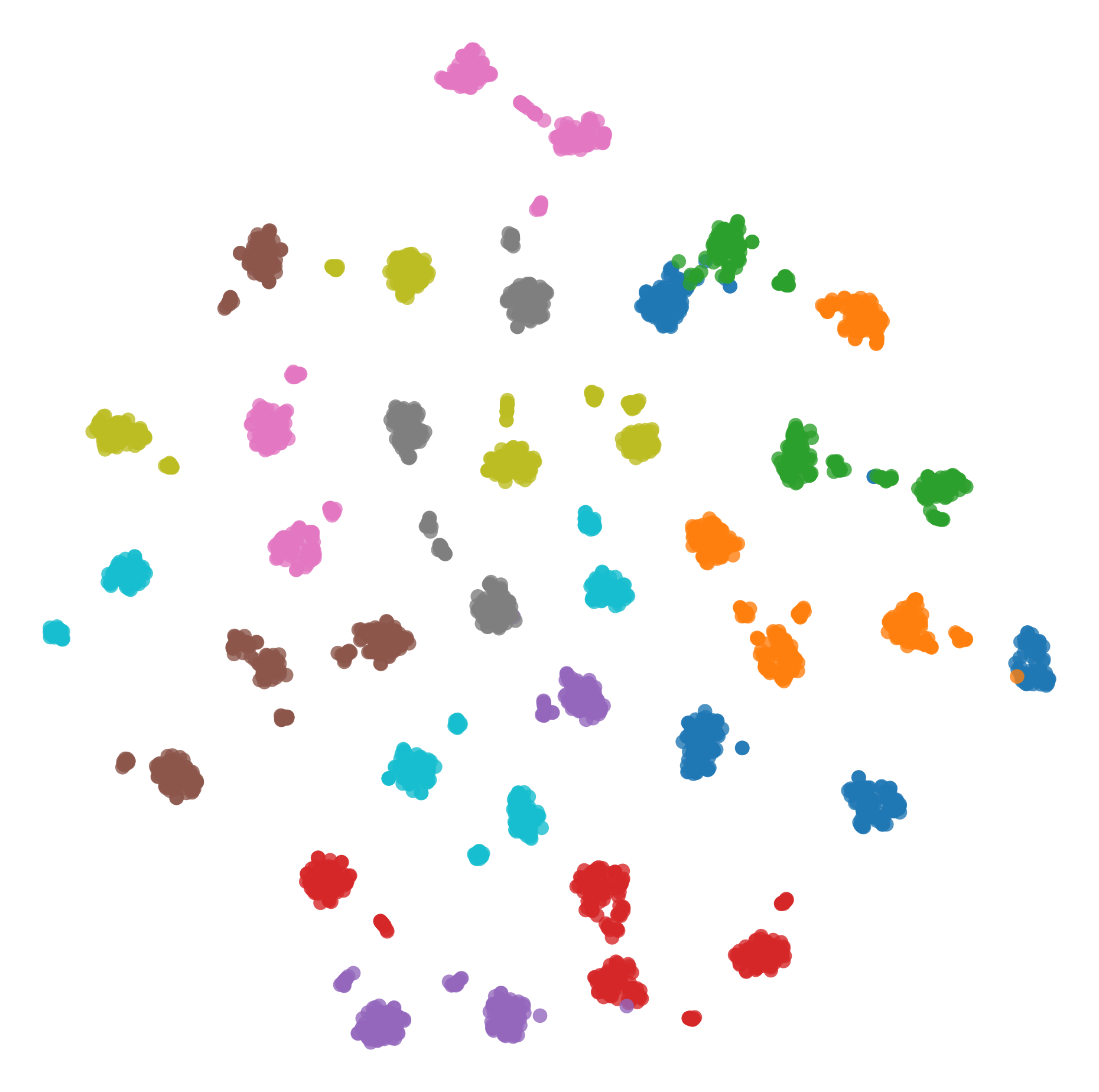}
    }
    \caption{The t-SNE visualization of three levels of features generated by SAGL on the Pets dataset, where the original features are extracted using SigLIP 2.}
    \label{fig:sne:pets1}
\end{figure*}

\begin{figure*}[htbp]
    \centering
    \subfigure[The original features]{
        \includegraphics[width=0.3\textwidth]{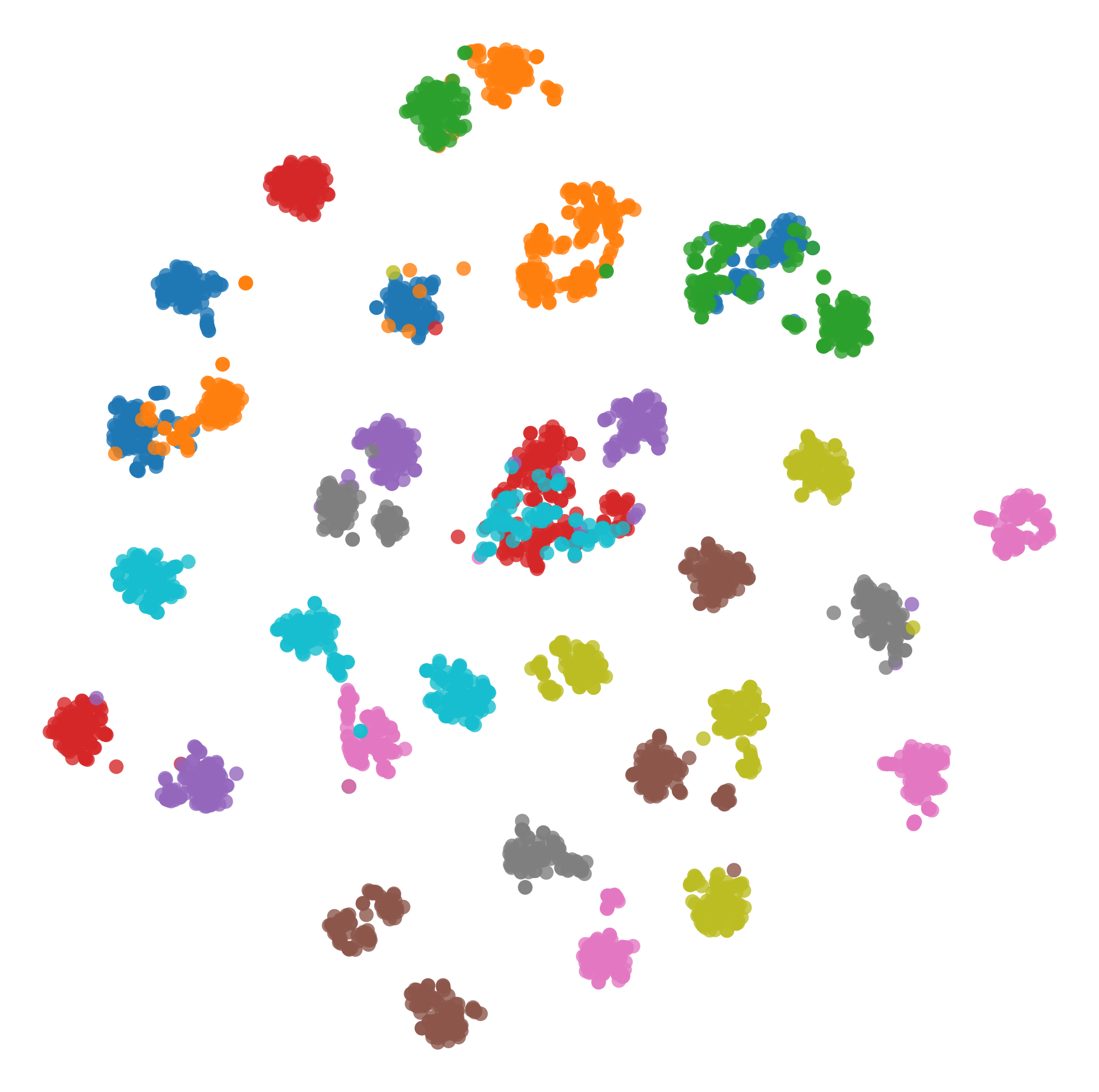}
    }
    \subfigure[The linear features]{
        \includegraphics[width=0.3\textwidth]{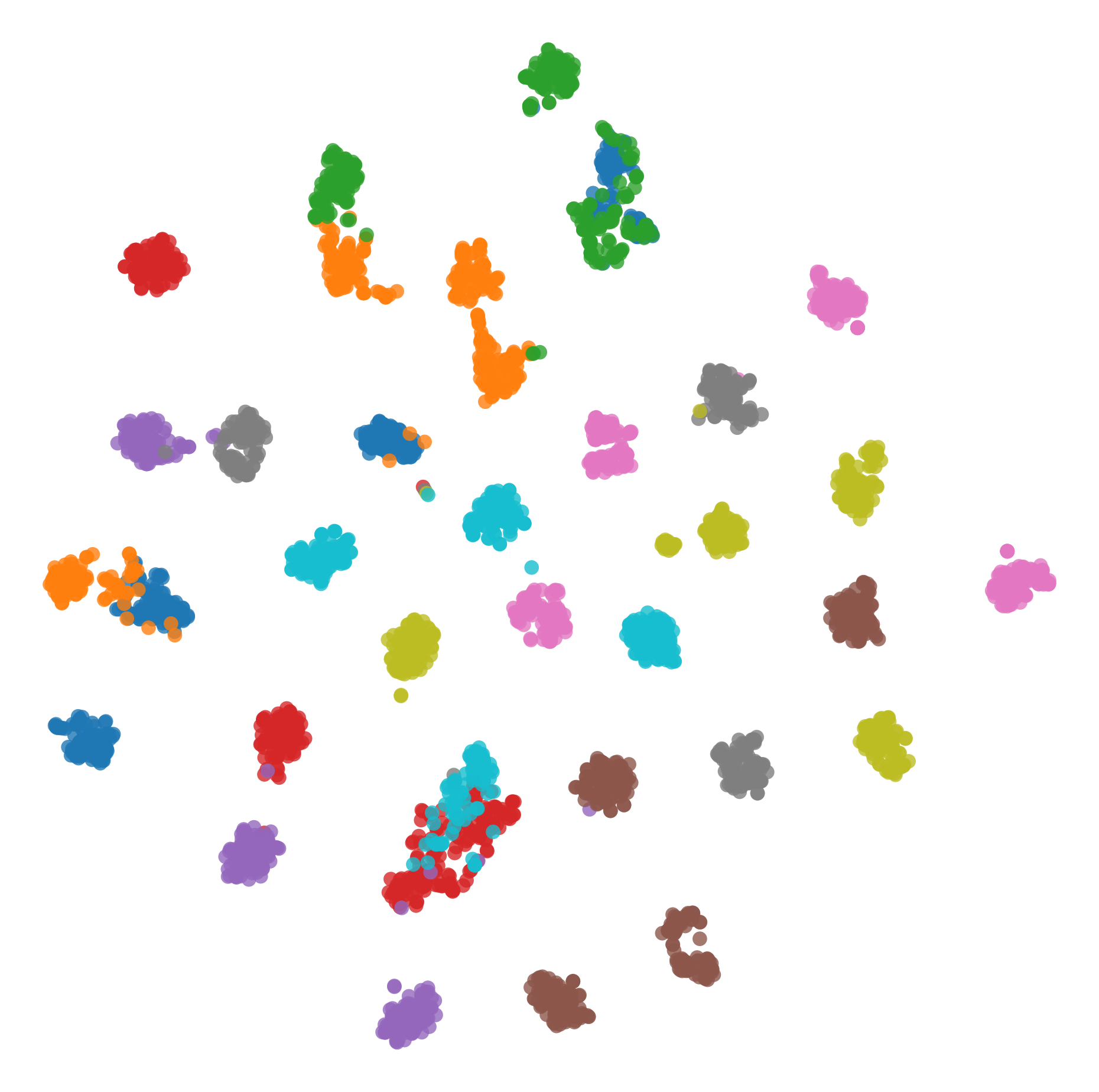}
    }
     \subfigure[The corresponding representations]{
        \includegraphics[width=0.3\textwidth]{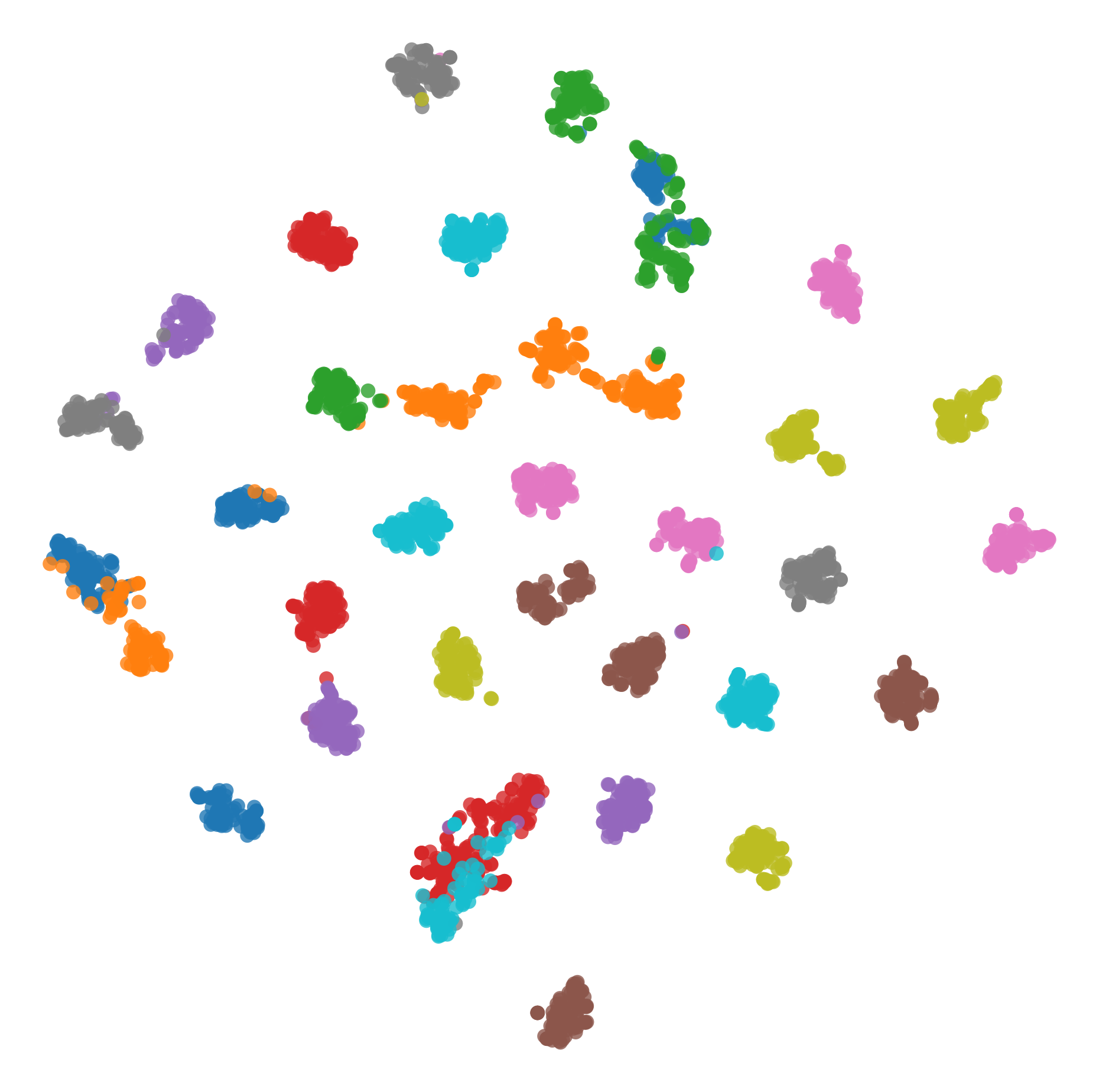}
    }
    \caption{The t-SNE visualization of three levels of features generated by SAGL on the Pets dataset, where the original features are extracted using DINOv3.}
    \label{fig:bs:pets2}
\end{figure*}

\begin{figure*}[htbp]
    \centering
    \subfigure[The original features]{
        \includegraphics[width=0.3\textwidth]{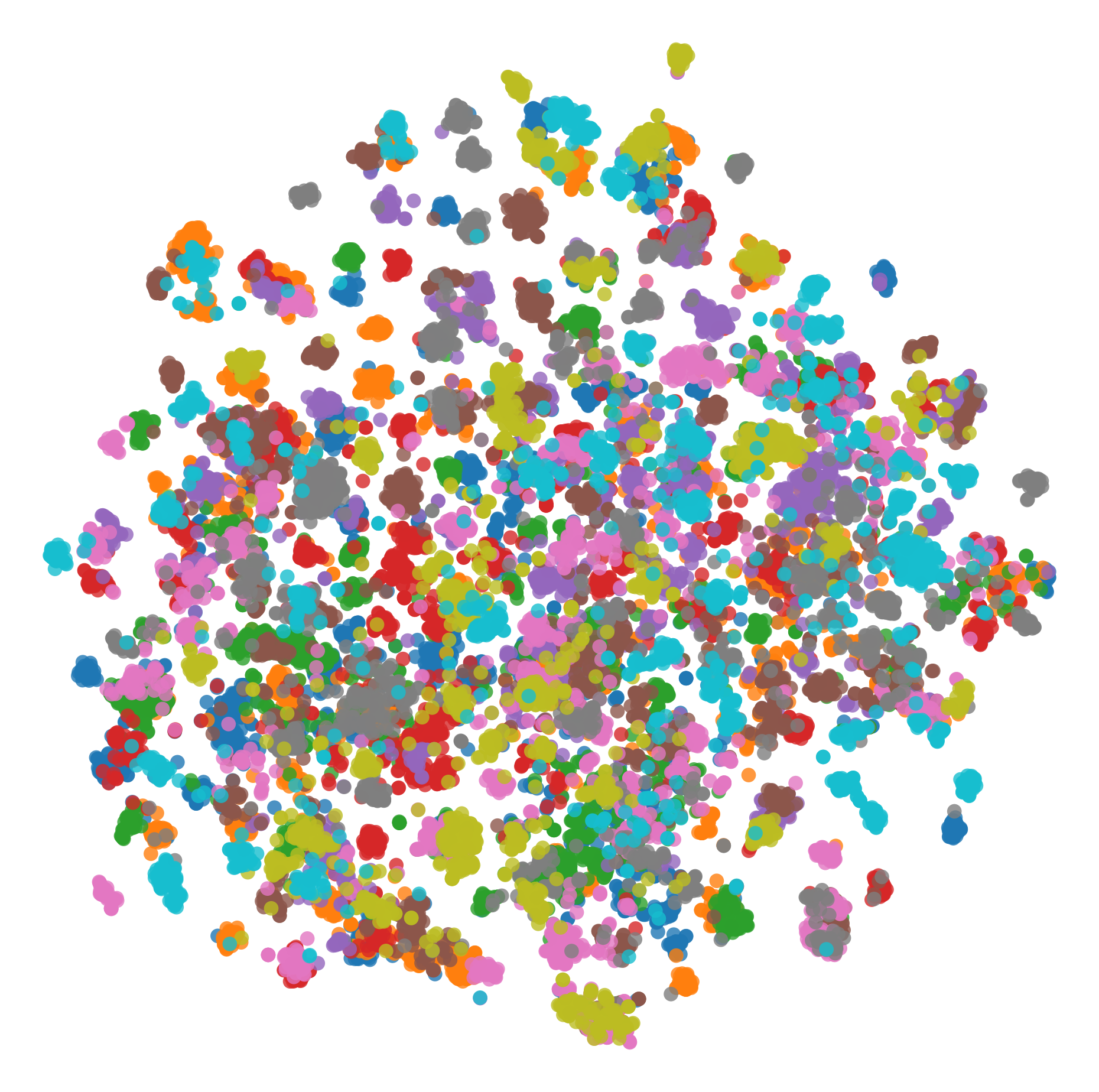}
    }
    \subfigure[The linear features]{
        \includegraphics[width=0.3\textwidth]{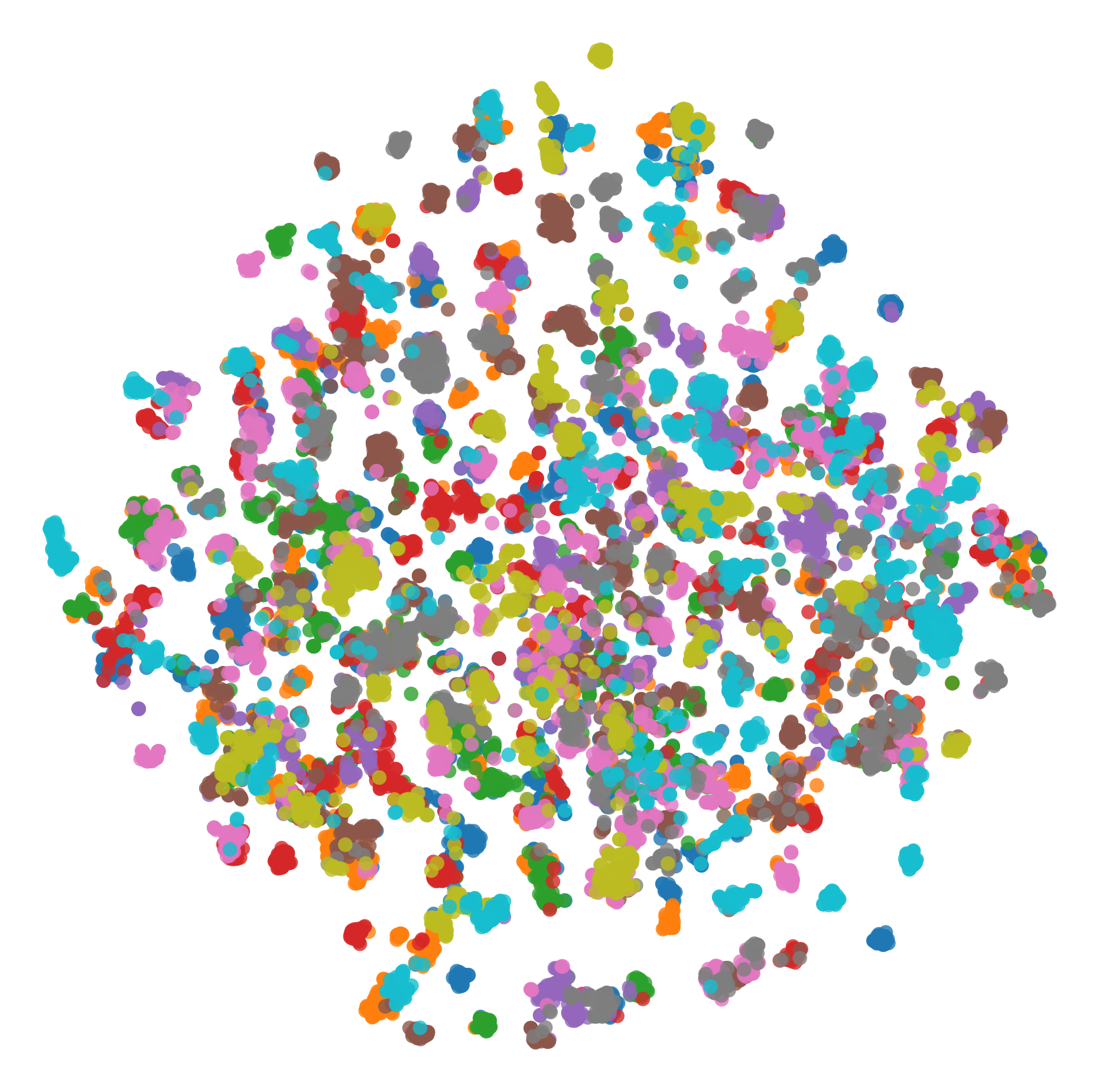}
    }
     \subfigure[The corresponding representations]{
        \includegraphics[width=0.3\textwidth]{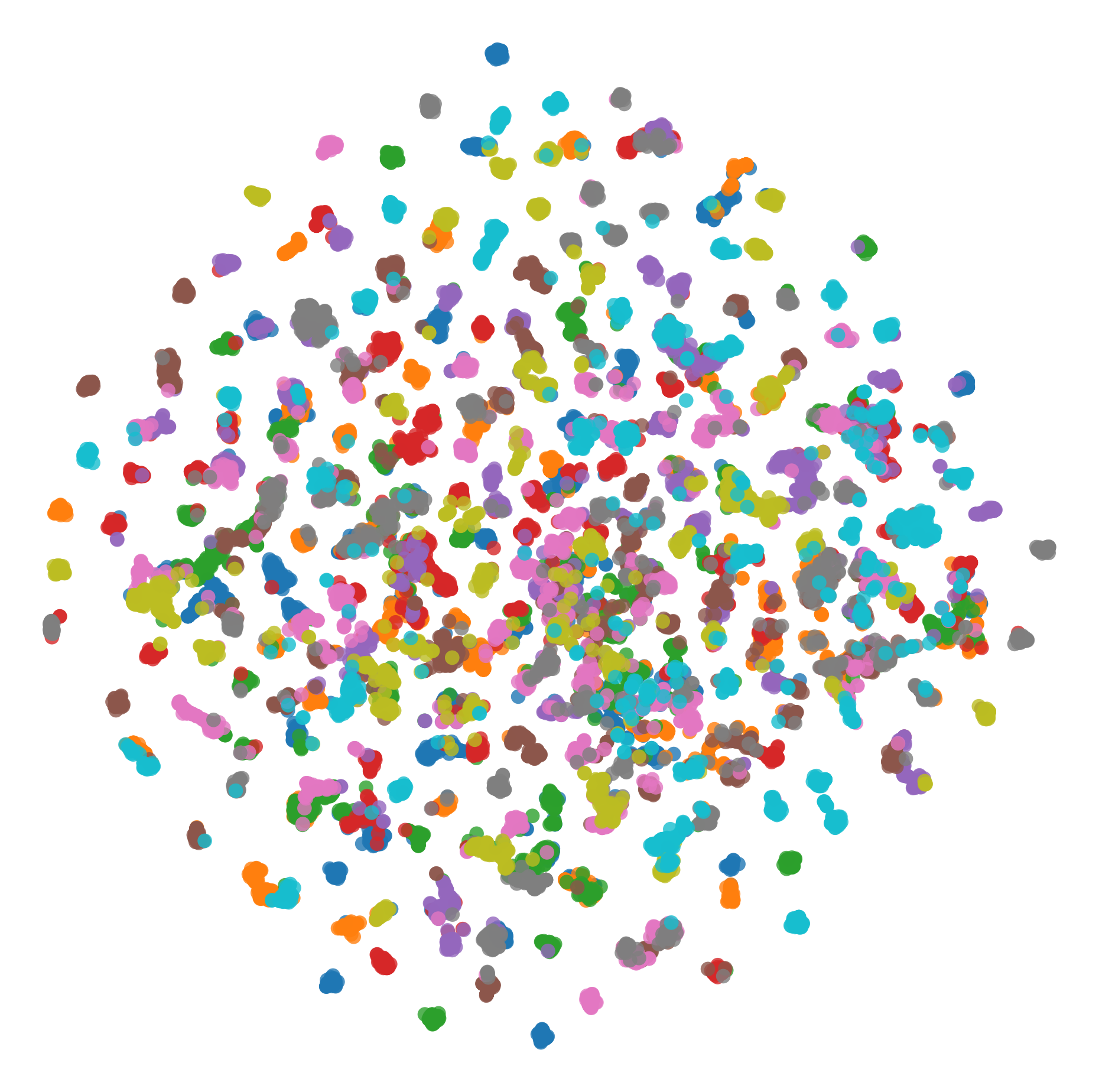}
    }
    \caption{The t-SNE visualization of three levels of features generated by SAGL on the SUN397 dataset, where the original features are extracted using SigLIP 2.}
    \label{fig:sne:sun1}
\end{figure*}

\begin{figure*}[htbp]
    \centering
    \subfigure[The original features]{
        \includegraphics[width=0.3\textwidth]{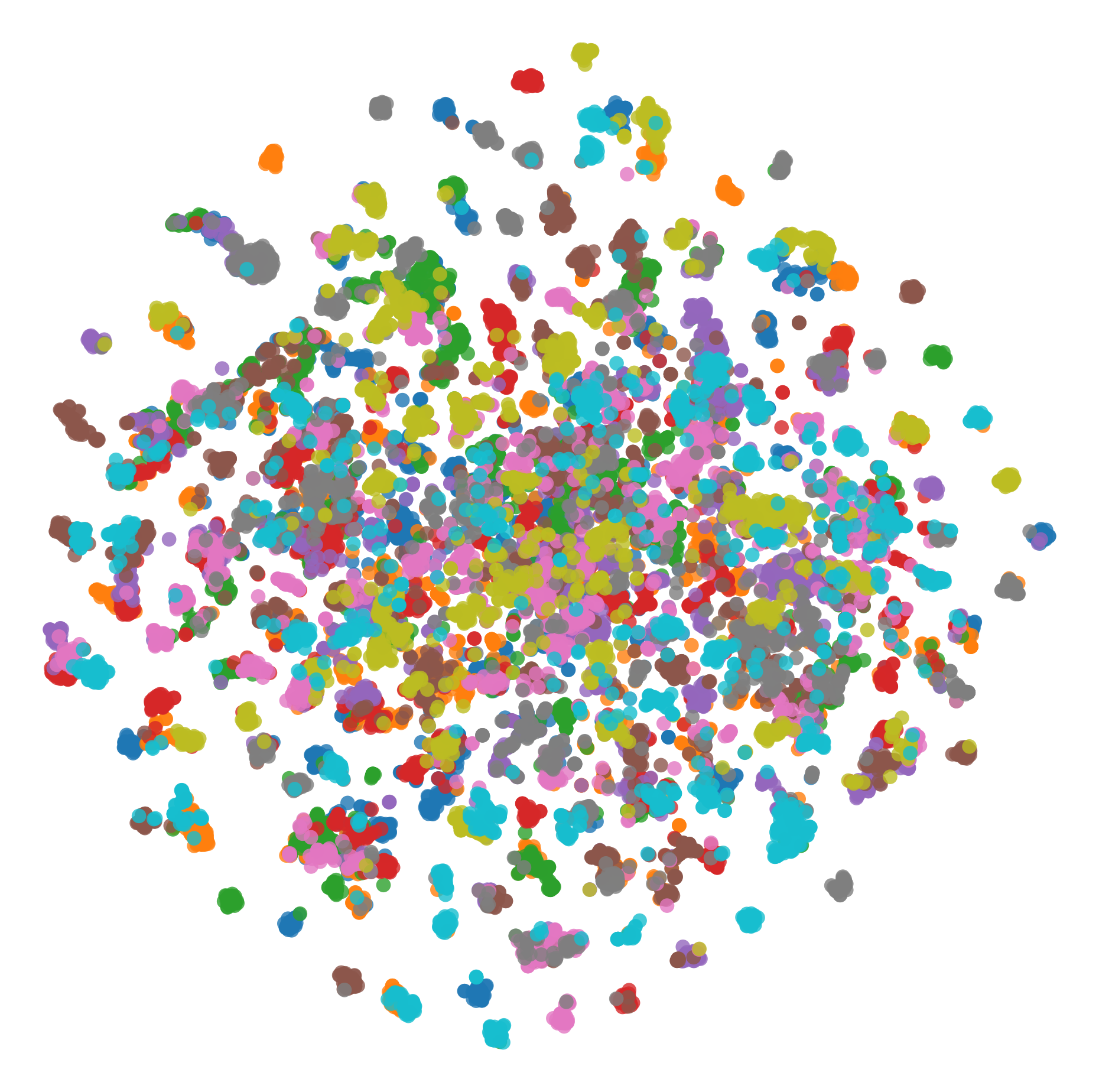}
    }
    \subfigure[The linear features]{
        \includegraphics[width=0.3\textwidth]{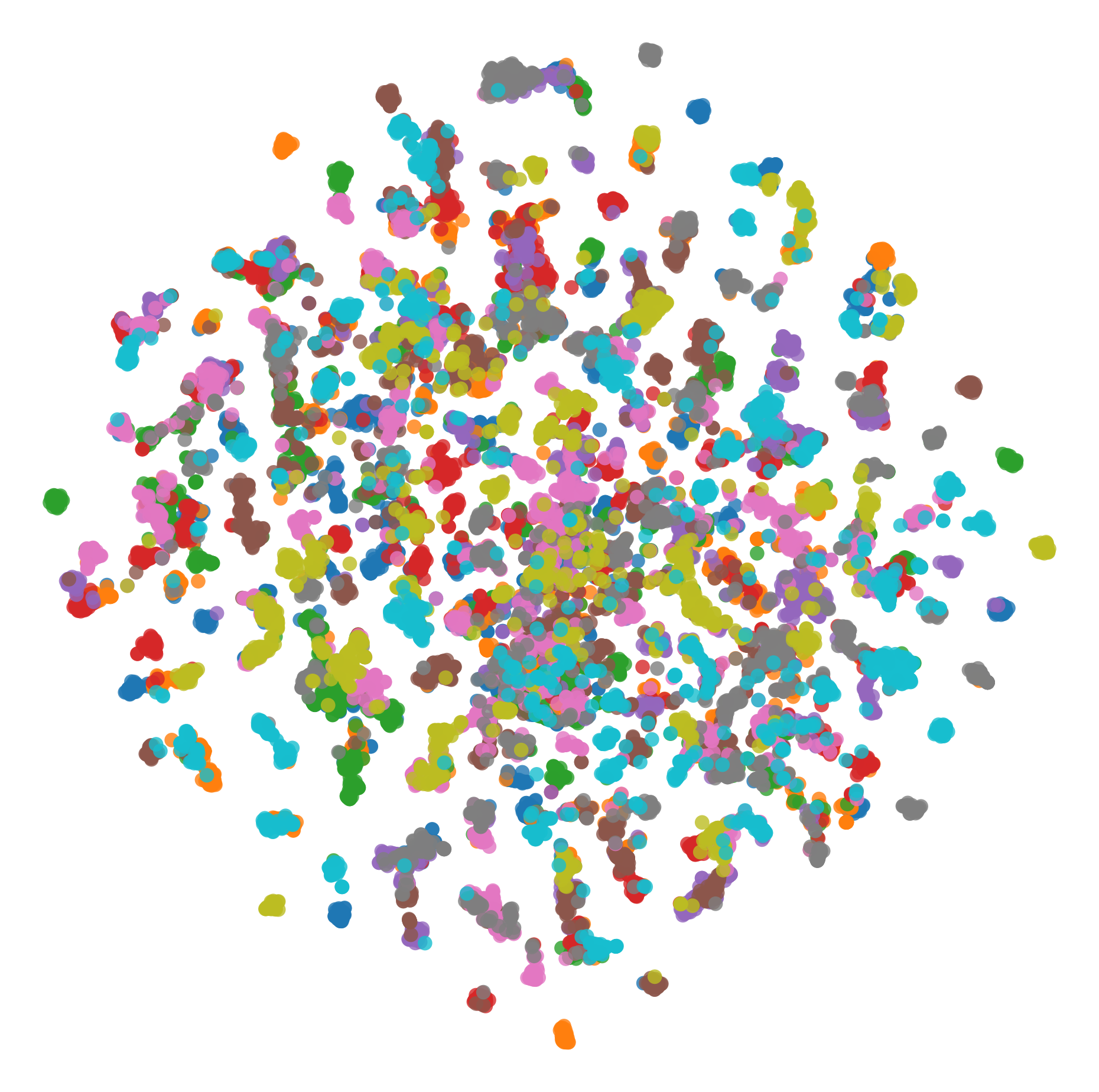}
    }
     \subfigure[The reconstructed representations]{
        \includegraphics[width=0.3\textwidth]{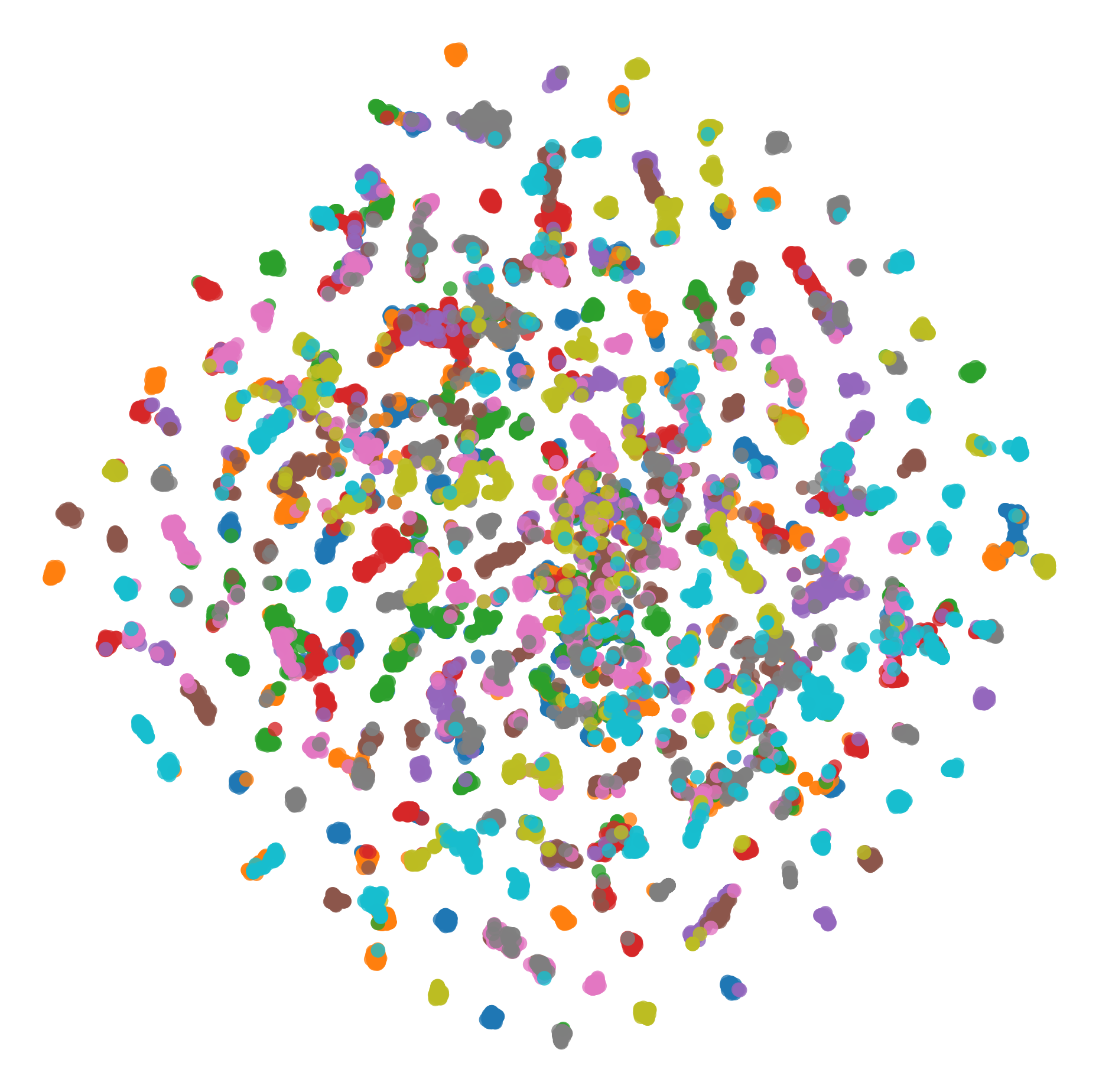}
    }
    \caption{The t-SNE visualization of three levels of features generated by SAGL on the SUN397 dataset, where the original features are extracted using DINOv3.}
    \label{fig:bs:sun2}
\end{figure*}

\begin{figure*}[htbp]
    \centering
    \subfigure[The original features]{
        \includegraphics[width=0.3\textwidth]{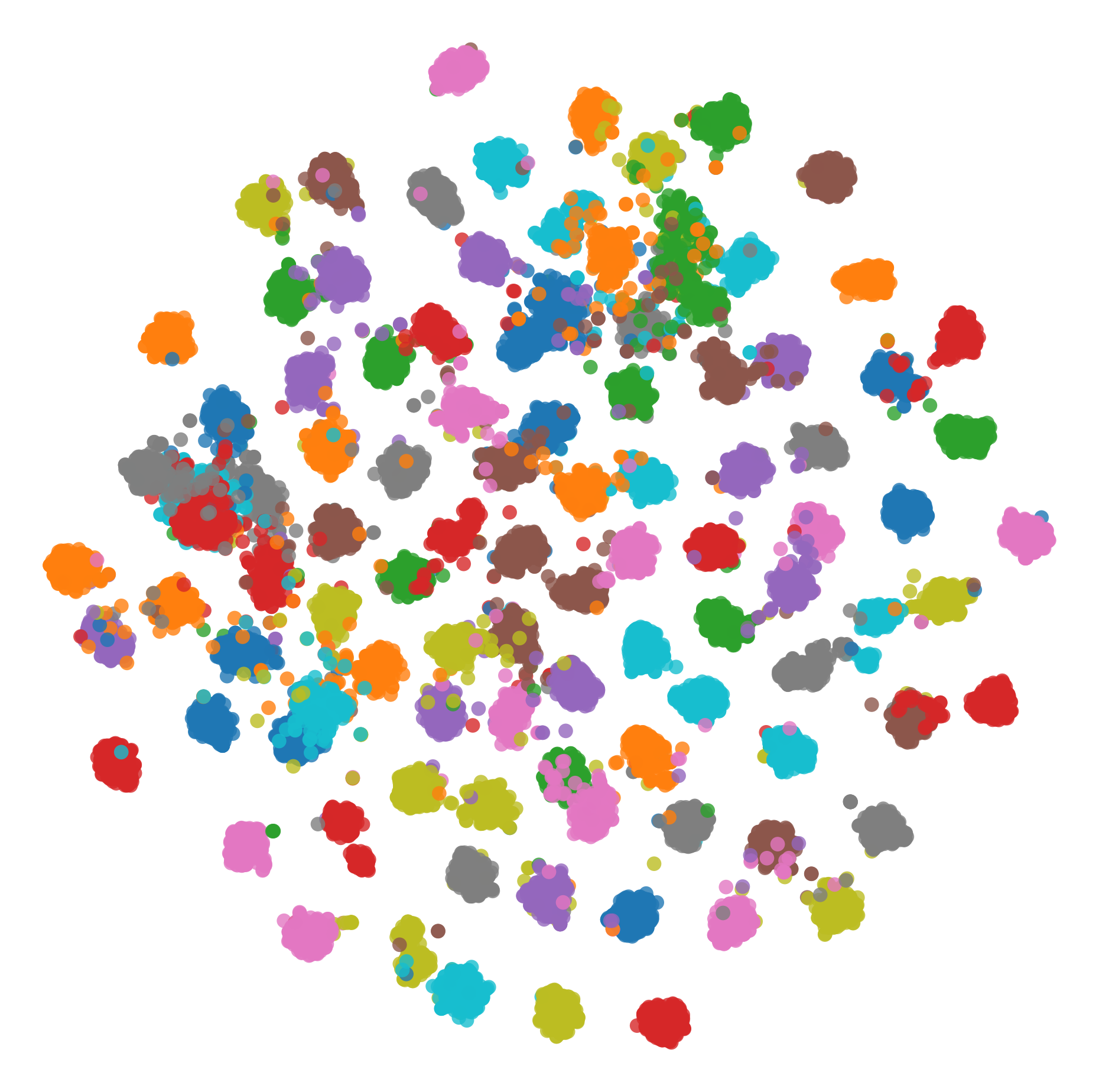}
    }
    \subfigure[The linear features]{
        \includegraphics[width=0.3\textwidth]{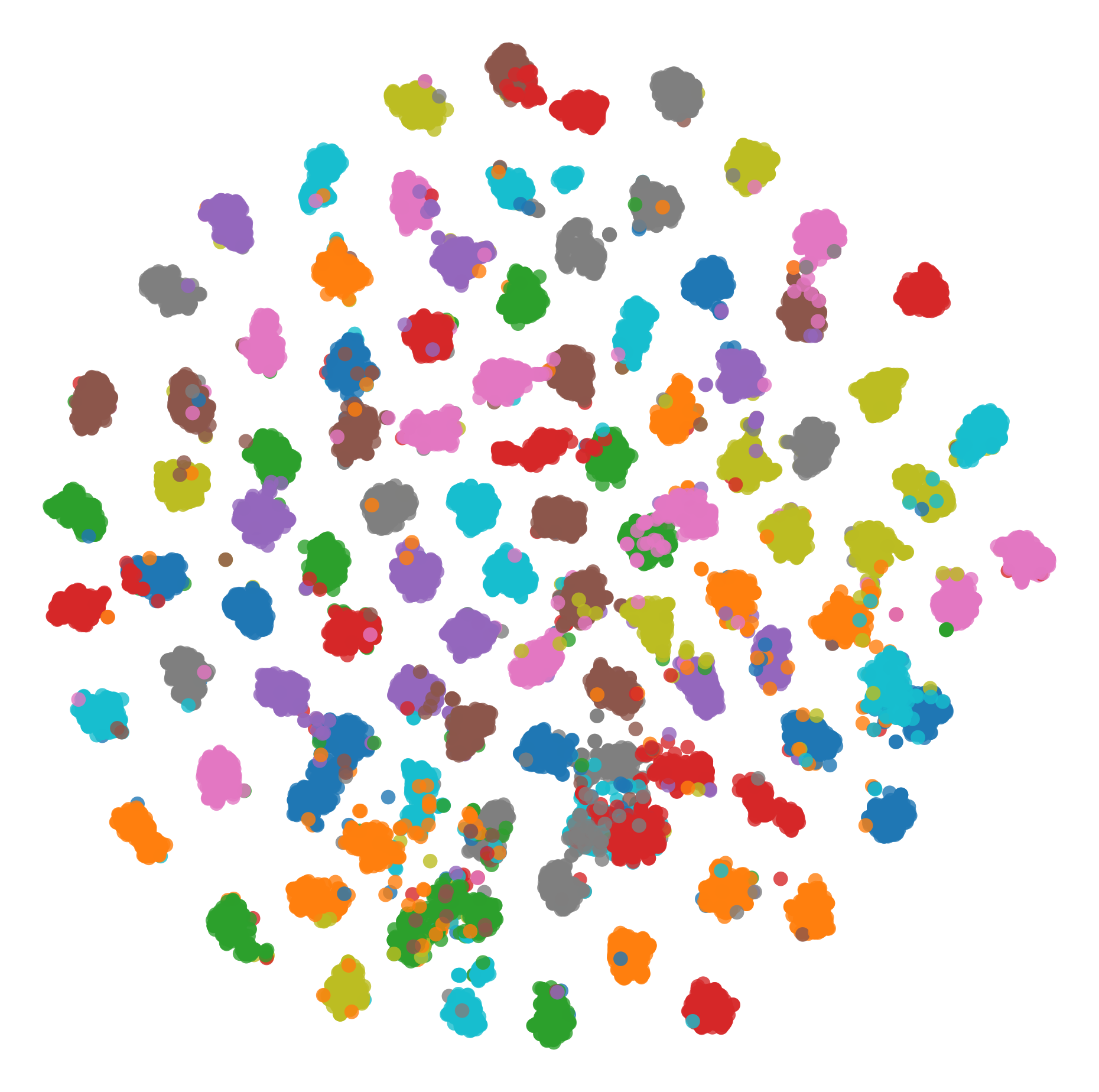}
    }
     \subfigure[The reconstructed representations]{
        \includegraphics[width=0.3\textwidth]{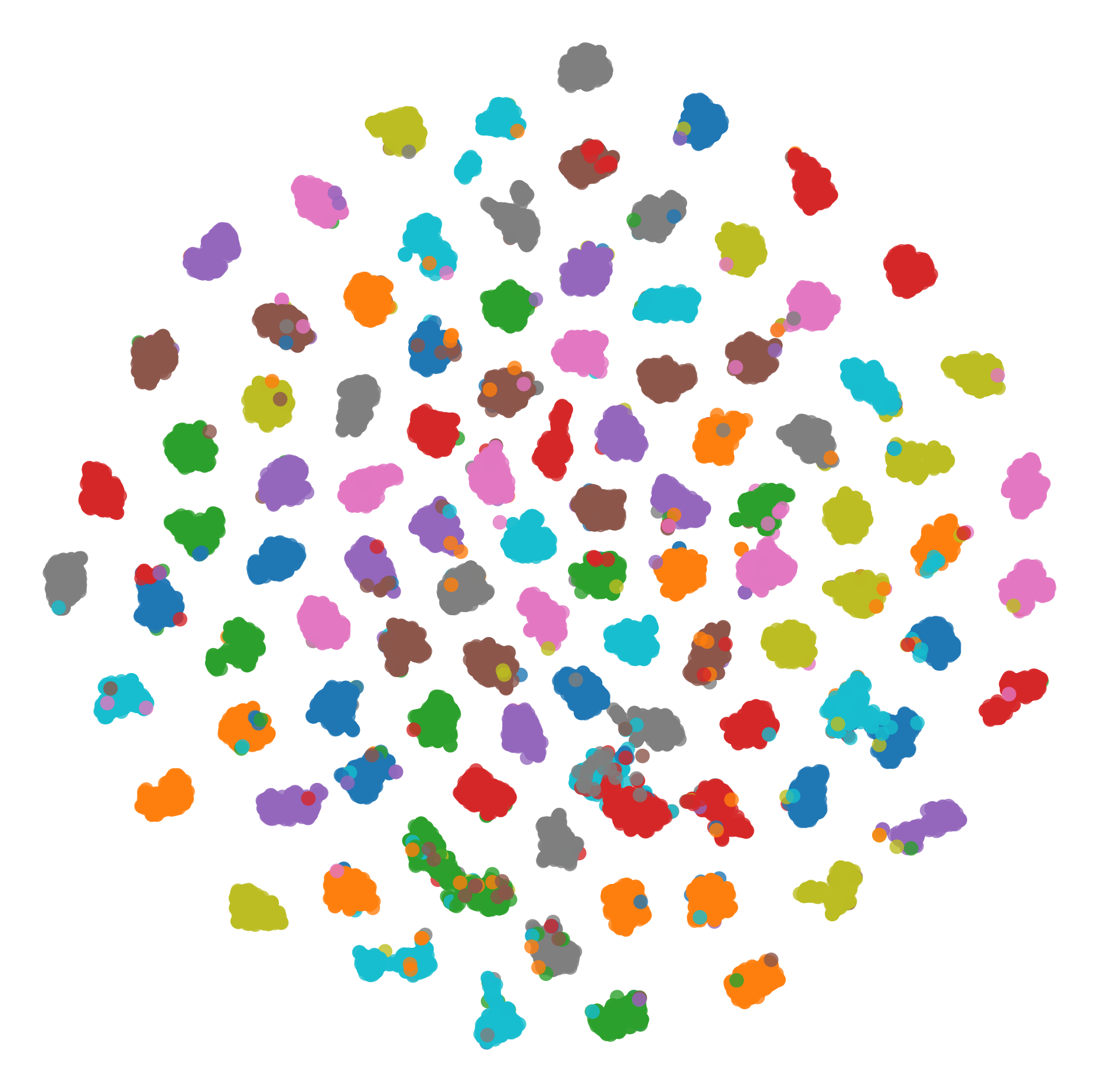}
    }
    \caption{The t-SNE visualization of three levels of features generated by SAGL on the Food101 dataset, where the original features are extracted using SigLIP 2.}
    \label{fig:sne:food1}
\end{figure*}

\begin{figure*}[htbp]
    \centering
    \subfigure[The original features]{
        \includegraphics[width=0.3\textwidth]{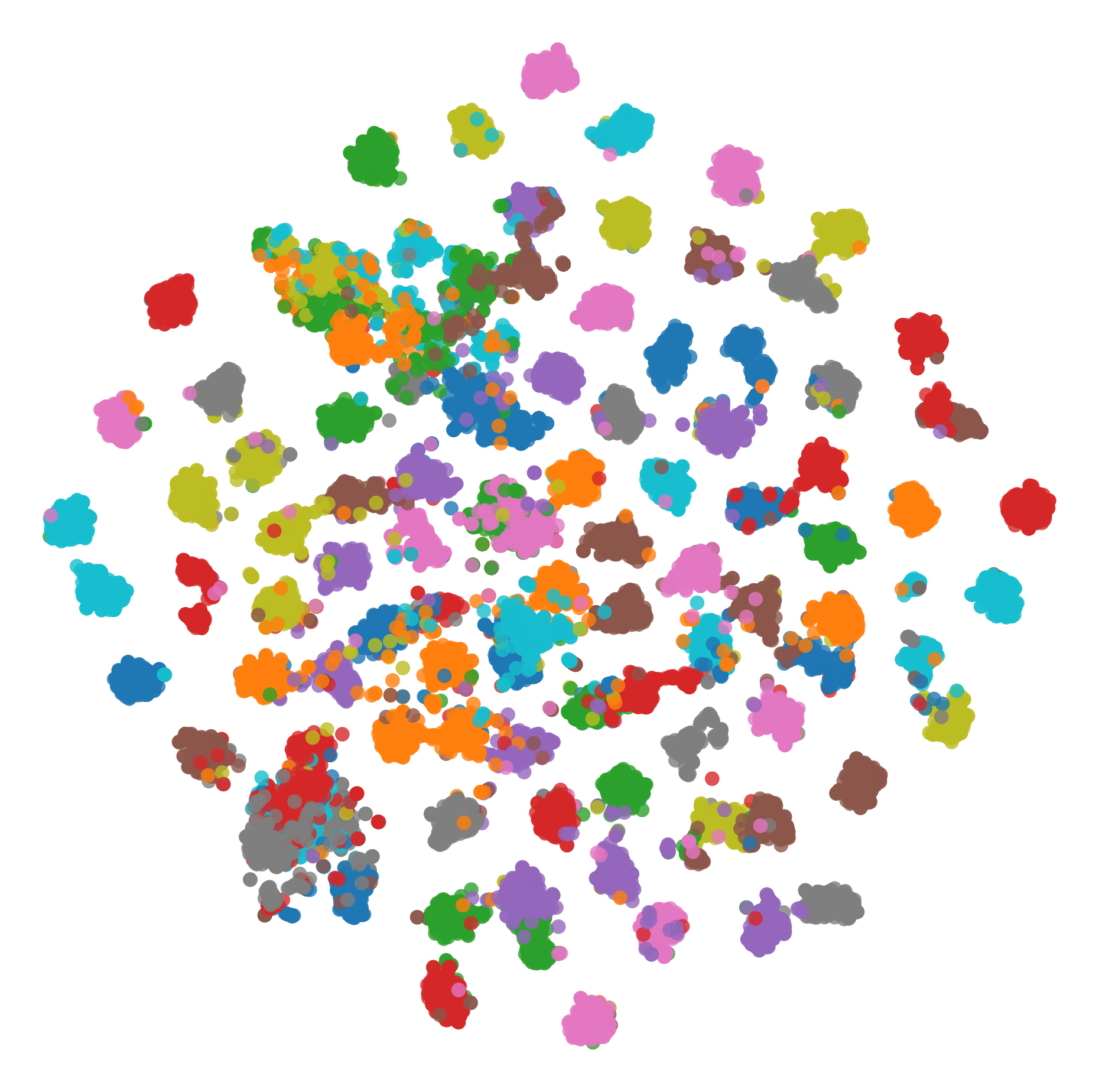}
    }
    \subfigure[The linear features]{
        \includegraphics[width=0.3\textwidth]{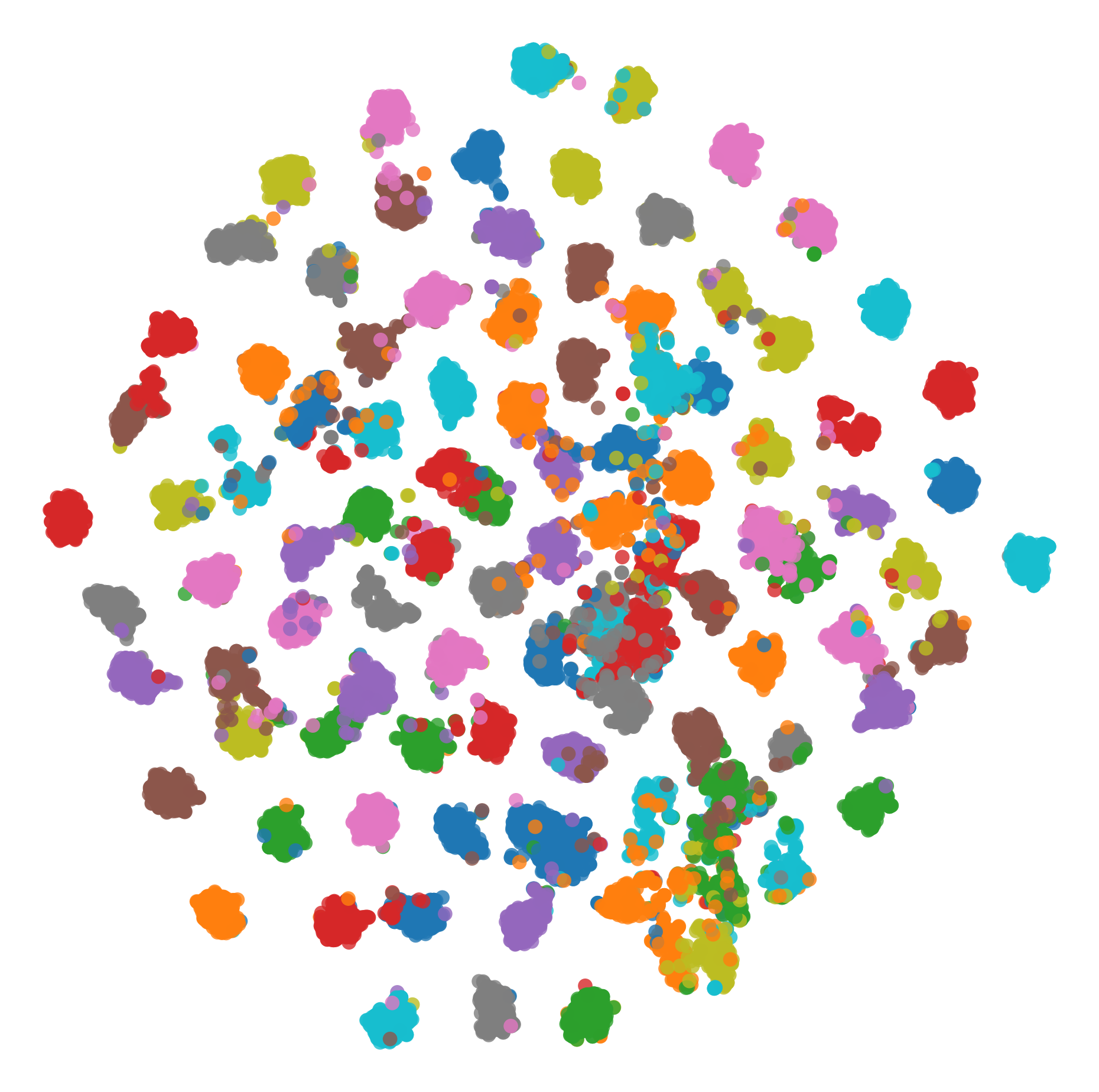}
    }
     \subfigure[The reconstructed representations]{
        \includegraphics[width=0.3\textwidth]{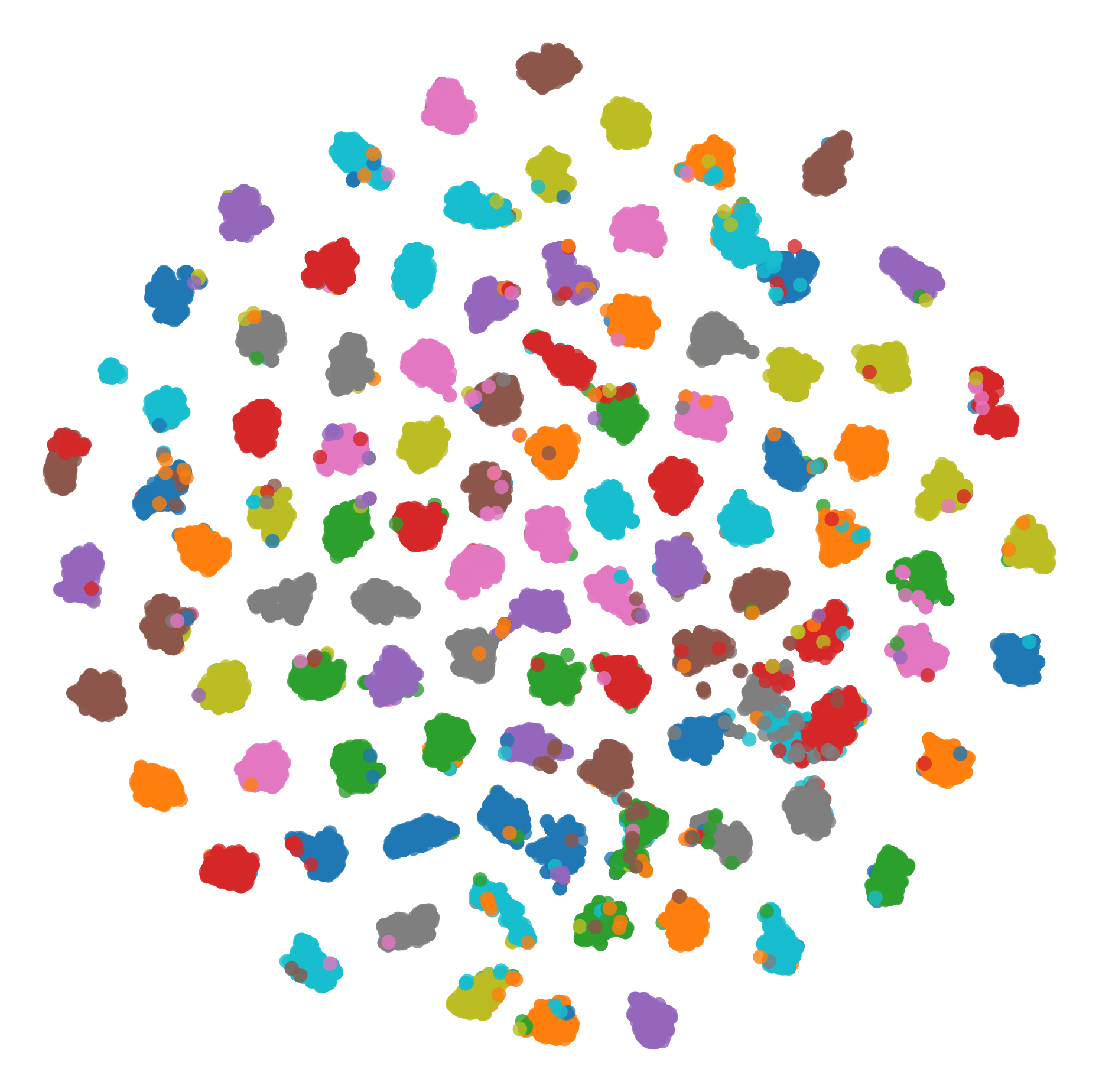}
    }
    \caption{The t-SNE visualization of three levels of features generated by SAGL on the Food101 dataset, where the original features are extracted using DINOv3.}
    \label{fig:sne:food2}
\end{figure*}

\subsection{Experimental Details}
The statistics of the datasets are summarized in Table~\ref{tb:datasets}. All experiments are performed on a Linux workstation equipped with a GeForce RTX 4090 GPU (24 GB memory), an Intel Xeon Platinum 8336C CPU, and 128 GB of RAM. SAGL is implemented in PyTorch \cite{Paszke2019Pytorch}.

\subsubsection{Parameter Settings}
\label{sec:paramsetting}
During both training and testing, the learning rate for the proposed SAGL model is empirically set to $5 \times 10^{-4}$ for the KITTI, Flowers, Food101 and ImageNet-1K datasets and to  $1 \times 10^{-3}$  for all other datasets. The batch size for both training and testing is selected from the set $\{100, 500, 1{,}000, 5{,}000, 10{,}000\}$. Specifically, the batch size is determined by scaling the number of clusters by an approximate multiplier $a \in \{5, 10, 20, 50, 100\}$. For datasets with a relatively small number of clusters, we apply smaller multipliers (e.g., 5, 10, or 20), whereas larger multipliers are used otherwise. The proposed SAGL model is trained for 600 epochs. The dropout rate is selected from the set $\{0.0, 0.1, 0.2, 0.3, 0.4\}$, with lower values applied to smaller datasets and higher values applied to larger datasets. We employ a grid search strategy to determine the optimal combination of hyperparameter values, selecting $\gamma$ from the set $\{5.0, 10.0, 20.0\}$, and $\beta$ from the set $\{0.2, 0.5, 1.0\}$. We report the best-tuned performance for all competing methods to ensure a fair comparison.

\begin{figure*}[htbp]
    \centering
    \subfigure[Caltech101 View 1]{
        \includegraphics[width=0.23\textwidth]{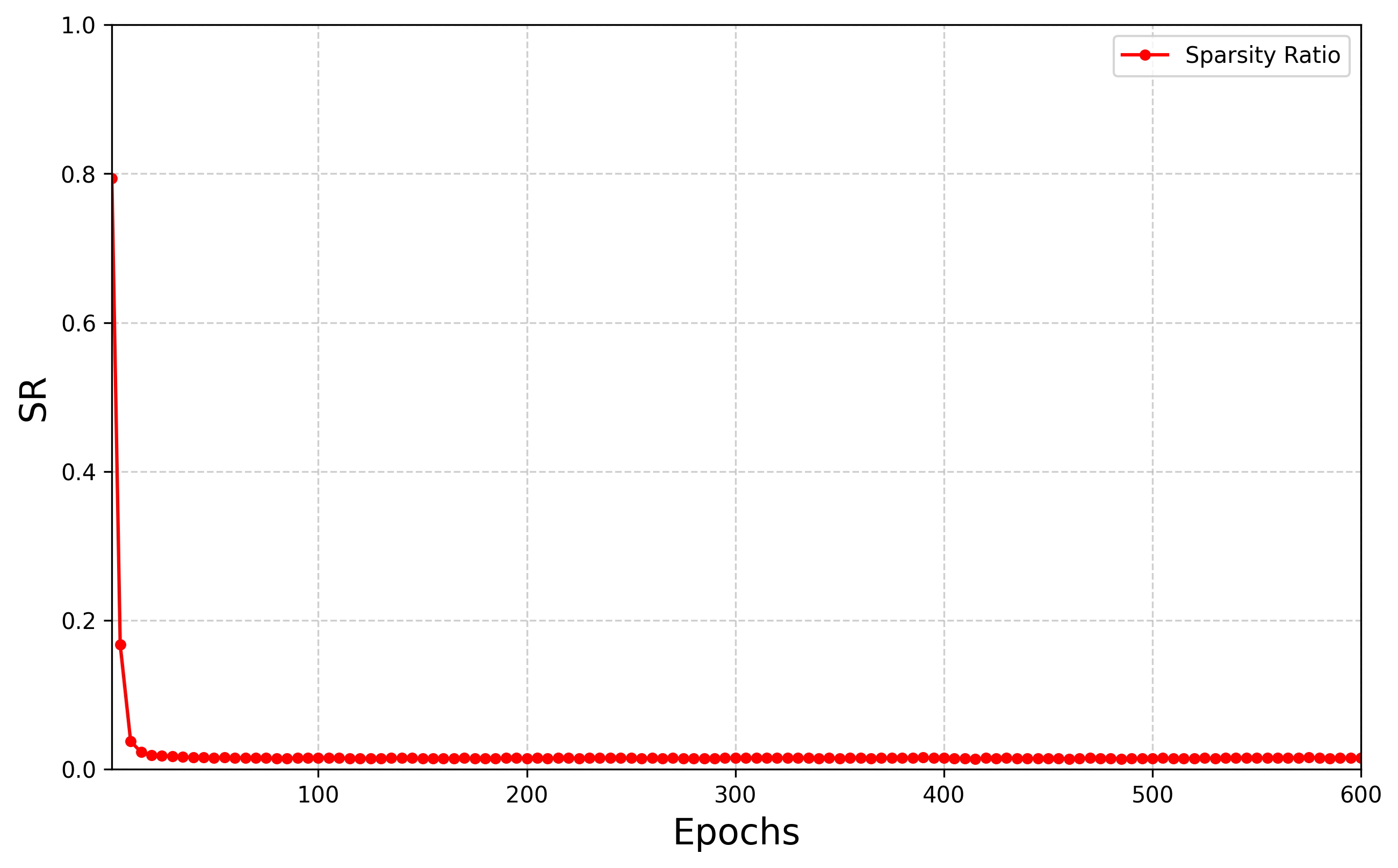}
    }
    \subfigure[Caltech101 View 2]{
        \includegraphics[width=0.23\textwidth]{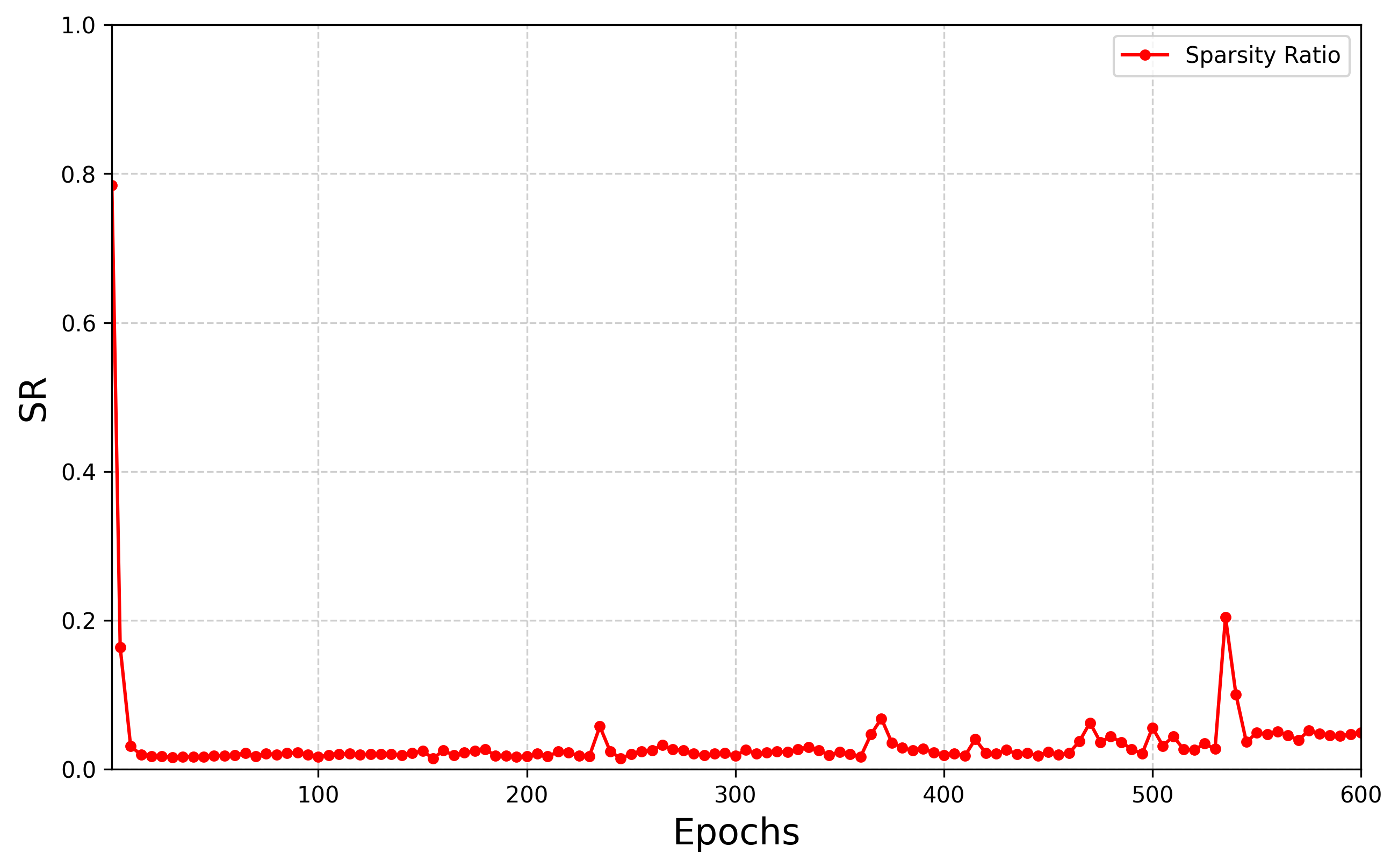}
    }
    \subfigure[Food101 View 1]{
        \includegraphics[width=0.23\textwidth]{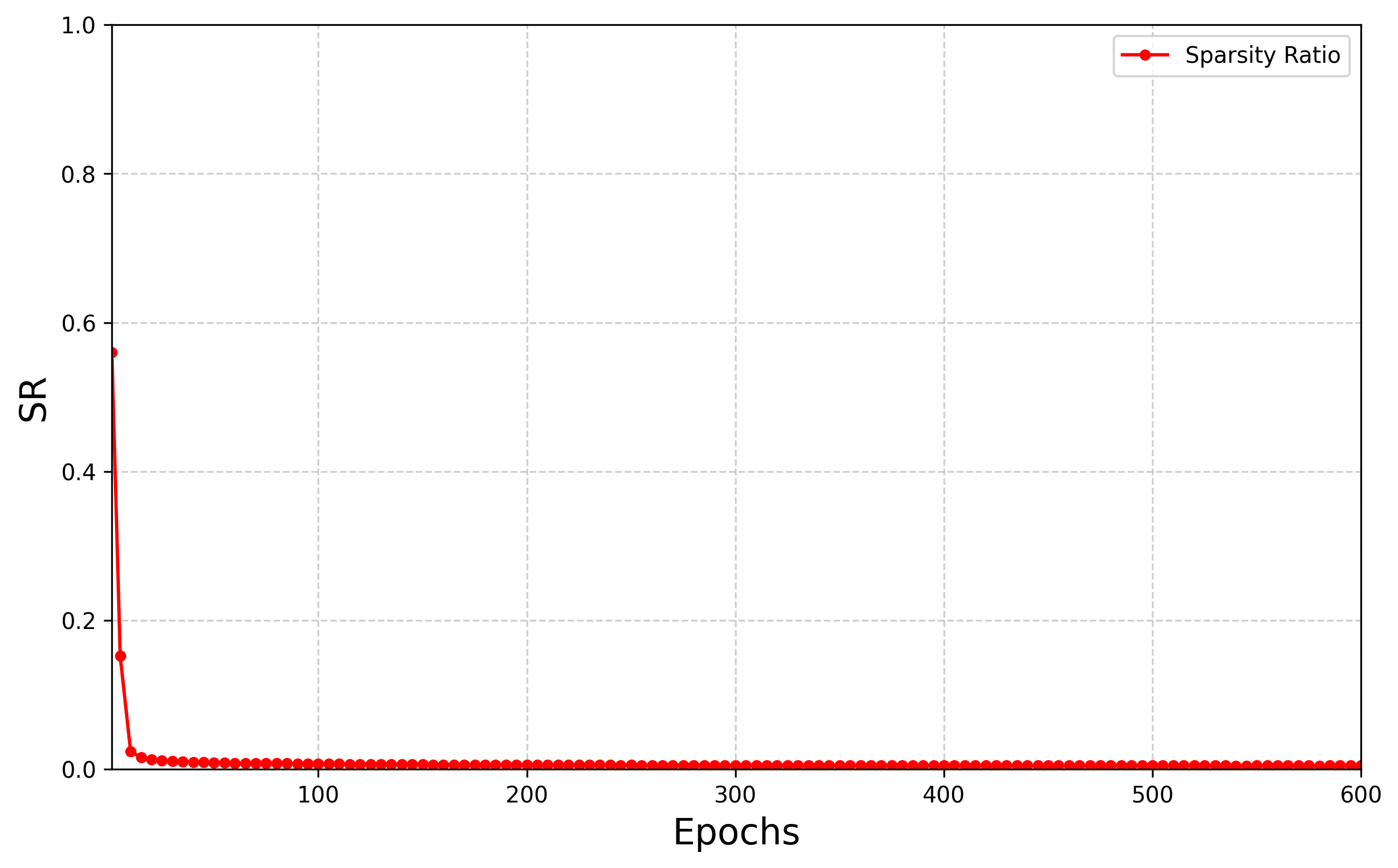}
    }
    \subfigure[Food101 View 2]{
        \includegraphics[width=0.23\textwidth]{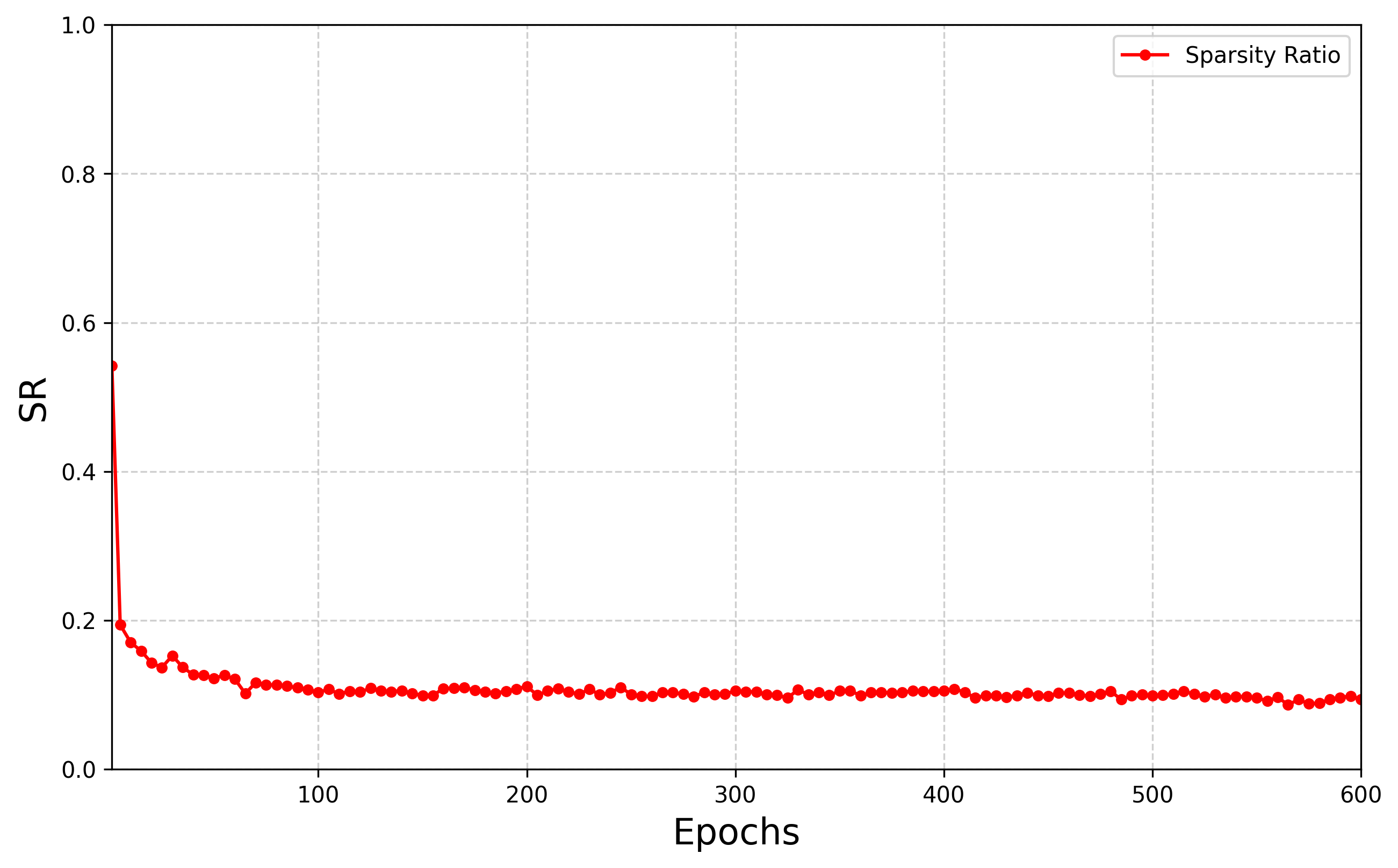}
    }
    \caption{Sparsity ratio evolution of sparse attention graphs during training.}
    \label{fig:sparsity:training:view}
\end{figure*}

\begin{figure*}[htbp]
    \centering
    \subfigure[Caltech101 View 1]{
        \includegraphics[width=0.23\textwidth]{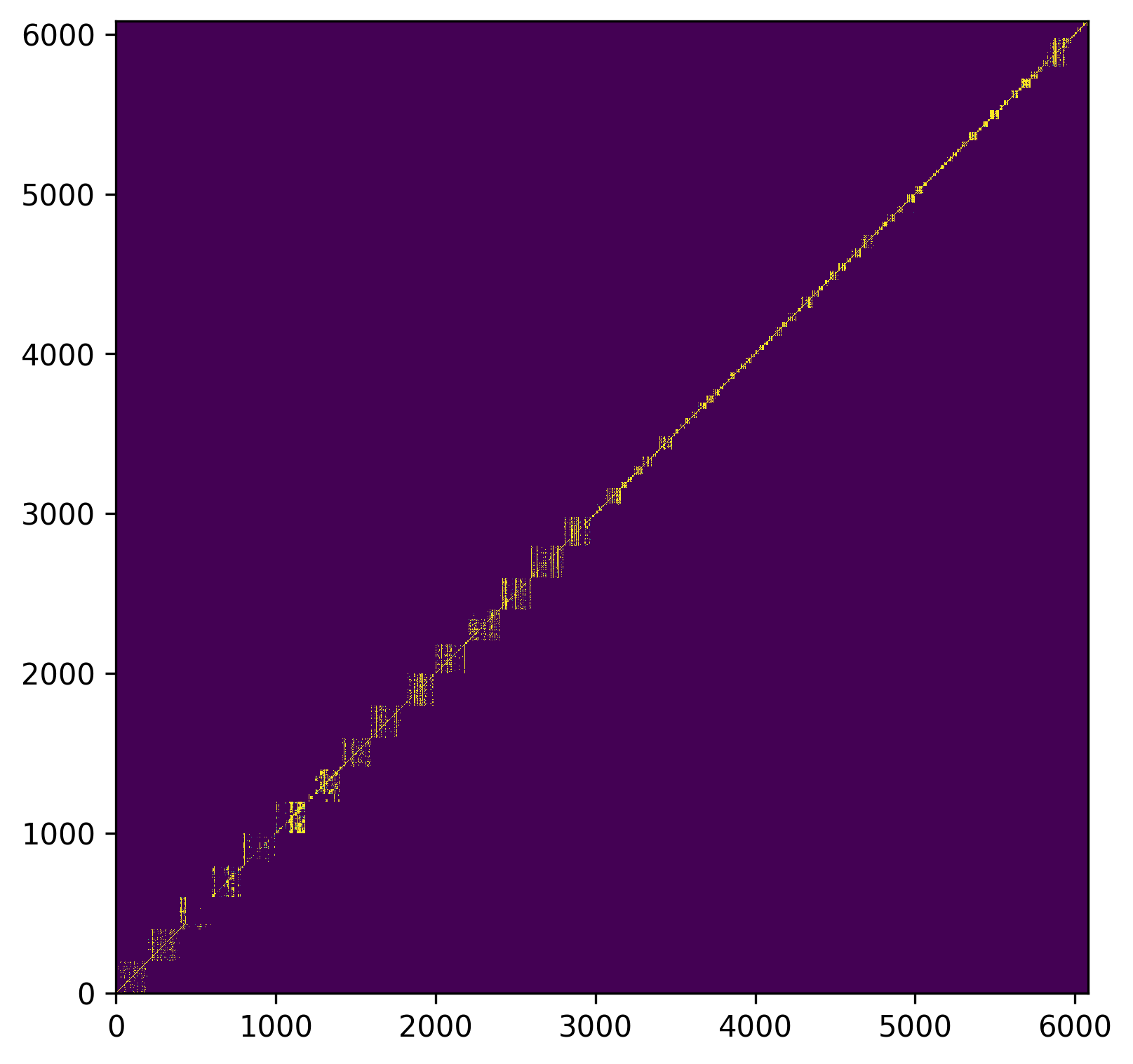}
    }
    \subfigure[Caltech101 View 2]{
        \includegraphics[width=0.23\textwidth]{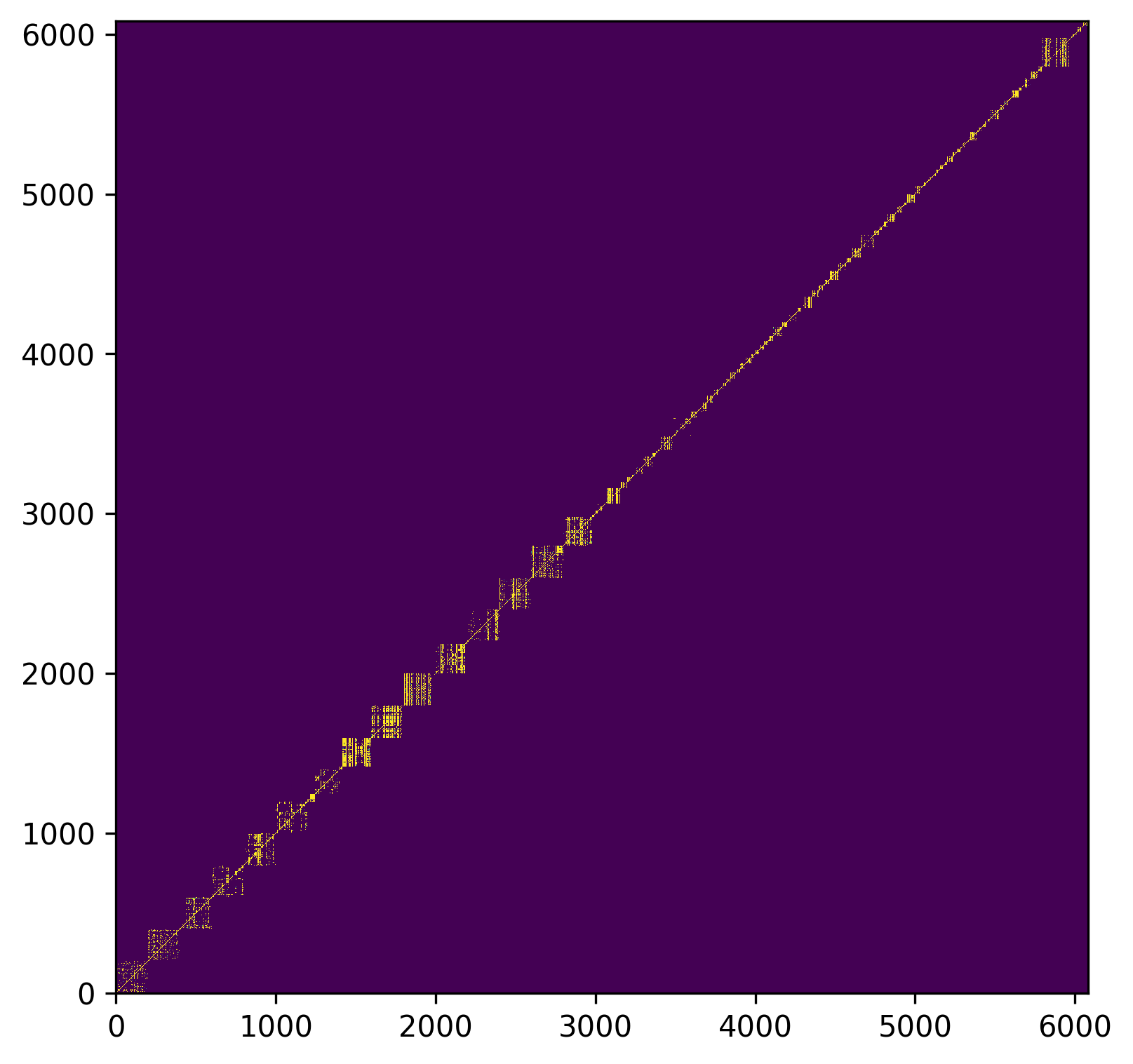}
    }
    \subfigure[Food101 View 1]{
        \includegraphics[width=0.23\textwidth]{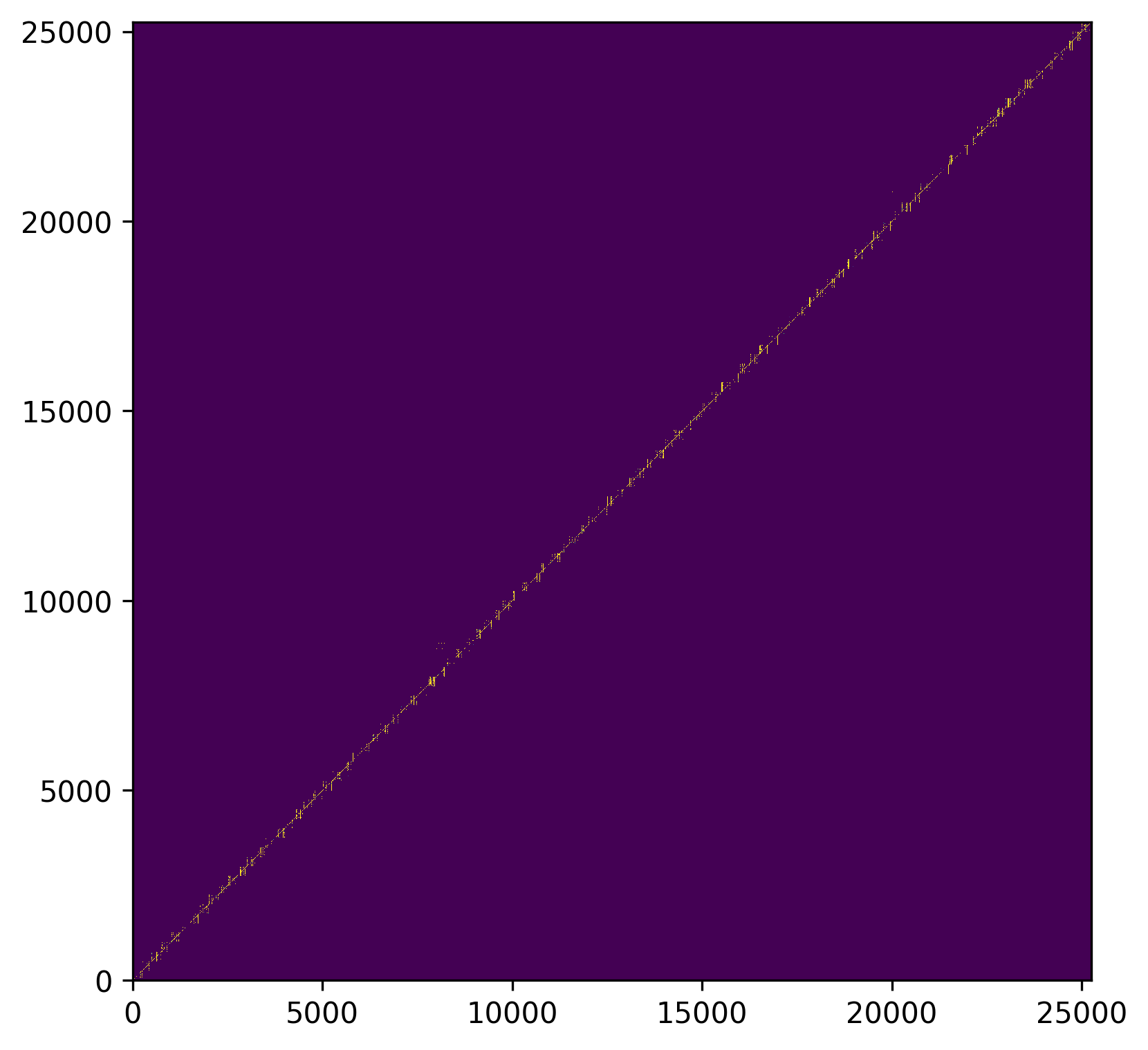}
    }
    \subfigure[Food101 View 2]{
        \includegraphics[width=0.23\textwidth]{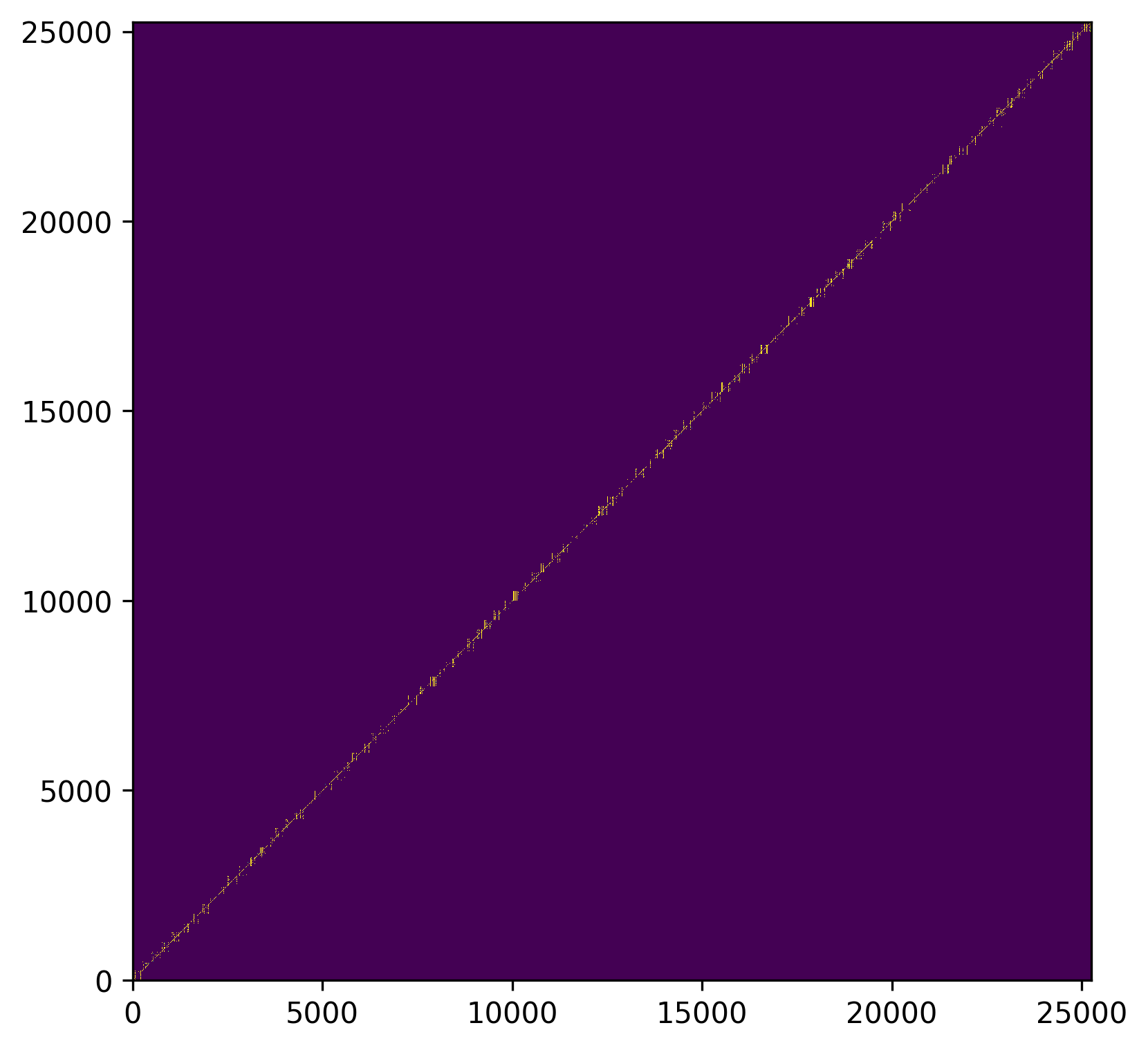}
    }
    \caption{Block-diagonal structures of the learned sparse attention graphs on the Caltech101 and Food101 datasets.}
    \label{fig:sparsity:testing:view}
\end{figure*}

\begin{figure*}[htbp]
    \centering
    \subfigure[ACC]{
        \includegraphics[width=0.30\textwidth]{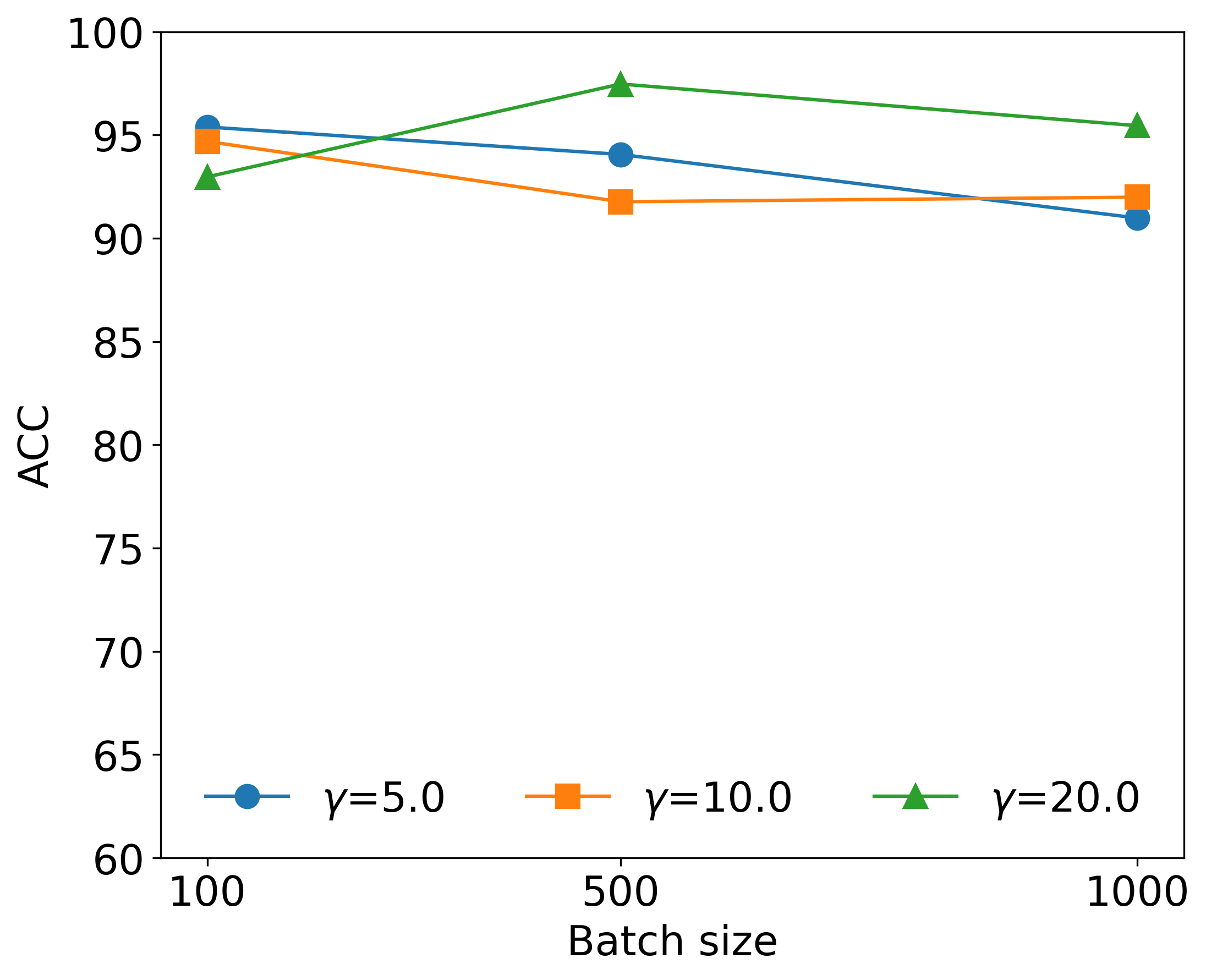}
    }
    \subfigure[NMI]{
        \includegraphics[width=0.30\textwidth]{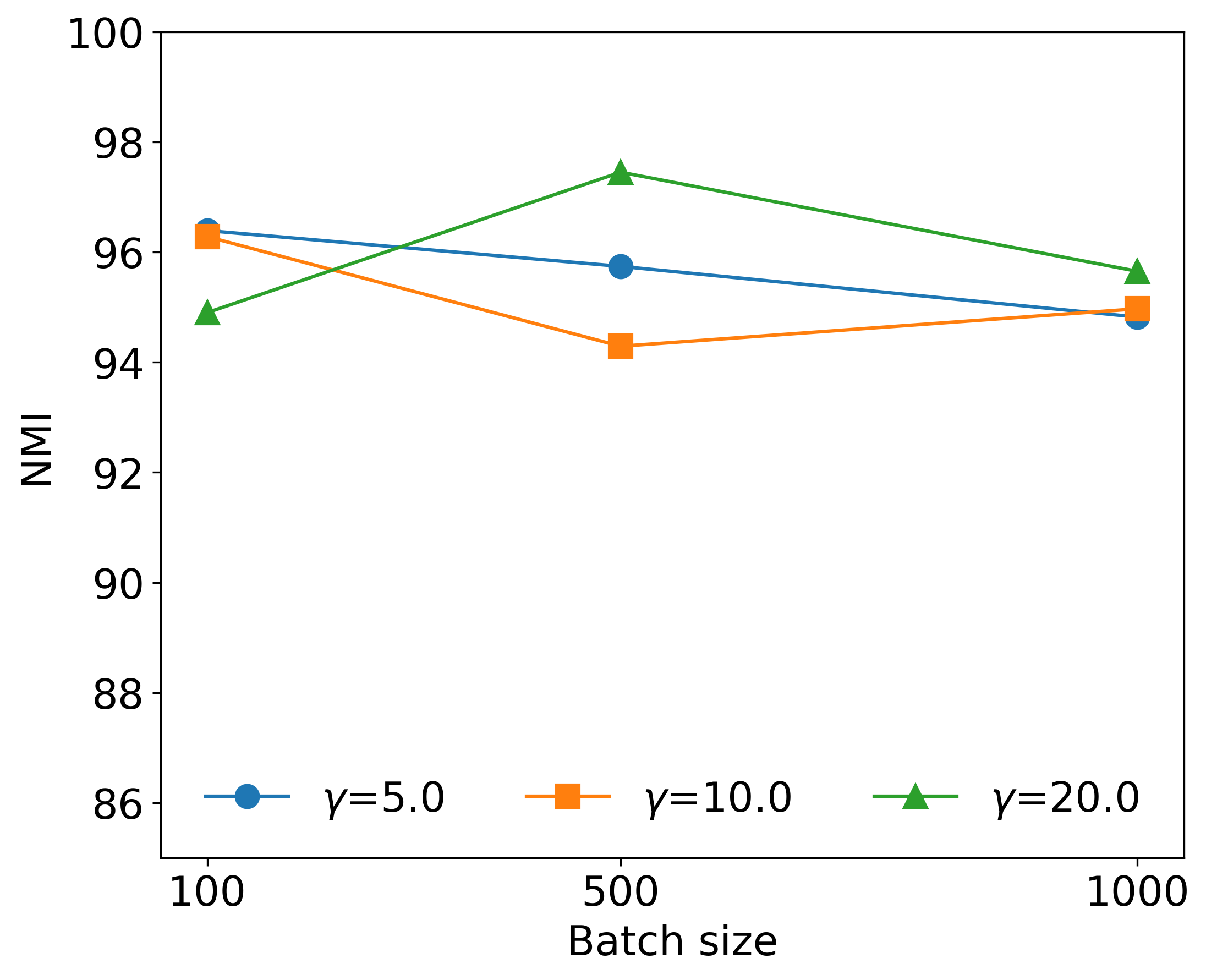}
    }
     \subfigure[ARI]{
        \includegraphics[width=0.30\textwidth]{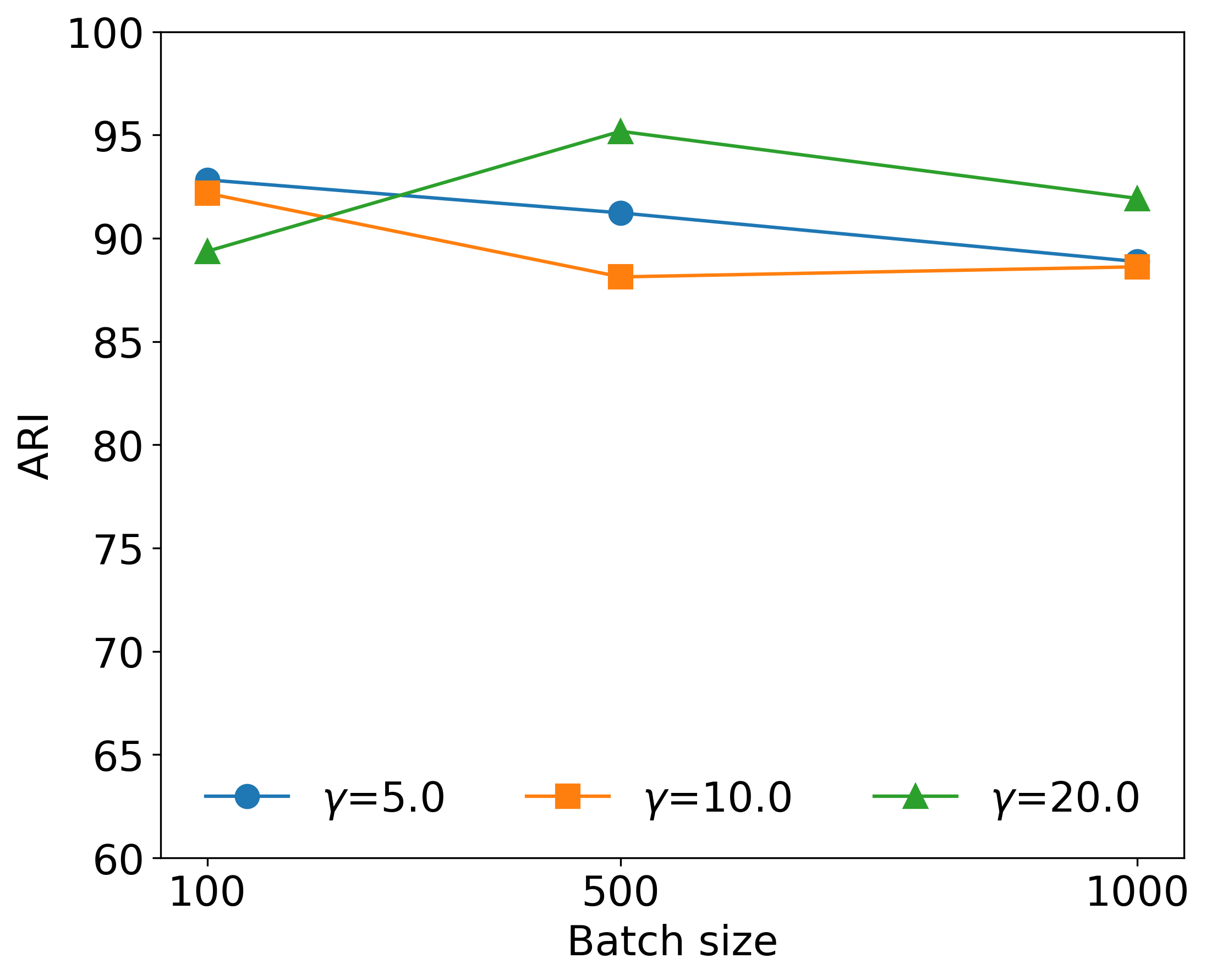}
    }
    \caption{Clustering results of SAGL on the Pets dataset across different batch sizes.}
    \label{fig:bs:pets}
\end{figure*}

\begin{figure*}[htbp]
    \centering
    \subfigure[ACC]{
        \includegraphics[width=0.30\textwidth]{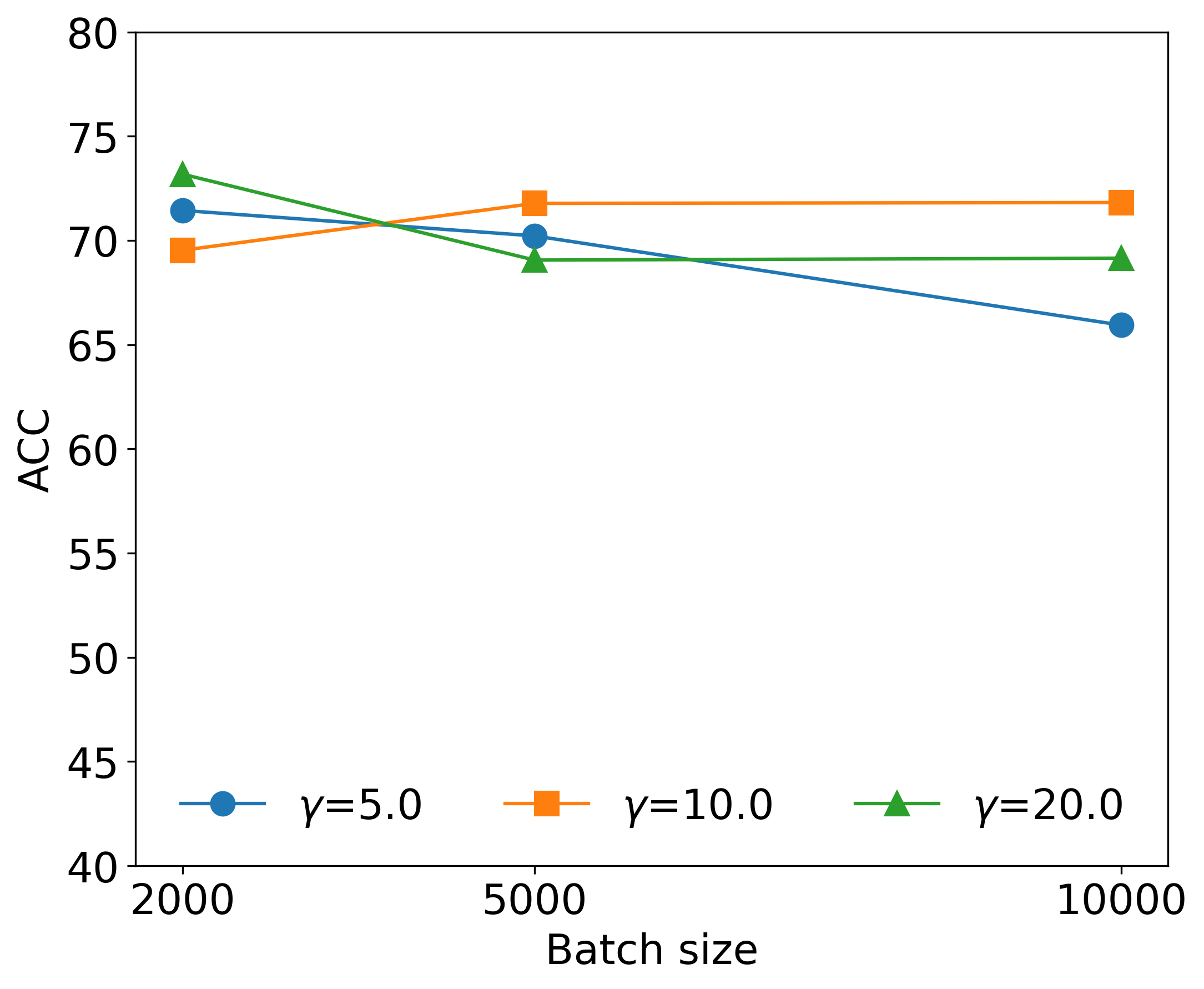}
    }
    \subfigure[NMI]{
        \includegraphics[width=0.30\textwidth]{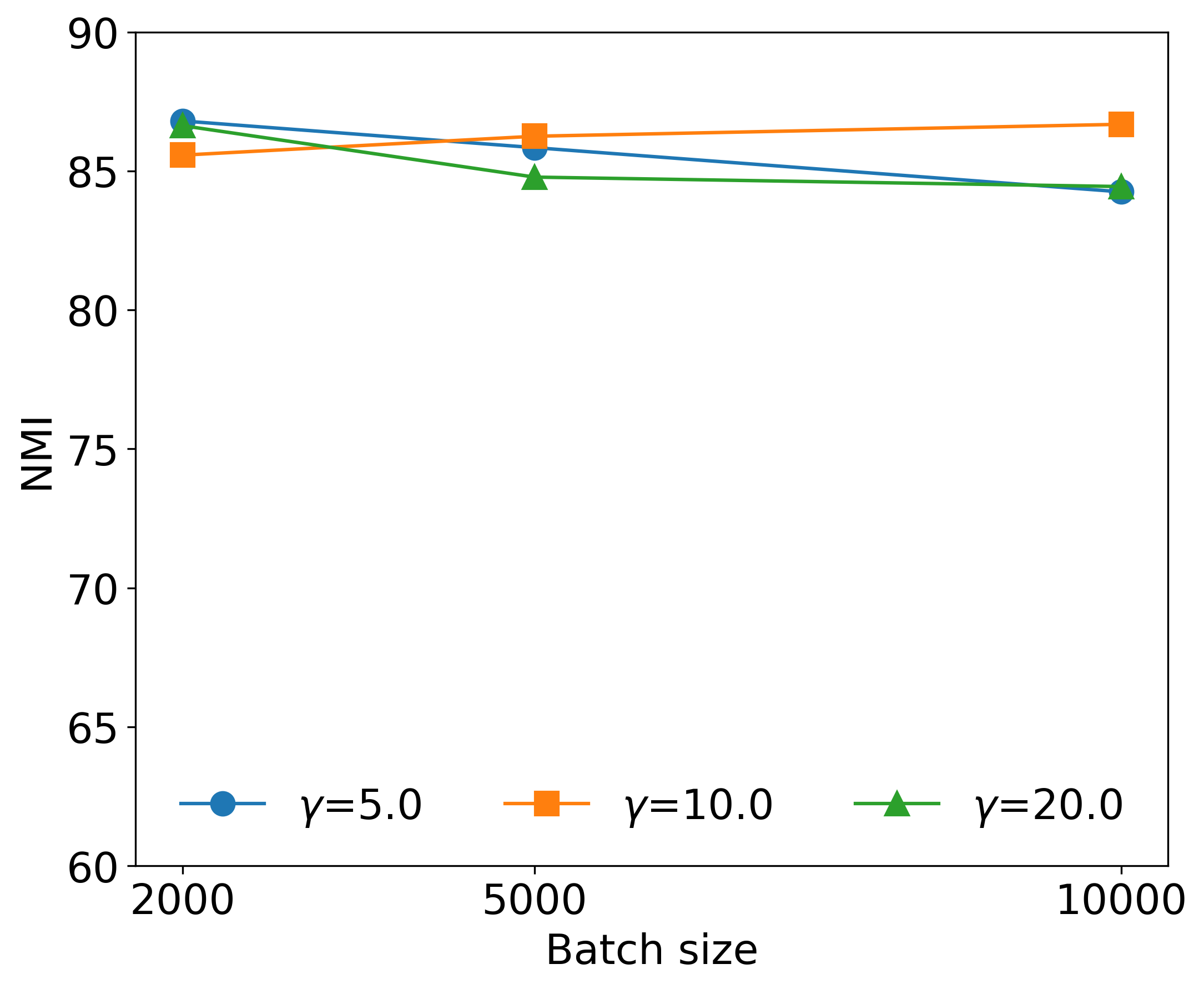}
    }
     \subfigure[ARI]{
        \includegraphics[width=0.30\textwidth]{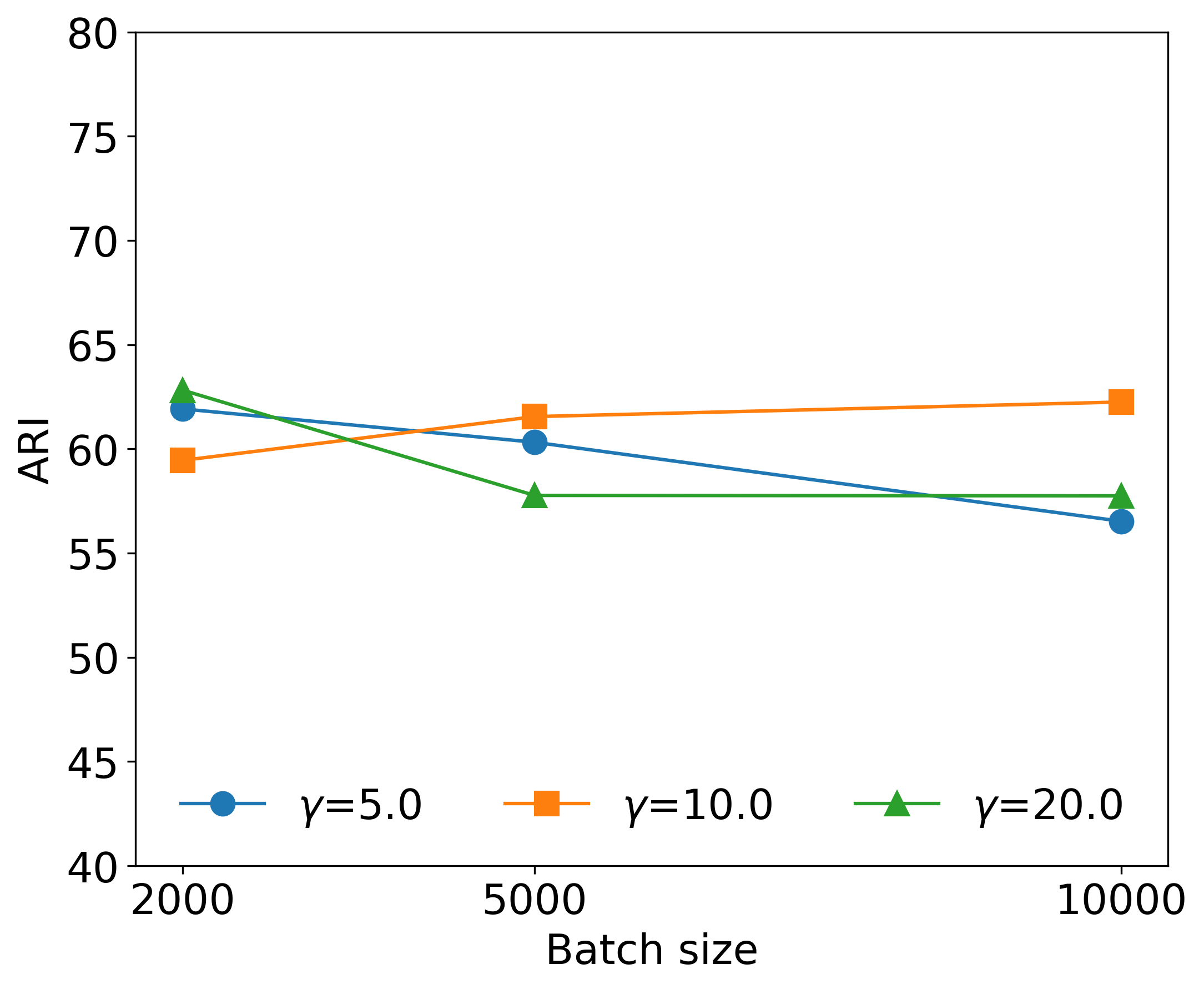}
    }
    \caption{Clustering results of SAGL on the SUN397 dataset across different batch sizes.}
    \label{fig:bs:sun}
\end{figure*}

\begin{figure*}[htbp]
    \centering
    \subfigure[ACC]{
        \includegraphics[width=0.30\textwidth]{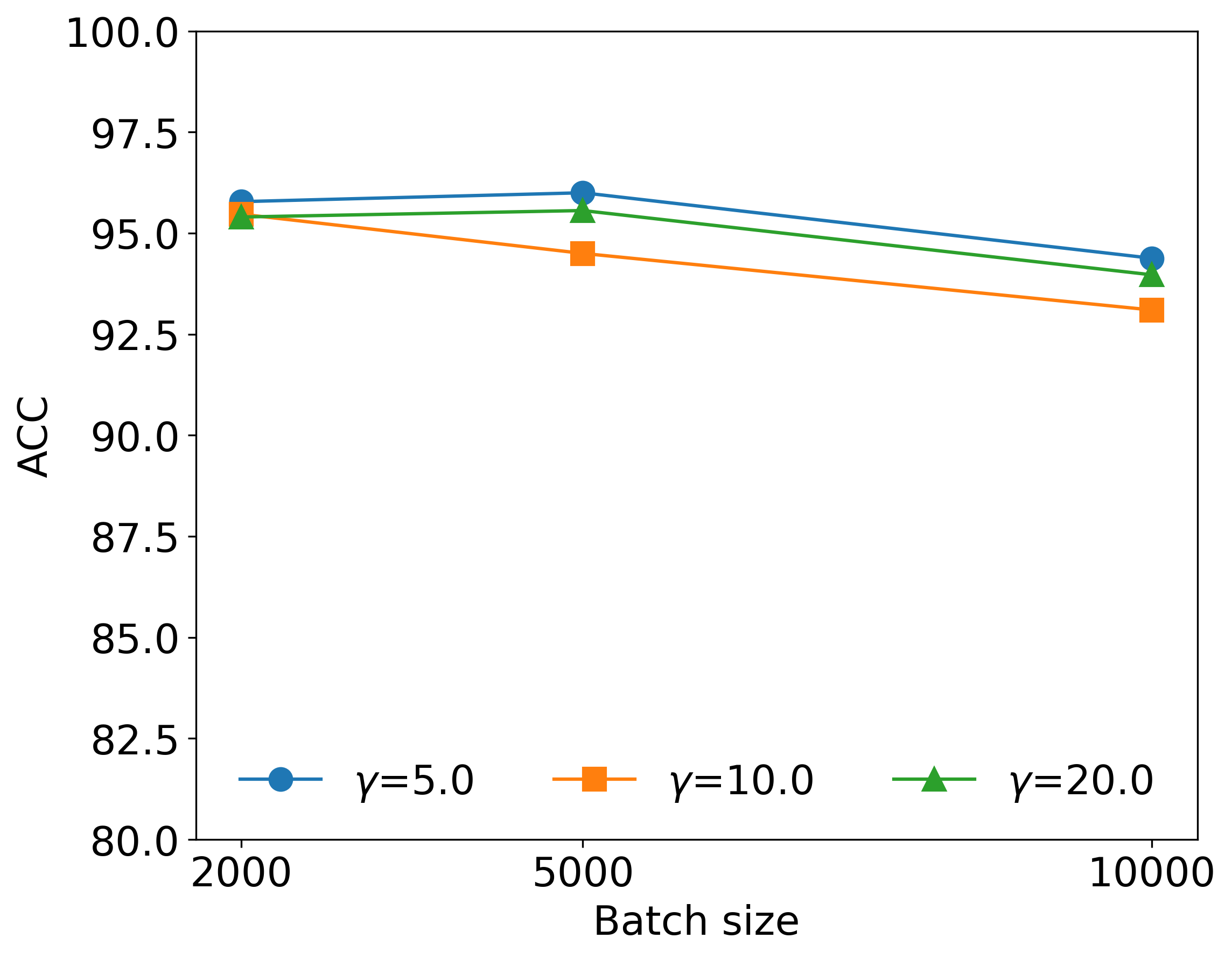}
    }
    \subfigure[NMI]{
        \includegraphics[width=0.30\textwidth]{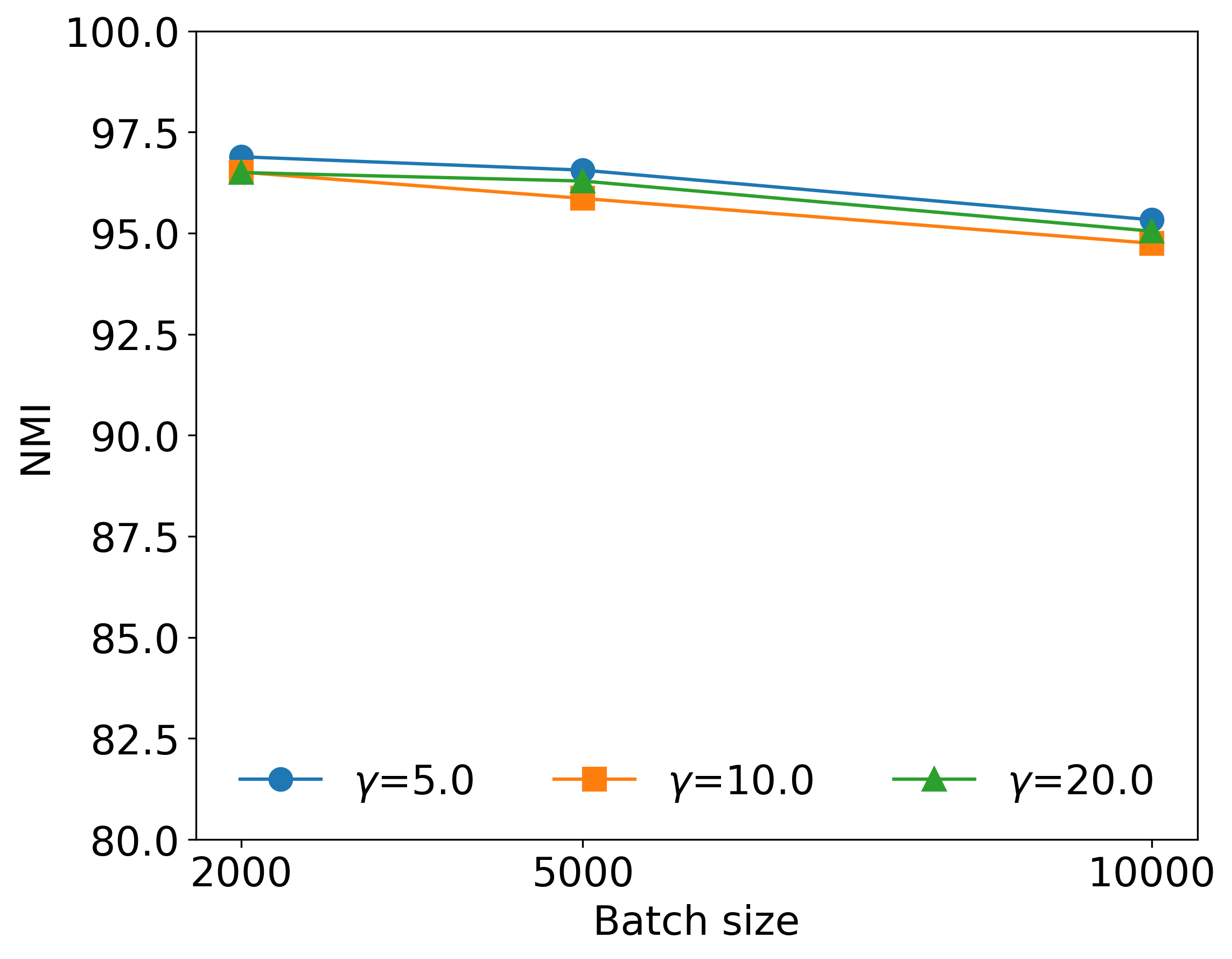}
    }
     \subfigure[ARI]{
        \includegraphics[width=0.30\textwidth]{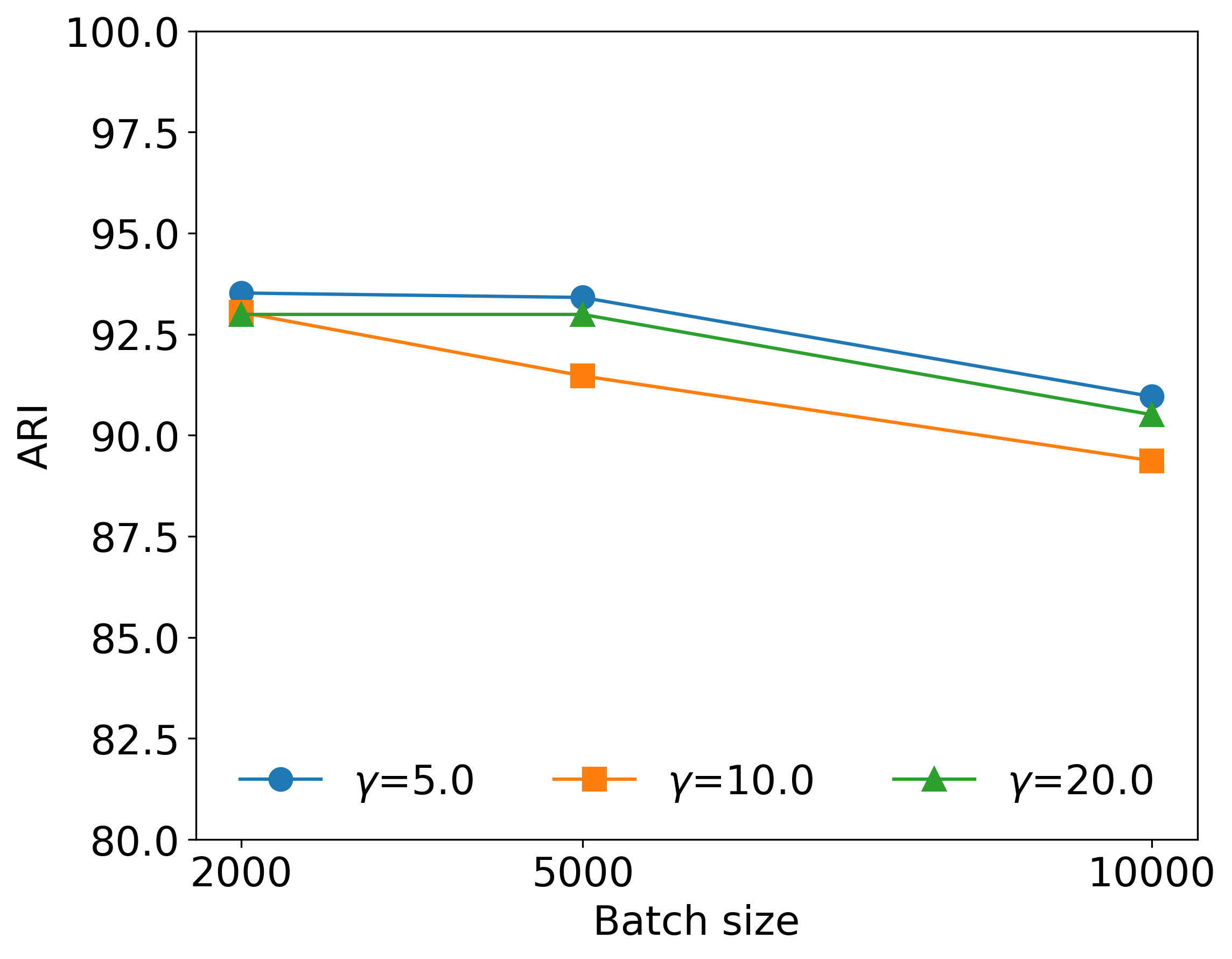}
    }
    \caption{Clustering results of SAGL on the Food101 dataset across different batch sizes.}
    \label{fig:bs:food}
\end{figure*}

\begin{figure*}[htbp]
    \centering
    \subfigure[ACC]{
        \includegraphics[width=0.30\textwidth]{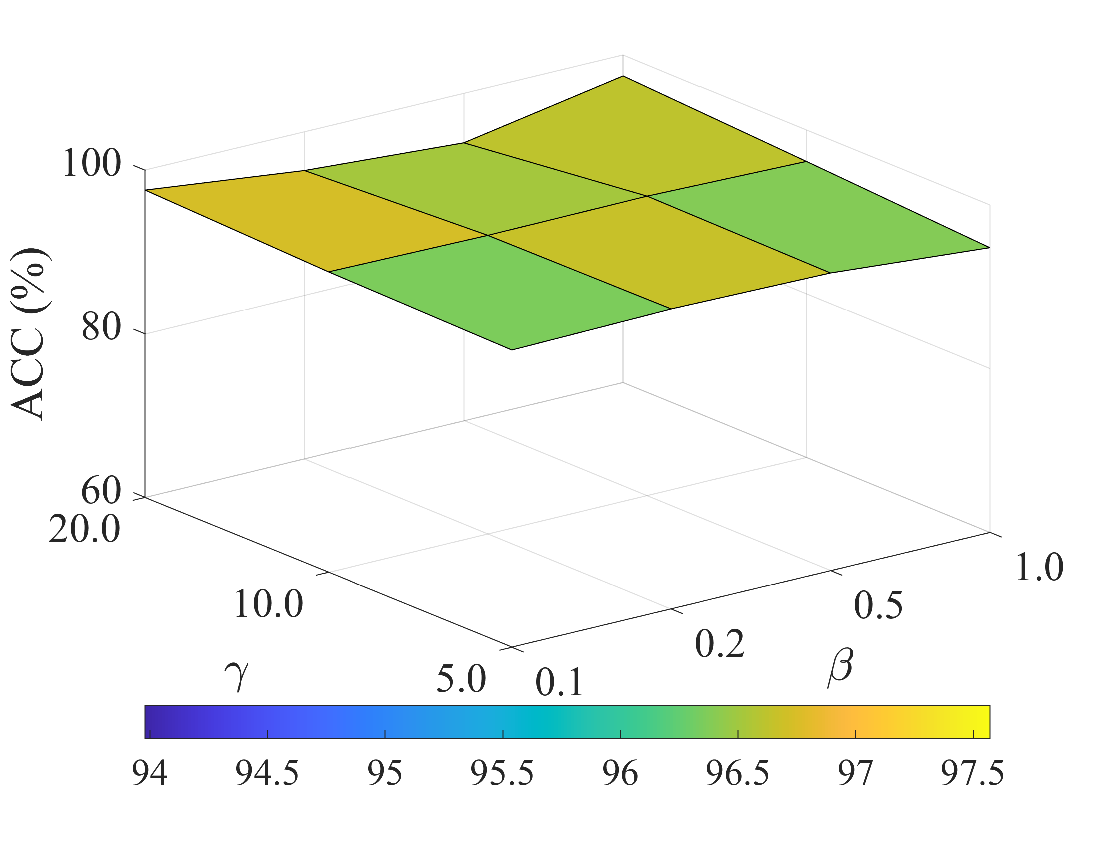}
    }
    \subfigure[NMI]{
        \includegraphics[width=0.30\textwidth]{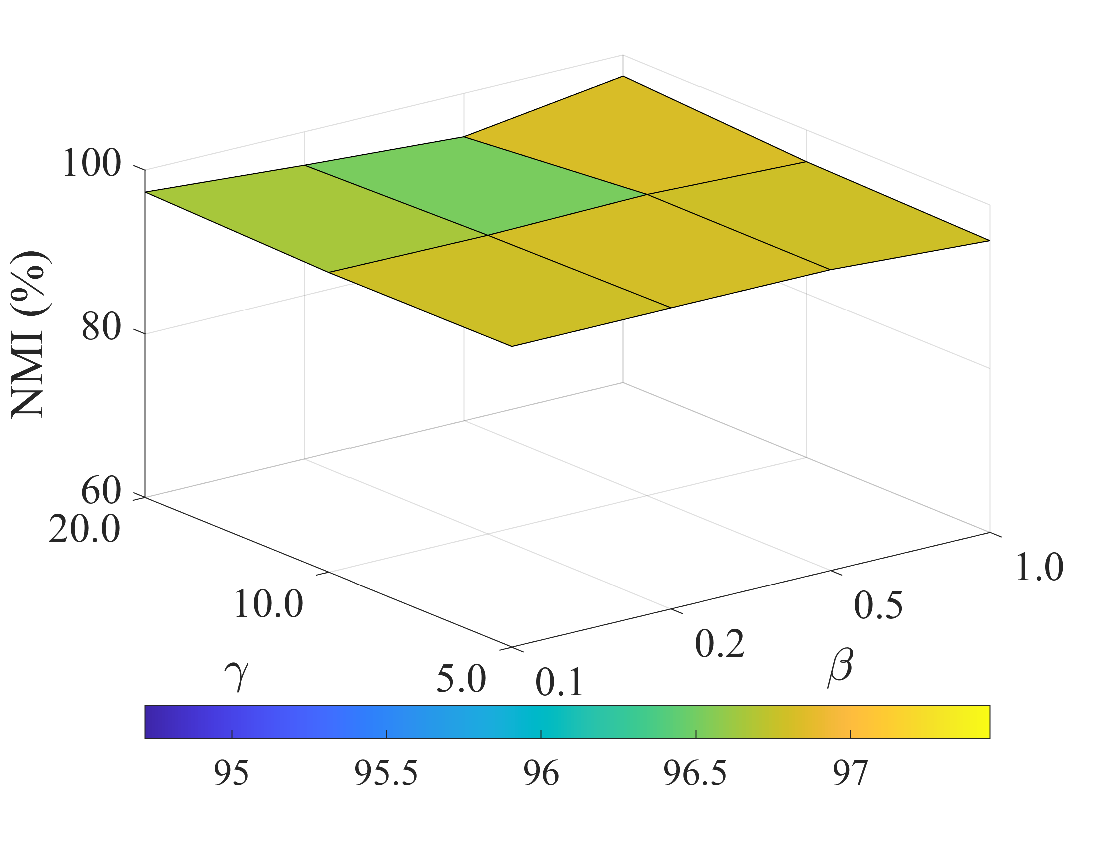}
    }
     \subfigure[ARI]{
        \includegraphics[width=0.30\textwidth]{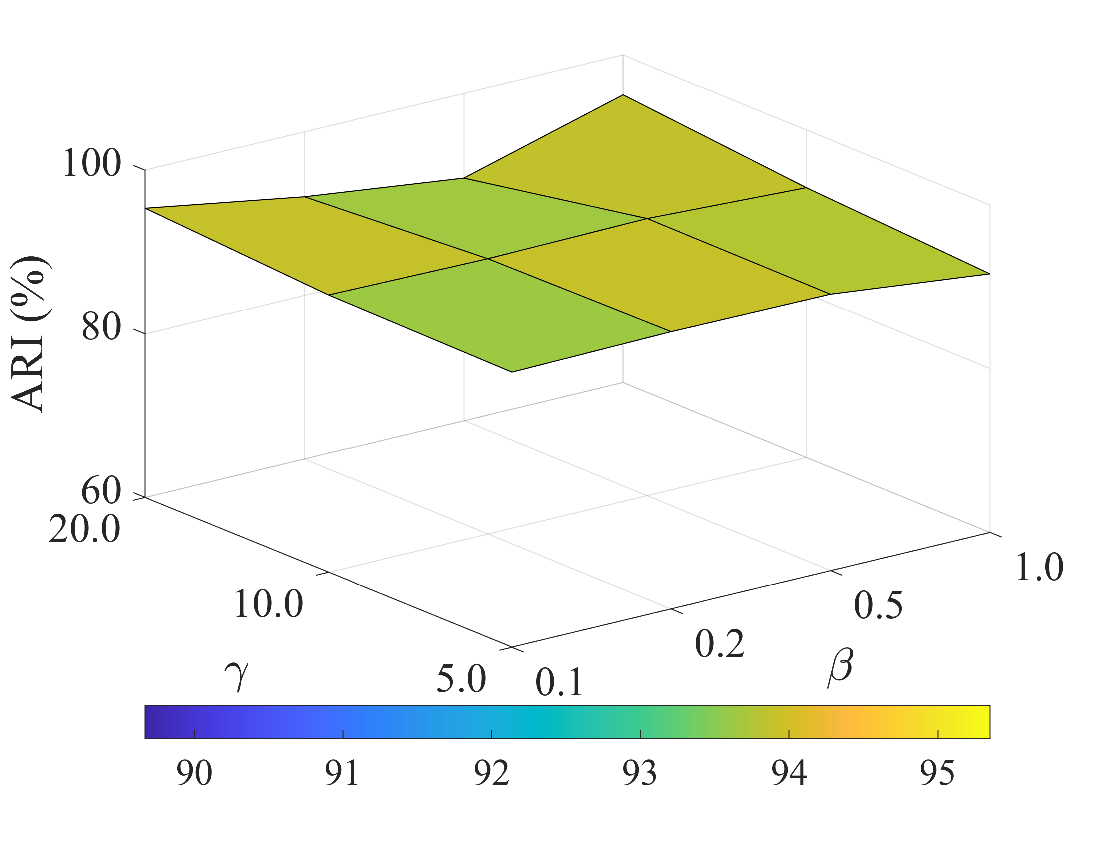}
    }
    \caption{The clustering results of SAGL under different $\gamma$ and $\beta$ combinations on the Pets dataset.}
    \label{fig:param:pets}
\end{figure*}

\begin{figure*}[htbp]
    \centering
    \subfigure[ACC]{
        \includegraphics[width=0.30\textwidth]{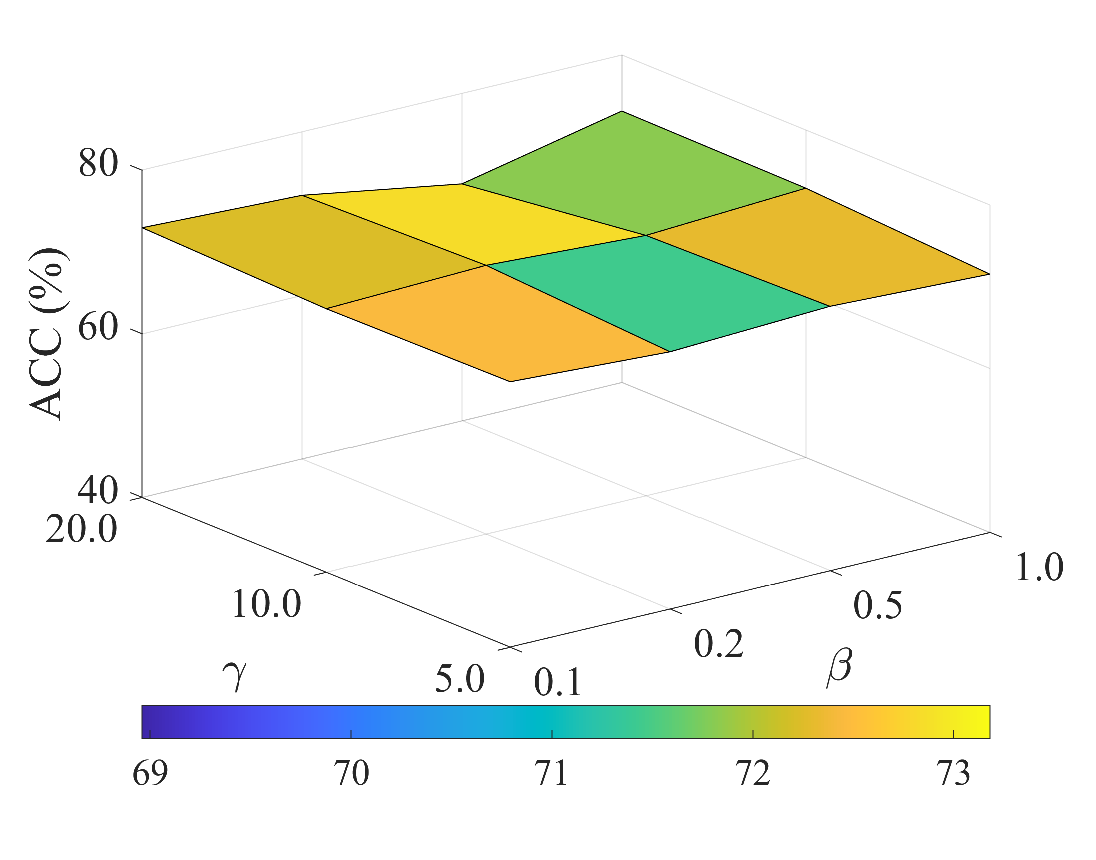}
    }
    \subfigure[NMI]{
        \includegraphics[width=0.30\textwidth]{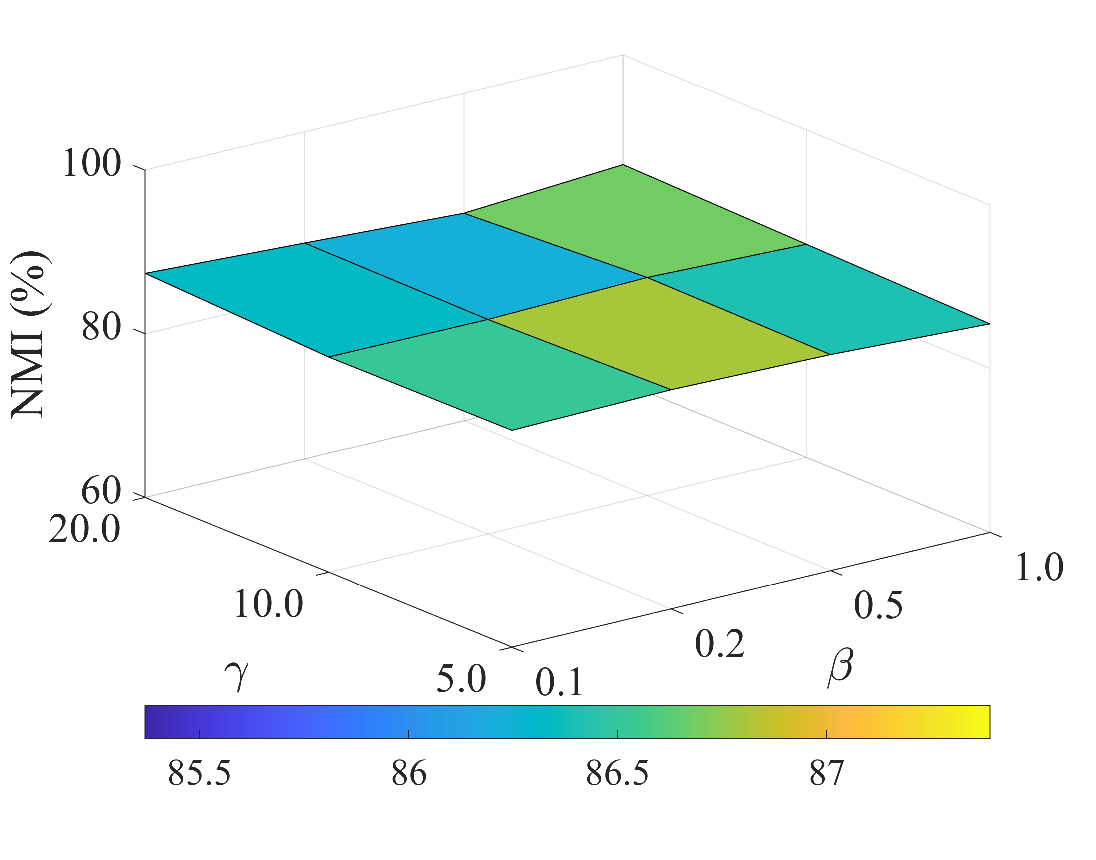}
    }
     \subfigure[ARI]{
        \includegraphics[width=0.30\textwidth]{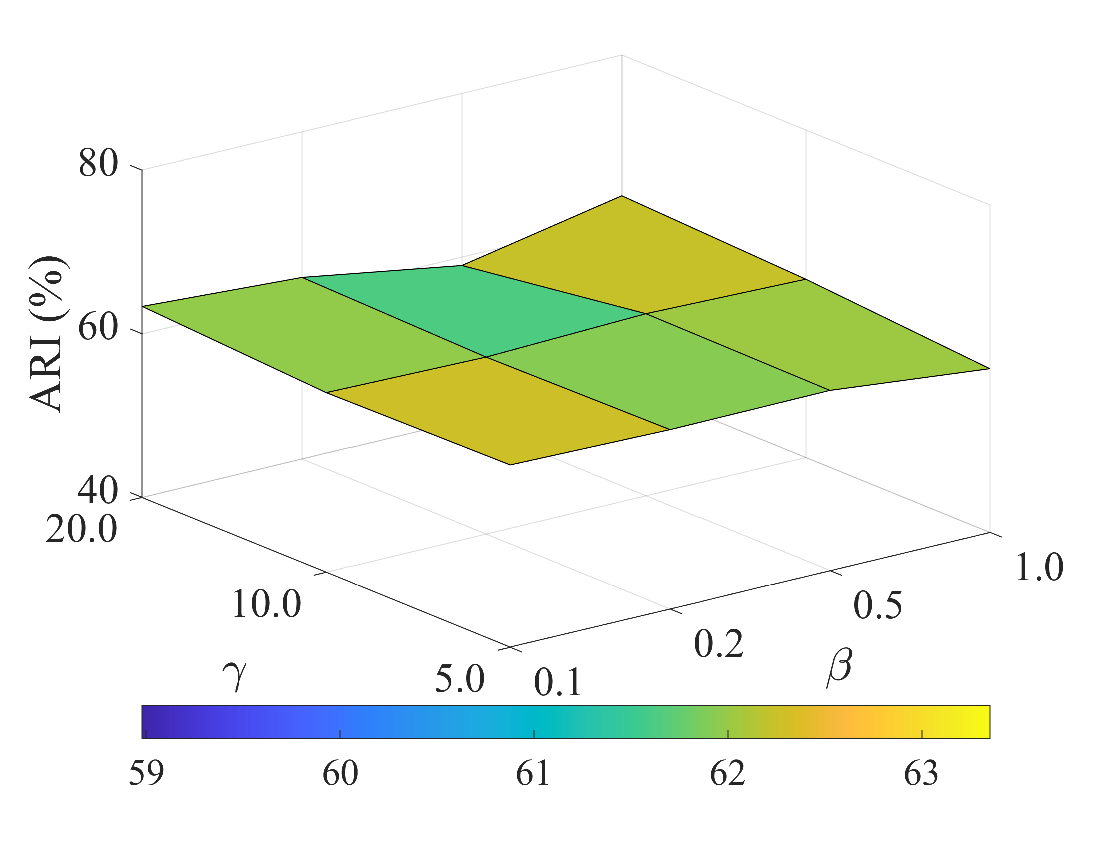}
    }
    \caption{The clustering results of SAGL under different $\gamma$ and $\beta$ combinations on the SUN397 dataset.}
    \label{fig:param:sun}
\end{figure*}

\begin{figure*}[htbp]
    \centering
    \subfigure[ACC]{
        \includegraphics[width=0.30\textwidth]{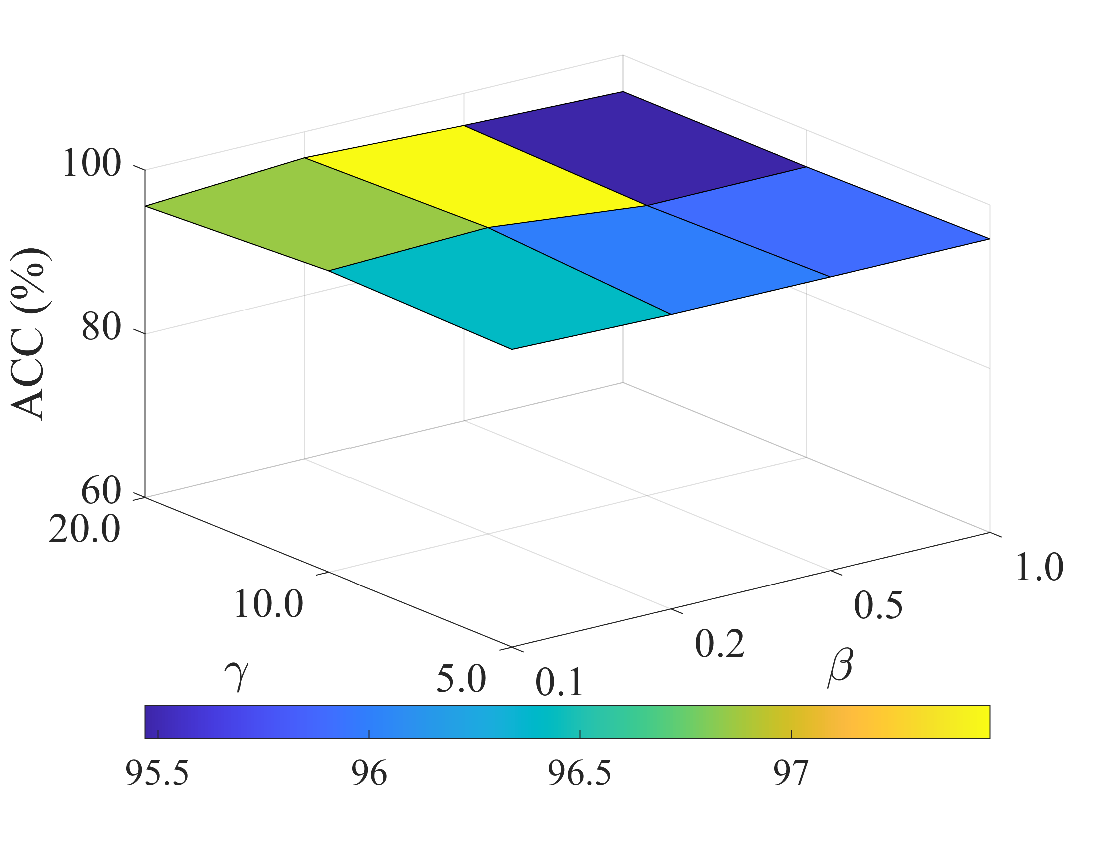}
    }
    \subfigure[NMI]{
        \includegraphics[width=0.30\textwidth]{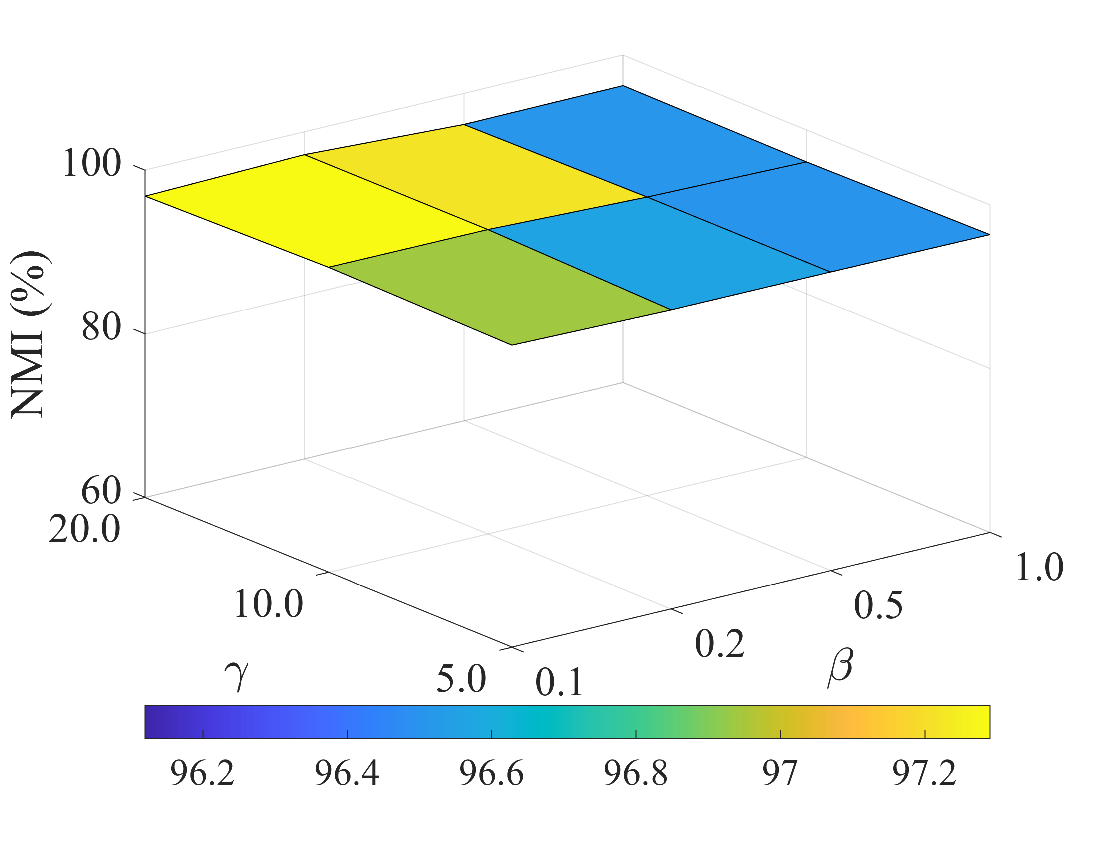}
    }
     \subfigure[ARI]{
        \includegraphics[width=0.30\textwidth]{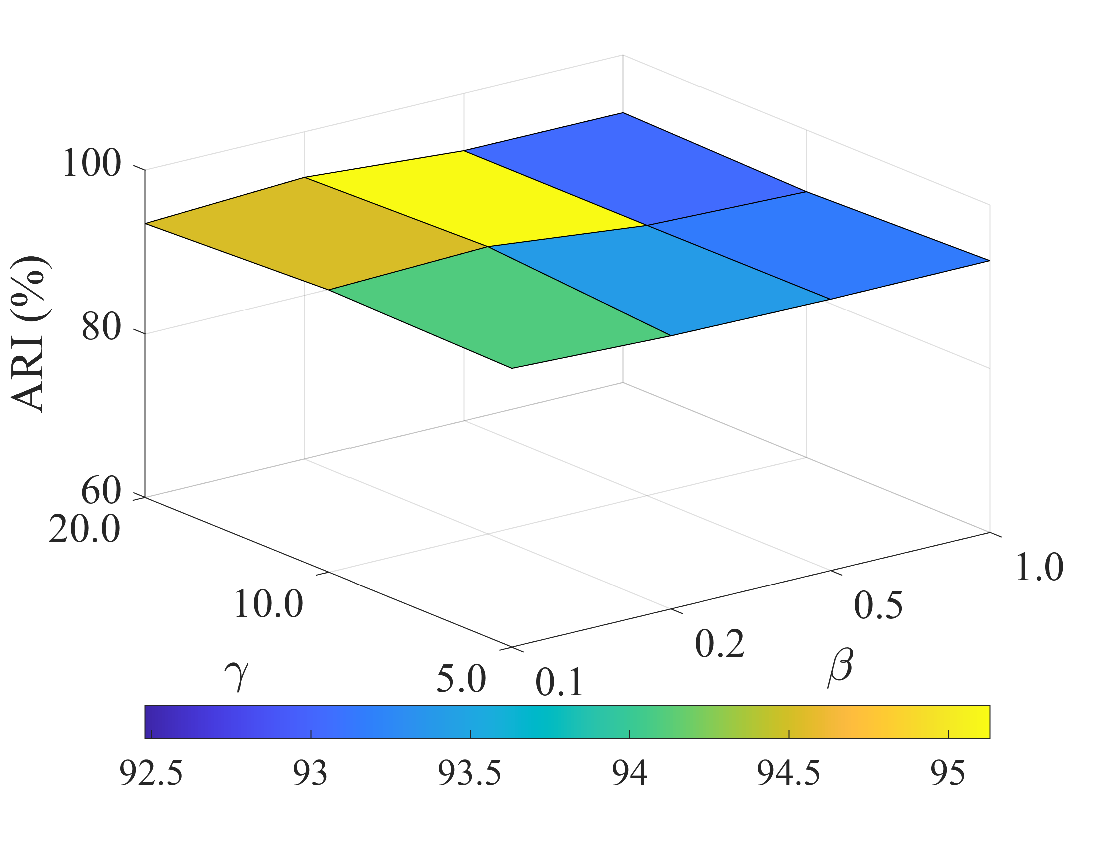}
    }
    \caption{The clustering results of SAGL under different $\gamma$ and $\beta$ combinations on the Food101 dataset.}
    \label{fig:param:food}
\end{figure*}

\begin{figure*}[htbp]
    \centering
    \subfigure[Pets]{
        \includegraphics[width=0.23\textwidth]{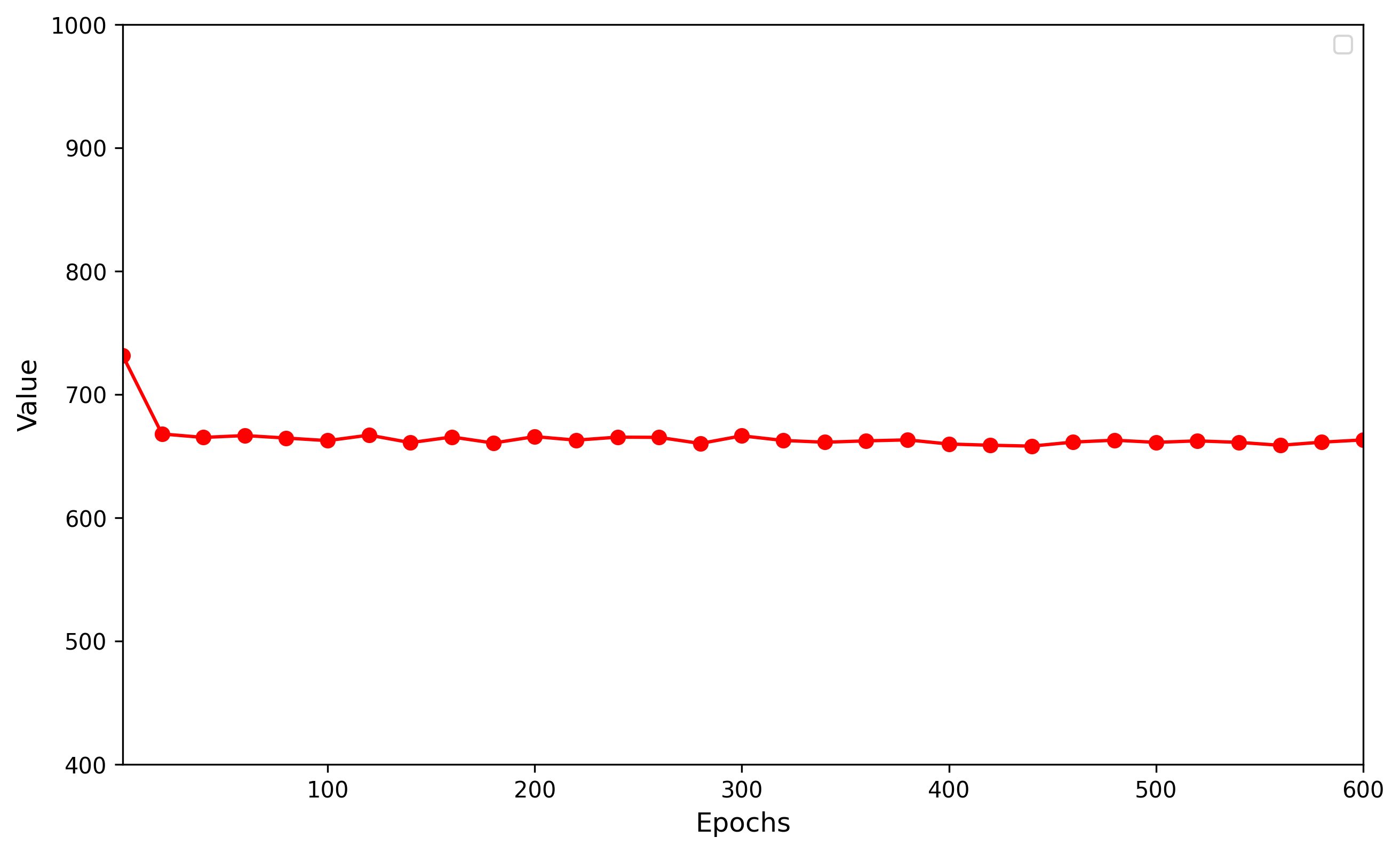}
    }
    \subfigure[KITTI]{
        \includegraphics[width=0.23\textwidth]{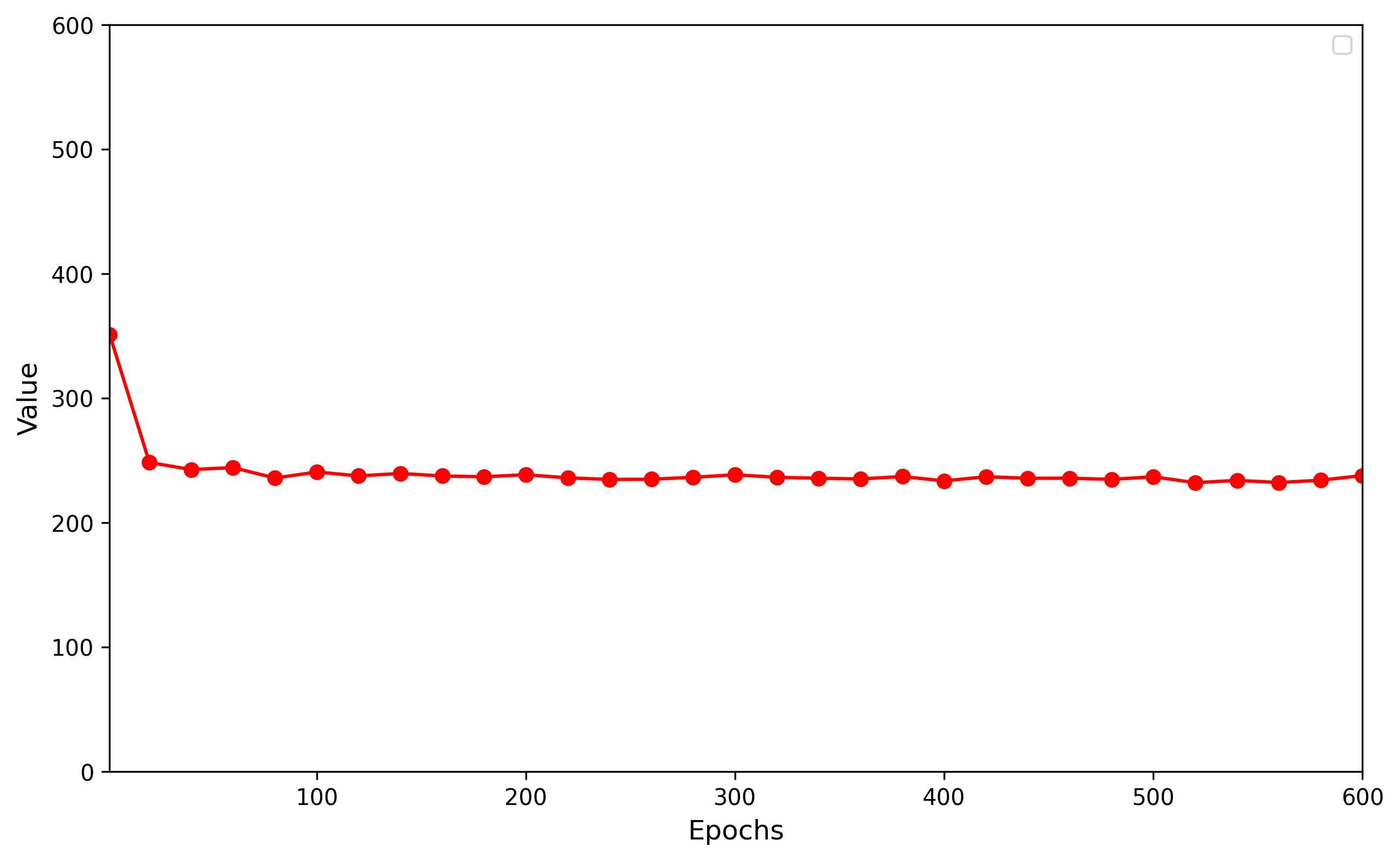}
    }
     \subfigure[Flowers]{
        \includegraphics[width=0.23\textwidth]{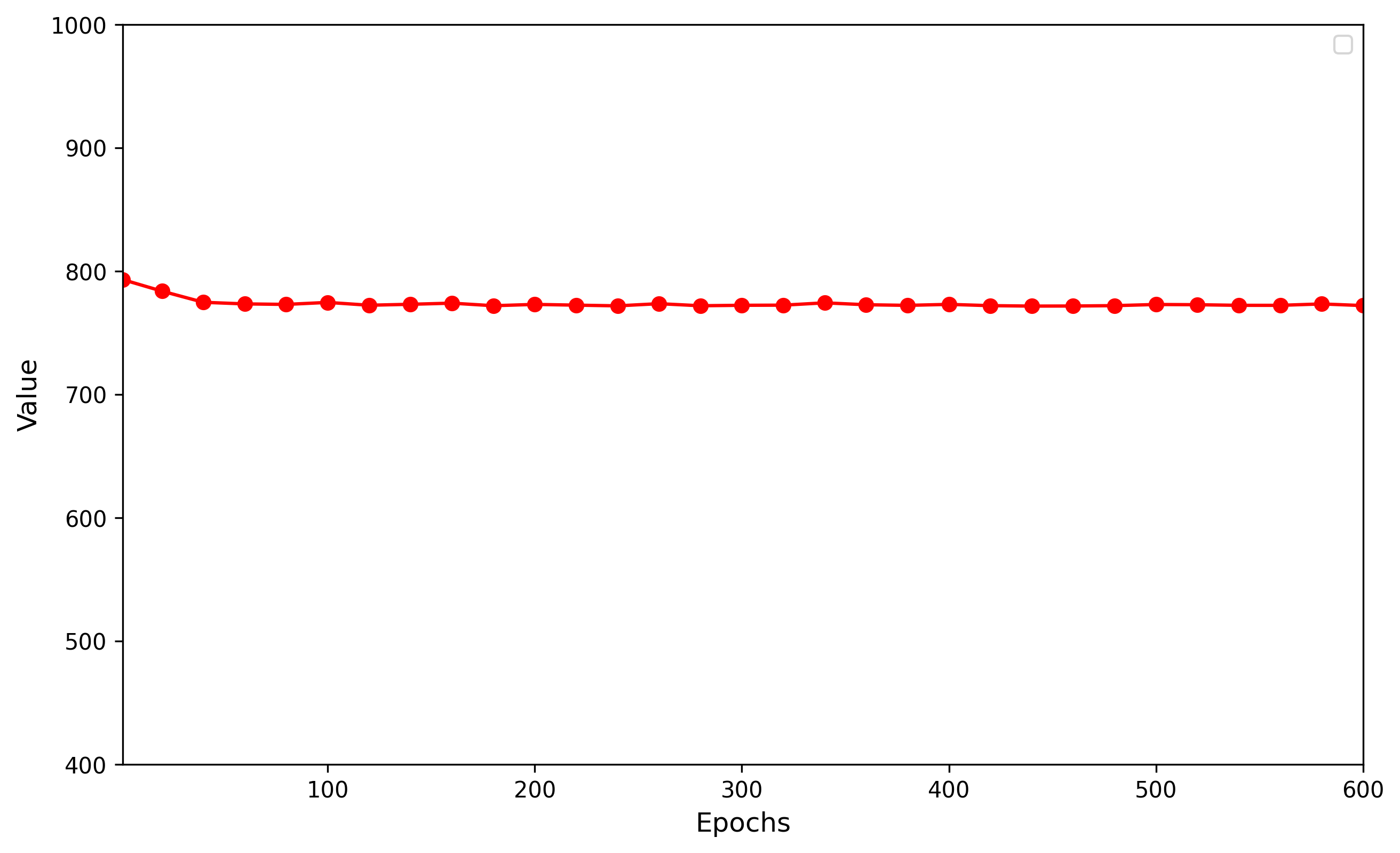}
    }
     \subfigure[Caltech101]{
        \includegraphics[width=0.23\textwidth]{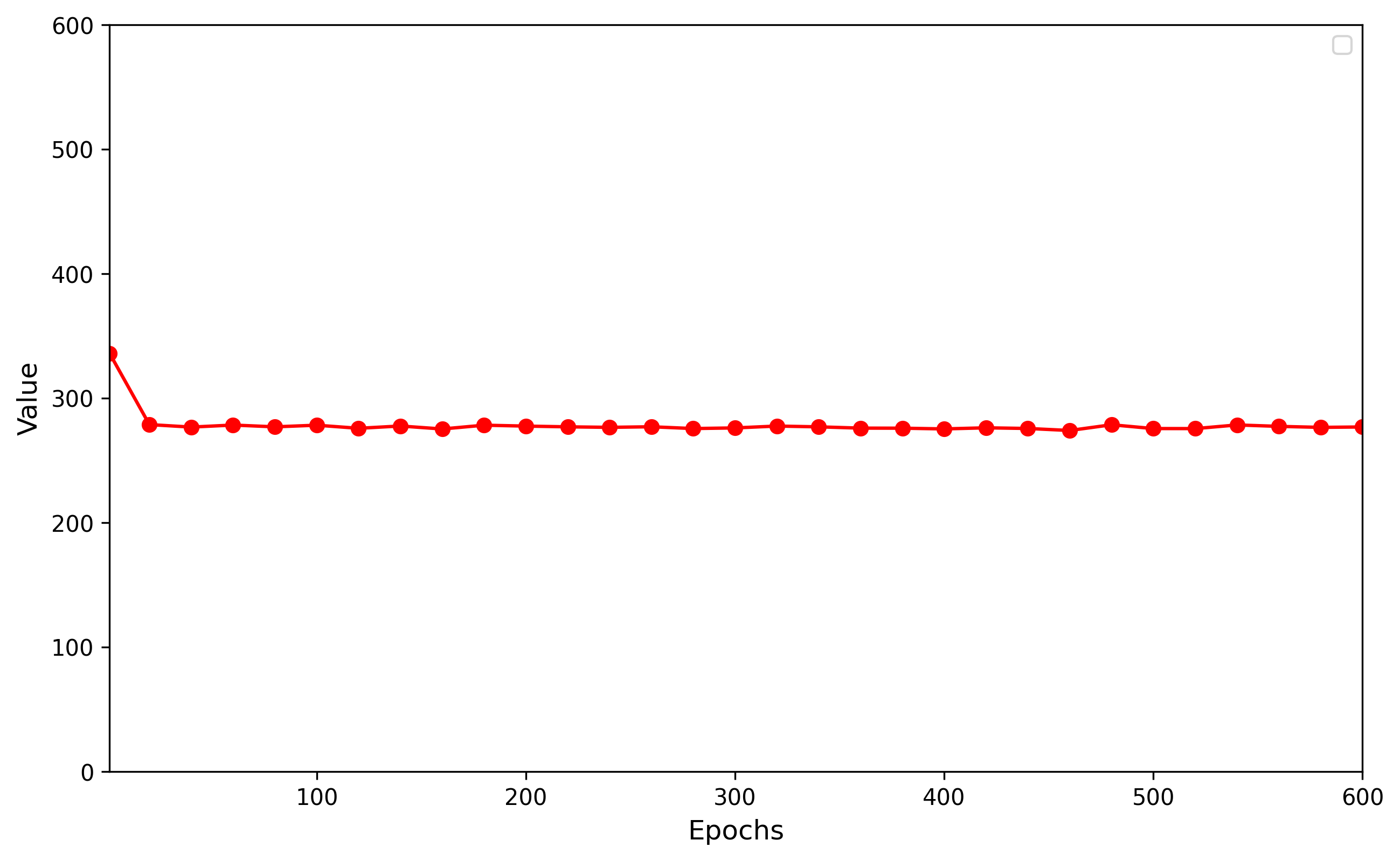}
    }

    \subfigure[EuroSAT]{
        \includegraphics[width=0.23\textwidth]{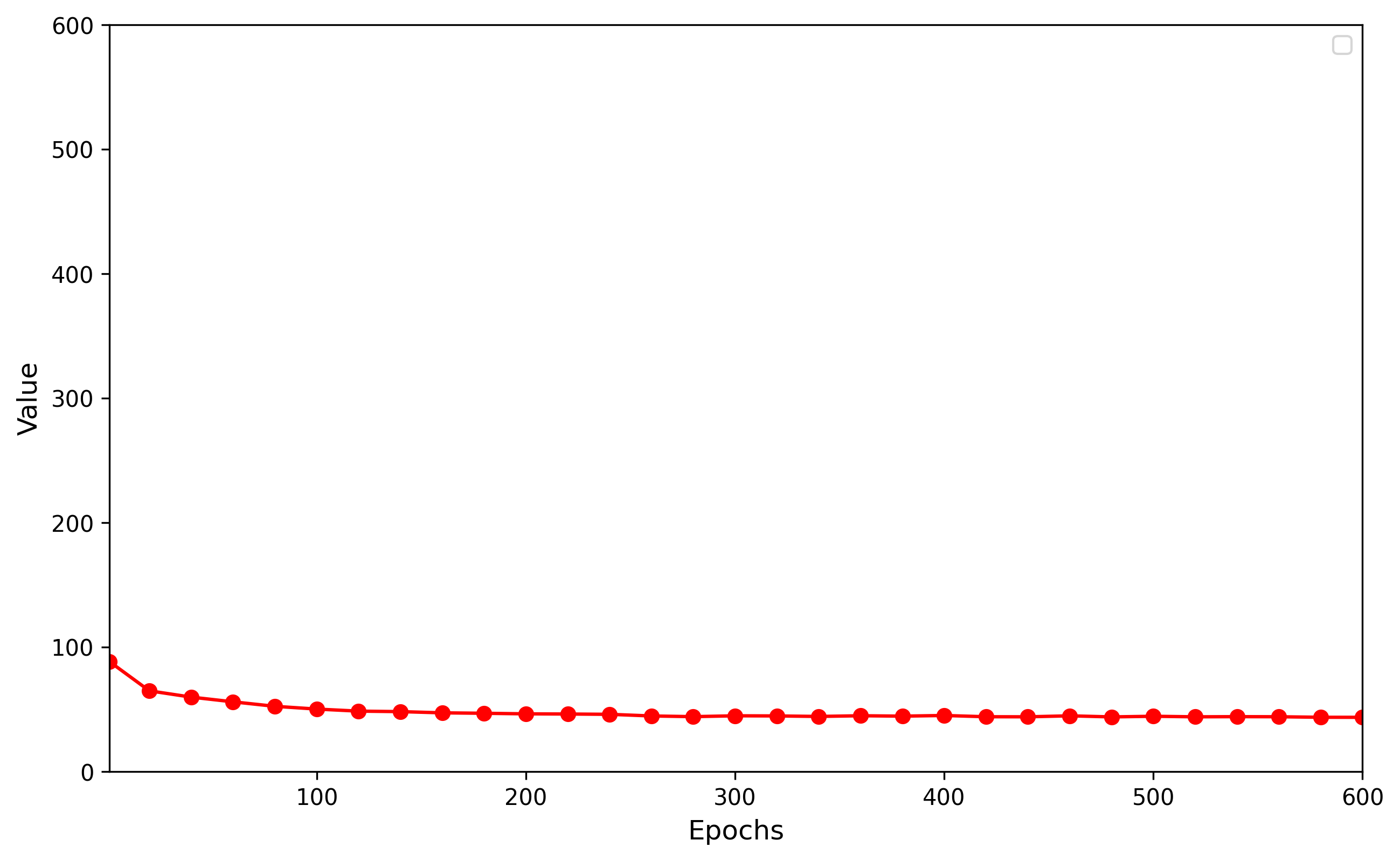}
    }
    \subfigure[SUN397]{
        \includegraphics[width=0.23\textwidth]{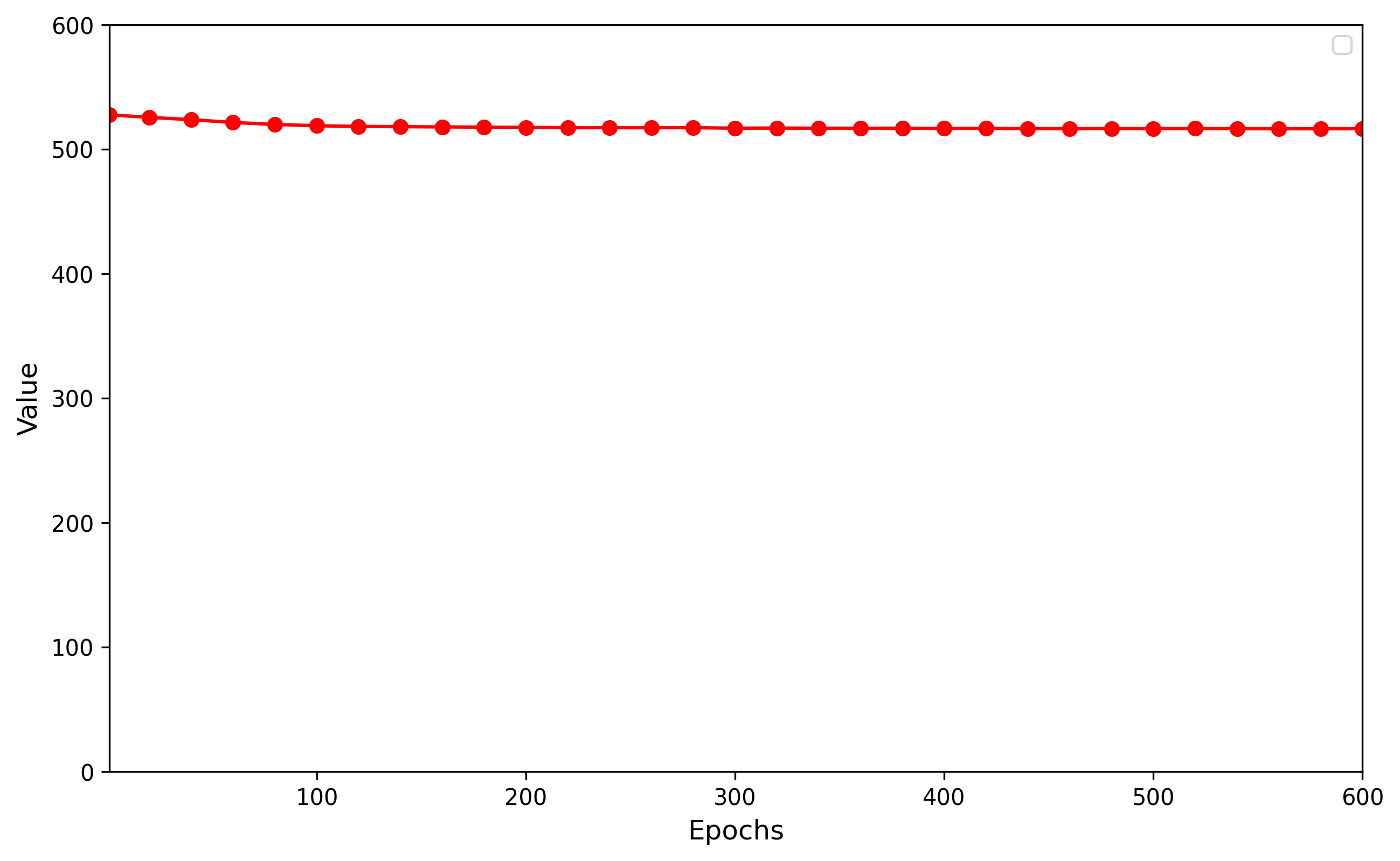}
    }
     \subfigure[Food101]{
        \includegraphics[width=0.23\textwidth]{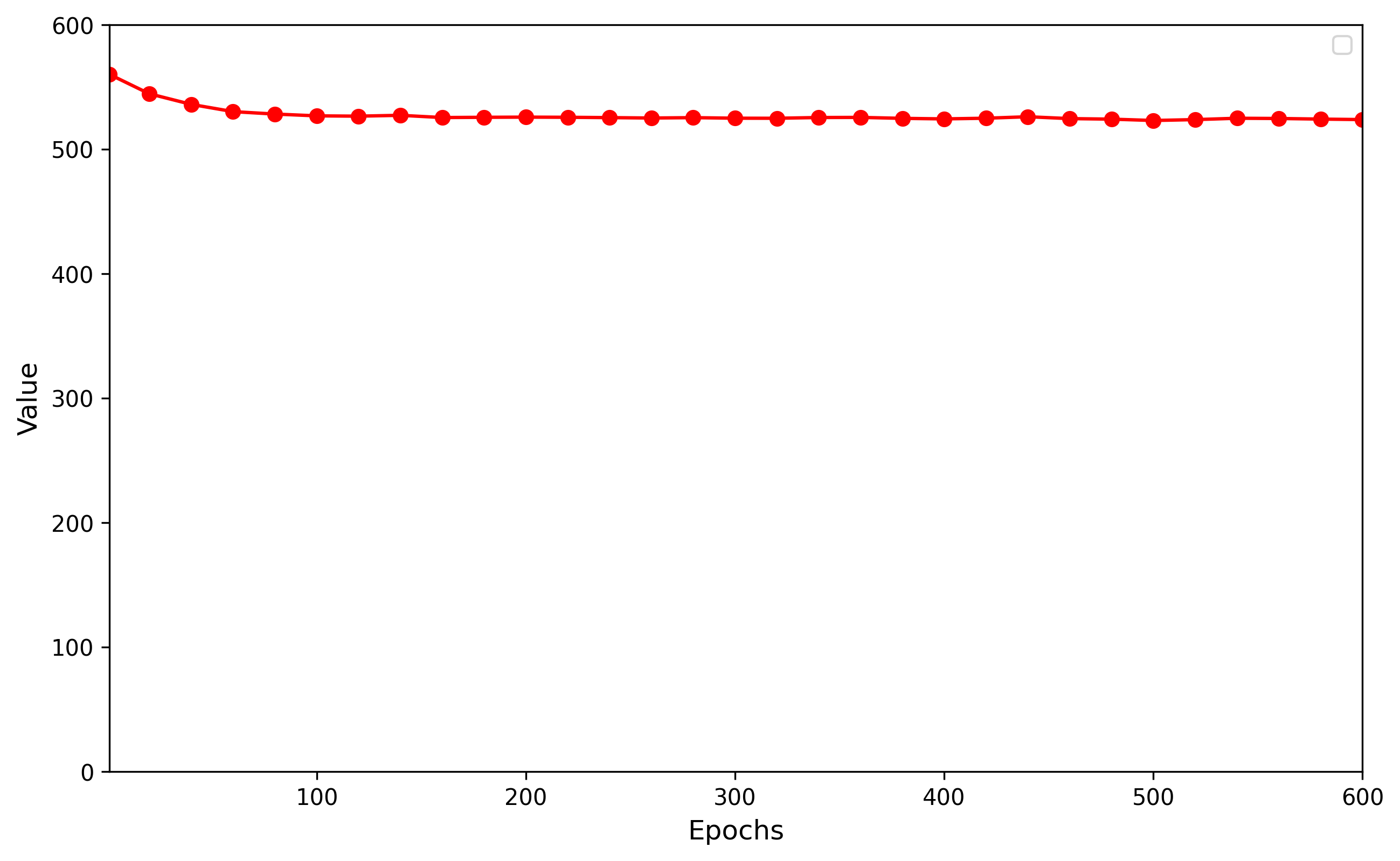}
    }
     \subfigure[ImageNet-1K]{
        \includegraphics[width=0.23\textwidth]{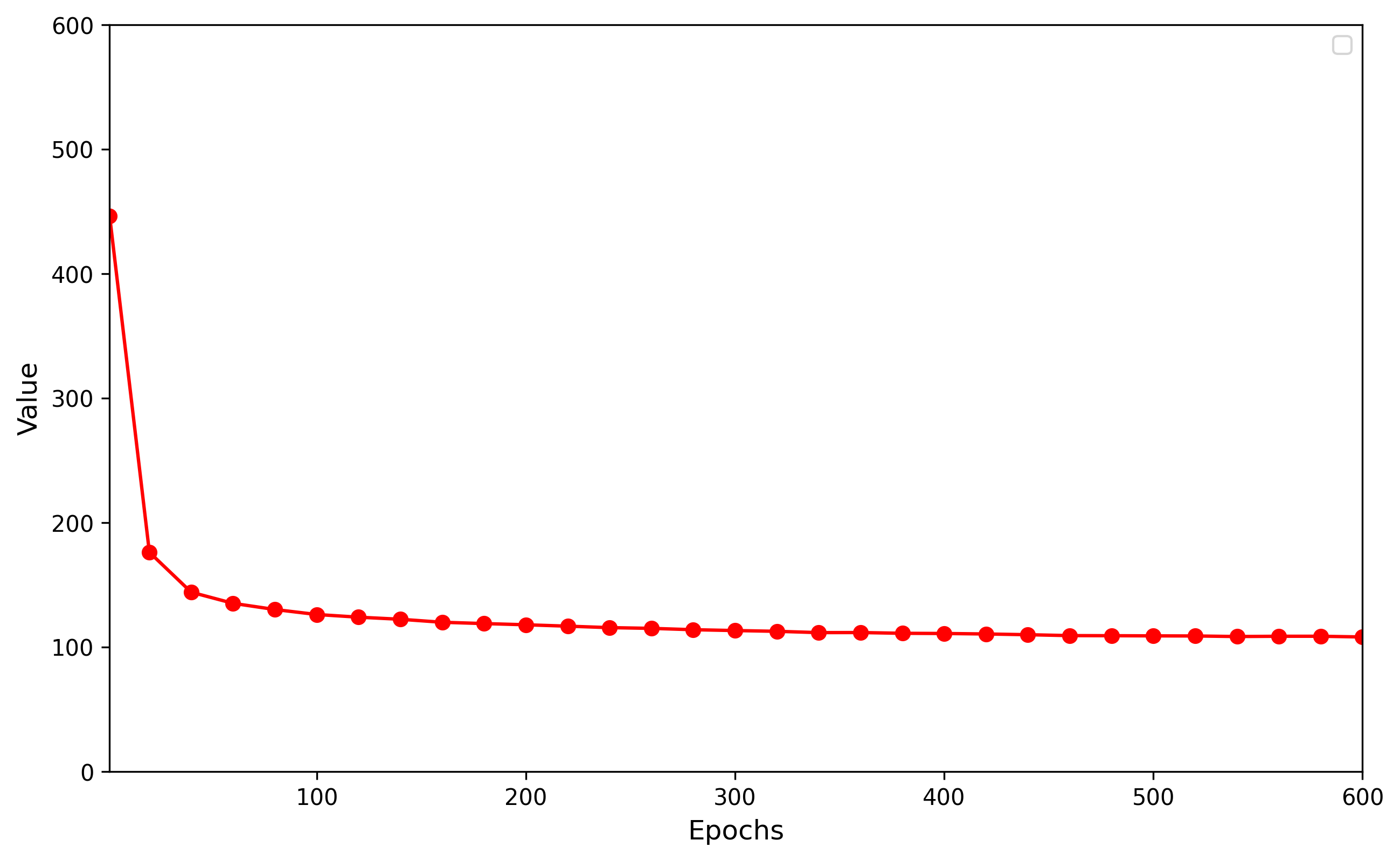}
    }
    \caption{Convergence results of Algorithm 1 on all the datasets.}
    \label{fig:convergence}
\end{figure*}

\subsubsection{Measuring Similarity Across Heterogeneous Views}
Different backbones exhibit different representation levels: transformer-based models (e.g., DINOv3, SigLIP 2 and CLIP ViT-L/14) typically produce global semantic representations, while convolutional models (e.g., ConvNeXt V2) capture more localized spatial features. We adopt centered kernel alignment (CKA) \cite{Kornblith2019SNNR} to measure the similarity between feature distributions produced by different pretrained model pairs. CKA summarizes the similarity between two feature distributions as a scalar value in $[0,1]$. A higher CKA score indicates that the feature distributions of a pair of sets tend to be identical. Table~\ref{tb:cka} shows the CKA scores for various pretrained model pairs. As expected, the CKA scores between transformer-based and convolutional architectures are relatively low, reflecting their fundamentally different feature distributions.

\subsubsection{Comparison of Training Times for Self-Supervised Learning}
To evaluate the training efficiency of the proposed SAGL method, we compare the computational costs of TURTLE, MSRL, and SAGL on the training sets of all eight datasets. For fair comparison, we report the computational cost of the competing methods that utilize two pretrained backbones. Fig.~\ref{fig:timecost} shows the training times of TURTLE, MSRL, and SAGL on eight datasets. MSRL runs faster than TURTLE and SAGL on most datasets. This is because it directly employs standard dense attention without considering intrinsic subspace structures embedded in the high-dimensional multiview data. As observed, SAGL achieves training efficiency compared to TURTLE on small and medium-scale datasets while maintaining consistently superior clustering performance. However, SAGL requires more training time on large-scale datasets such as ImageNet-1K. This is due to the structured sparse projection based on the $\alpha$-entmax transformation. This represents a reasonable computational trade-off for the significant performance gains achieved.

\subsubsection{Visualizations}
To evaluate the learned representations, we employ t-SNE \cite{Maaten2008TSNE} to visualize three levels of features on three representative datasets of varying scales: Pets, Caltech101, and Food101. Specifically, the three levels are: (1) the original features extracted from the two pretrained backbones, (2) the projected features after the linear transformation layers, and (3) the final representations constructed by SAGL via sparse information aggregation. Figs.~\ref{fig:sne:pets1}-\ref{fig:sne:food2} show the t-SNE visualizations across these three levels. We can observe that the learned representations exhibit significantly better class separability than those in the earlier stages.  These results indicate that SAGL produces more discriminative representations for improved clustering performance.

\subsubsection{Sparsity Analysis on Sparse Attention Graphs}
We first analyze the sparsity of the learned attention graphs during training. The sparsity ratio (SR) is defined as the number of nonzero elements in $\mathbf{A}^{(l)}$ divided by the total number of elements. Specifically, we examine the sparsity ratios of sparse attention graphs on the two representative datasets, Caltech101 and Food101, during training. Fig.~\ref{fig:sparsity:training:view} illustrates the evolution of sparsity ratios of the learned sparse attention graphs during training, where View 1 and View 2 correspond to features extracted from SigLIP 2 and DINOv3, respectively. We can observe that  the sparsity ratio drops rapidly in the first few epochs and fluctuates slightly during the remaining epochs.

To investigate the subspace-preserving property of the proposed SAGL model, we further examine subspace structures of the learned sparse attention graphs during testing. Fig.~\ref{fig:sparsity:testing:view} illustrates that the learned sparse attention graphs exhibit clear block-diagonal structures, where each diagonal block corresponds to a distinct latent semantic category. This block-diagonal pattern directly reflects the intrinsic subspace structures embedded in the high-dimensional multiview data. These results provide intuitive evidence that SAGL faithfully recovers the underlying subspace structures.

\subsubsection{Parameter Sensitivity Analysis}
By exploiting the sparse self-representation property of features, each representation is constructed as a sparse linear combination of spatially proximate neighbors. Consequently, the batch size plays an important role in determining the quality of the learned sparse attention graphs. To investigate the sensitivity of SAGL to batch size, we evaluate the clustering performance on three representative datasets of varying scales: Pets, SUN397, and Food101. Following the batch size selection strategy provided in Section ~\ref{sec:paramsetting}, the candidate batch sizes are set to $\{100, 500, 1{,}000\}$ for Pets and $\{2{,}000, 5{,}000, 10{,}000\}$  for SUN397 and Food101. The parameter $\gamma$ is selected from $\{5.0, 10.0, 20.0\}$, and $\beta$ is fixed at $1.0$. Figs.~\ref{fig:bs:pets}-\ref{fig:bs:food} show the clustering performance of SAGL under different batch sizes on the Pets, SUN397, and Food101 datasets, respectively. The clustering performance of SAGL remains relatively stable across different batch sizes, particularly on large-scale datasets such as SUN397 and Food101. This indicates that SAGL yields competitive clustering results across a wide range of batch sizes.

We further evaluate the sensitivity of the proposed SAGL model to hyperparameters $\gamma$ and $\beta$. Specifically, we investigate the clustering performance of SAGL under different combinations of hyperparameters $\gamma \in \{5.0, 10.0, 20.0\}$ and $\beta \in \{0.1, 0.2, 0.5, 1.0\}$. Figs.~\ref{fig:param:pets}-\ref{fig:param:food} illustrate the clustering performance of SAGL in terms of ACC, NMI, and ARI on the Pets, SUN397, and Food101 datasets. The proposed SAGL method performs steadily with relatively wide ranges of the parameters $\gamma$ and $\beta$. According to the sensitivity experiments, we empirically observe that setting $\gamma \in \{5.0, 10.0\}$ with $\beta = 1.0$ often yields satisfactory clustering results for SAGL.

\subsubsection{Convergence Analysis}
We empirically evaluate the convergence property of the proposed method across all eight datasets. Fig.~\ref{fig:convergence} shows the convergence curves of Algorithm 1, where the $x$-axis corresponds to iterations, and the $y$-axis represents the objective loss defined in Eq.~\eqref{eq:totalloss}. A positive constant is added to the $y$-axis values for better readability. As observed in the figures, the objective value drops dramatically within the first few dozen iterations and quickly converges to a steady state. These results empirically validate the rapid and stable convergence properties of the proposed method in practice.

%%%%%%%%%%%%%%%%%%%%%%%%%%%%%%%%%%%%%%%%%%%%%%%%%%%%%%%%%%%%

\end{document}